\title{Designing Ambiguity Sets for Distributionally Robust Optimization Using Structural Causal Optimal Transport}
\author{
    Ahmad-Reza Ehyaei \textsuperscript{\rm 1},
    Golnoosh Farnadi \textsuperscript{\rm 2}
    Samira Samadi \textsuperscript{\rm 1}
}
\newtheorem{definition}{Definition}
\newtheorem{theorem}{Theorem}
\newtheorem{lemma}{Lemma}
\newtheorem{proposition}{Proposition}
\newtheorem{assumption}{Assumption}
\newtheorem{corollary}{Corollary}
\newcommand{\bP}{\mathbb{P}}
\newcommand{\bQ}{\mathbb{Q}}
\newcommand{\bR}{\mathbb{R}}
\newcommand{\bE}{\mathbb{E}}
\newcommand{\bN}{\mathbb{N}}
\newcommand{\U}{\mathbf{U}}
\newcommand{\X}{\mathbf{X}}
\newcommand{\Y}{\mathbf{Y}}
\newcommand{\V}{\mathbf{V}}
\newcommand{\A}{\mathbf{A}}
\newcommand{\E}{\mathbf{E}}
\newcommand{\W}{\mathbf{W}}
\newcommand{\cX}{\mathcal{X}}
\newcommand{\cB}{\mathcal{B}}
\newcommand{\cP}{\mathcal{P}}
\newcommand{\cV}{\mathcal{V}}
\newcommand{\cY}{\mathcal{Y}}
\newcommand{\cA}{\mathcal{A}}
\newcommand{\cM}{\mathcal{M}}
\newcommand{\cU}{\mathcal{U}}
\newcommand{\cF}{\mathcal{F}}
\newcommand{\cG}{\mathcal{G}}
\newcommand{\Fi}{{\cF^0}}
\newcommand{\cC}{\mathcal{C}}
\newcommand{\tc}{\Tilde{c}}
\newcommand{\tP}{\Tilde{\bP}}
\newcommand{\tQ}{\Tilde{\bQ}}
\newcommand{\tpi}{\Tilde{\pi}}
\newcommand{\UTU}{\cU \times \cU}
\newcommand{\XTX}{\cX \times \cX}
\newcommand{\norm}[1]{\left\| #1 \right\|}
\newcommand{\expt}[1]{\underset{#1}{\mathbb{E}}}
\renewcommand{\d}{\mathrm{d}}
\newcommand{\pa}{\mathrm{pa}}
\newcommand{\ind}{{\perp\!\!\!\perp}}
\newcommand{\brak}[1]{\ensuremath{\left( #1 \right)}}
\newcommand{\abs}[1]{\ensuremath{ \left| #1 \right| }}
\newcommand{\probSpace}[1]{\mathcal{P}(#1)}
\newcommand{\pushforward}[1]{{#1}_{\#}}
\renewcommand{\d}{\mathrm{d}}
\newcommand{\diameter}
\newcommand{\supp}[1]{{\rm supp}(#1)}
\newcommand{\hPN}{\hat{\mathbb{P}}^N}
\newcommand{\hPT}{\hat{\mathbb{P}}^N_{\otimes}}
\newcommand{\odo}{\otimes \dots \otimes}
\newcommand{\gms}{g^{-1}_{\#}}
\newcommand{\marg}{\operatorname{Marg}}
\newcommand{\inv}{\operatorname{inv}}
\newcommand{\ggpush}{(g\times g)_{\#}}
\newcommand{\gpush}{g_{\#}}
\newcommand{\gmgmpush}{(g^{-1}\times g^{-1})_{\#}}
\newcommand{\gmpush}{g^{-1}_{\#}}
\newcommand{\gm}{g^{-1}}
\newcommand{\bdelta}{\bm{\delta}}
\newcommand{\pio}{\pi_{\mathsmaller\otimes}}
\newcommand{\bRP}{\bR^{\mathsmaller +}}
\newcommand{\BOIN}{\bigotimes_{i=1}^n}
\newcommand{\so}{\mathsmaller\otimes}
\newcommand{\KL}[2]{D_{\text{KL}}\left( #1\|#2 \right)}
\newcommand{\pis}{\pi^\ast}
\newcommand{\PX}{\cP(\cX)}
\newcommand{\PPQ}{\Pi(\bP,\bQ)}
\newcommand{\BPD}{\cB(\bP,\delta)}
\newcommand{\WPQ}{W(\bP,\bQ)}
\newcommand{\PPS}{\Pi(\bP, \ast)}
\newcommand{\PPD}{\Pi(\bP,\delta)}
\newcommand{\PFX}{\cP^{\mathsmaller \cF}(\cX)}
\newcommand{\PFPQ}{\Pi^{\mathsmaller \cF}(\bP,\bQ)}
\newcommand{\PFiPQ}{\Pi^{\mathsmaller \Fi}(\gmpush\bP,\gmpush\bQ)}
\newcommand{\PFPS}{\Pi^{\mathsmaller \cF}(\bP,\ast)}
\newcommand{\PFiPS}{\Pi^{\mathsmaller \Fi}(\gmpush\bP,\ast)}
\newcommand{\hF}{\hat{\cF}}
\newcommand{\FS}{\{f_i\}_{i=1}^n}
\newcommand{\WF}{W^{\mathsmaller \cF}}
\newcommand{\WFPQ}{W^{\mathsmaller \cF}(\bP,\bQ)}
\newcommand{\WFpPQ}{W^{\mathsmaller \cF}_{c}(\bP,\bQ)}
\newcommand{\BFPD}{\cB^{\mathsmaller \cF}(\bP,\delta)}
\newcommand{\PAPQ}{\Pi^{\mathsmaller \cA}(\bP,\bQ)}
\newcommand{\WAPQ}{W^{\mathsmaller \cA}(\bP,\bQ)}
\newcommand{\BAPD}{\cB^{\mathsmaller \cA}(\bP,\delta)}
\newcommand{\PGX}{\cP^{\mathsmaller \cG}(\cX)}
\newcommand{\PGPQ}{\Pi^{\mathsmaller \cG}(\bP,\bQ)}
\newcommand{\WGPQ}{W^{\mathsmaller \cG}(\bP,\bQ)}
\newcommand{\BGPD}{\cB^{\mathsmaller \cG}(\bP,\delta)}
\newcommand{\PGPS}{\Pi^{\mathsmaller \cG}(\bP,\ast)}
\newcommand{\WRPQ}{W^{\mathsmaller \cF_{\varepsilon}}(\bP,\bQ)}
\newcommand{\BRPD}{\cB^{\mathsmaller \cF_{\varepsilon}}(\bP,\delta)}
\newcommand{\PCX}{\cP^{\mathsmaller \cC}(\cX)}
\newcommand{\PCPQ}{\Pi^{\mathsmaller \cC}(\bP,\bQ)}
\newcommand{\WCPQ}{W^{\mathsmaller \cC}(\bP,\bQ)}
\begin{document}

\maketitle


\begin{abstract}
Distributionally robust optimization tackles out-of-sample issues like overfitting and distribution shifts by adopting an adversarial approach over a range of possible data distributions, known as the ambiguity set. To balance conservatism and accuracy, these sets must include realistic probability distributions by leveraging information from the nominal distribution.
Assuming that nominal distributions arise from a structural causal model with a directed acyclic graph $\mathcal{G}$ and structural equations, previous methods such as adapted and $\mathcal{G}$-causal optimal transport have only utilized causal graph information in designing ambiguity sets. 
In this work, we propose incorporating structural equations, which include causal graph information, to enhance ambiguity sets, resulting in more realistic distributions. We introduce structural causal optimal transport and its associated ambiguity set, demonstrating their advantages and connections to previous methods.
A key benefit of our approach is a relaxed version, where a regularization term replaces the complex causal constraints, enabling an efficient algorithm via difference-of-convex programming to solve structural causal optimal transport. We also show that when structural information is absent and must be estimated, our approach remains effective and provides finite sample guarantees.
Lastly, we address the radius of ambiguity sets, illustrating how our method overcomes the curse of dimensionality in optimal transport problems, achieving faster shrinkage with dimension-free order.
\end{abstract}

\section{Introduction}
\label{sec:intro}
Distributionally Robust Optimization (DRO) is a data-driven framework designed to address out-of-sample challenges, such as distribution overfitting and distributional shifts, by minimizing potential discrepancies between in-sample expected loss and out-of-sample expected loss. DRO achieves this by defining a distributional ambiguity set (DAS) that encompasses a range of possible data distributions around the estimated true probability, ensuring that the DAS contains the unknown true underlying distribution with certainty or at least with high confidence. To guarantee the model's performance over out-of-sample distributions, DRO employs an adversarial approach that minimizes the worst-case loss to identify the optimal model~\cite{blanchet2024distributionally}.
\begin{figure}
    \centering
    \includegraphics[width=0.6\linewidth]{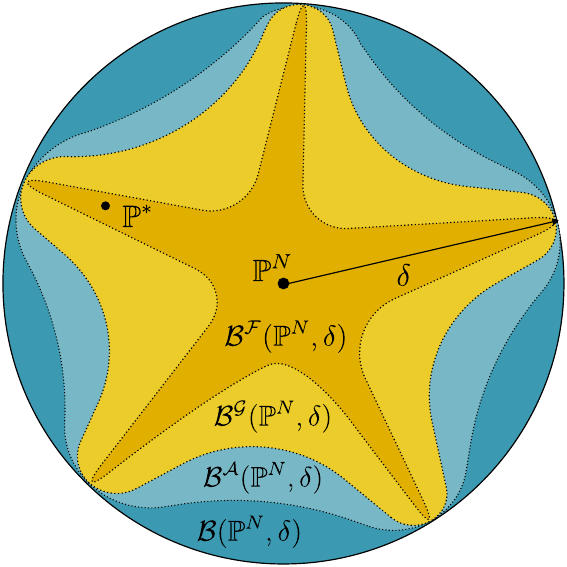}
    \caption{The underlying distribution $\bP^\ast$ originates from a causal structure. The dark blue region represents $\BPD$ the ambiguity set for the classical OT. The light blue region corresponds to $\BAPD$ the DAS for adapted optimal transport. The light yellow region denotes $\BGPD$ the DAS for $\cG$-causal OT, and the dark yellow region represents $\BFPD$ the DAS for our structural causal OT with diameter $\delta$.\\}
    \label{fig:ots}
\end{figure}
Ambiguity sets are typically categorized into two groups: discrepancy-based and moment-based. Discrepancy-based sets include distributions close to a nominal distribution according to a discrepancy measure, while moment-based sets include distributions whose moments satisfy certain properties \cite{rahimian2022frameworks}. Among discrepancy-based sets, the Wasserstein distance is often preferred, as it quantifies the discrepancy by the minimal transportation cost, ensuring computational tractability through strong dual formulations and convergence guarantees. Additionally, Wasserstein ambiguity sets are robust against outliers and can handle both continuous and discrete distributions, unlike the KL divergence ball, which is limited to discrete distributions~\cite{guo2017ambiguity}.

\newcolumntype{C}[1]{>{\centering\arraybackslash}m{#1}}
\begin{table*}[t]  
    \centering
\begin{tabular}{m{4.5cm} m{4.8cm} C{3.5cm} C{2.5cm}}
\textbf{Optimal Transport Variant} & \textbf{Information Utilization} & \textbf{Wasserstein Distance} & \textbf{Ambiguity Set} \\ \hline
\rule{0pt}{3ex}
Classical~\cite{villani2009optimal,peyre2017computational,ambrosio2021lectures}  & No Constraints & $\WPQ$ & $\BPD$ \\ 
Adopted~\cite{backhoff2017causal, lassalle2018causal, xu2020cot} & Preserves Causal Order & $\WAPQ$ & $\BAPD$ \\ 
$\cG$-Causal~\cite{cheridito2023optimal} & Preserves Causal Graph & $\WGPQ$ & $\BGPD$ \\ \hline\rule{0pt}{3ex}
\textbf{Structural Causal} & Preserves Structural Equations & $\WFPQ$ & $\BFPD$ \\ 
\textbf{Relaxed Structural Causal} & Partially Preserves Structural Equations with Penalty Term & $\WRPQ$ & $\BRPD$ \\ \hline
\end{tabular}
\caption{
Comparison of optimal transport variants using nominal distribution information to design ambiguity sets, with corresponding notations for Wasserstein distance and ambiguity set diameter $\delta$ For two probability distributions $\bP,\bQ \in \PX$.\\
}
\label{tab:varaints}
\end{table*}

\noindent In designing the DAS two points need to be considered~\cite{rahimian2022frameworks}:
\begin{itemize}
\item[\textbf{P1.}]  What distributional information should the DAS include?
\item[\textbf{P2.}]  How large should the radius of the DAS?
\end{itemize}
To understand the significance of \textbf{P1}, consider the corresponding DAS of classical Wasserstein DRO. It includes all probability distributions within a specified Wasserstein distance from the empirical distribution, usually based on a metric like the $\ell_p$ norm. While this approach works well for unstructured data, it fails for data with special structures, such as temporal patterns or causal relationships. In such cases, the Wasserstein DAS should be refined to exclude unrealistic scenarios. Without this refinement, models become overly conservative, reducing accuracy.

Previous works address this issue by introducing optimal transport (OT) with additional constraints using partial distribution information. \textit{Adapted OT} emphasizes the temporal structure of features~\cite{backhoff2017causal}. In causal structures, features are arranged according to a directed acyclic graph (DAG) $\cG$. While adapted OT preserves the feature hierarchy, it fails to capture causal models. To improve this, \cite{cheridito2023optimal} introduce Wasserstein distances for causal models that respect the DAG $\cG$. The resulting $\cG$-causal OT’s DAS is a subset of the adapted DAS, considering all structural causal models with graph $\cG$ as candidates. Although this set is narrower than the classical DAS, it still includes unrealistic scenarios by overlooking functional relationships between features. For instance, if a variable $\X_i$ weakly affects its descendant $\X_j$, this weak relationship is not considered in the $\cG$-causal OT.

\paragraph{Our Contribution.}
To tackle this challenge, we propose a novel variant of OT that takes into account the structural equations of causal models. This approach enables us to refine the DAS by limiting it to probability distributions derived from structural causal models with identical structural equations but potentially varying noise variables. This concept facilitates the connection between endogenous and exogenous spaces in constructing the DAS. The resulting duality allows us to define the DAS within the exogenous space, where variable independence can be assumed, simplifying the design process. We can subsequently map back to the feature space to craft our preferred DAS.

Another advantage of our approach is the flexibility to define a relaxed version by replacing causal constraints with an entropy regularization term. This allows the implementation of an efficient algorithm combining difference-of-convex (DC) programming and Sinkhorn's method to solve structural causal OT, a capability not available in $\cG$-causal OT.

Since the design of the ambiguity set relies on structural equations, we demonstrate that, in real-world applications where the structural equations are estimated, the solution of the structural causal OT converges to the true solution. This offers a convergence guarantee for finite sample scenarios.

Moreover, we address \textbf{P2} by determining the optimal radius necessary to ensure that the true probability distribution is included within the DAS. Our approach mitigates the curse of dimensionality commonly encountered in data-driven ambiguity set design for many OT problems. 
In the numerical study, we demonstrate the impact of our method and its properties. Additional theoretical results, algorithm descriptions, and numerical outcomes are provided in the appendix, while proofs of our assertions are included in the supplementary material.
\subsection{Related Work}
There are various methods that address how to design DAS, how to add constraints to achieve desirable OT, and discuss the magnitude of DAS. In Tab.~\ref{tab:varaints} and Fig.~\ref{fig:ots}, the main methods, along with the information used in designing DAS and their relationships, are summarized. These methods include:

\noindent \textbf{Discrepancy-based OT.}
Various discrepancy-based methods exist, such as $\phi$-divergences~\cite{love2015phi, lam2016robust, duchi2021learning} and goodness-of-fit tests~\cite{bertsimas2018data}. However, we focus on the Wasserstein OT due to its advantages over other discrepancies~\cite{gao2017wasserstein, mohajerin2018data, blanchet2019robust}.

\noindent \textbf{Temporal OT.}
Temporal OT~\cite{backhoff2017causal, lassalle2018causal, xu2020cot, bartl2021wasserstein, backhoff2022stability} mainly addresses OT for stochastic processes by preserving temporal structure. However, this approach does not encompass all information about the distribution derived from the causal structure.
In~\cite{eckstein2024computational}, Sinkhorn's algorithms for adapted OT are proposed by introducing a relaxed version. 

\noindent \textbf{$\cG$-Causal OT.}
In \cite{cheridito2023optimal}, the authors propose $\cG$-causal OT to preserve the causal graph $\cG$ in the OT. However, this work does not provide a computational method for estimating $\cG$-causal solutions.

\noindent \textbf{Structured ambiguity Set.}
In the case where all features are independent, our work relates to factored multi-marginal OT~\cite{tran2021factored} and CO-OT~\cite{titouan2020co}. In this scenario, the ambiguity set also connects to the ambiguity hyperrectangle~\cite{chaouach2022uncertain}.

\noindent \textbf{Diameter of Ambiguity Set.}
For the estimation of $W(\bP,\bP^N)$ for finite samples, various works exist \cite{bolley2007quantitative, fournier2015rate, dedecker2019behavior}. The work \cite{WeedShaprp2019} provides the sharp order. Additionally, \cite{chaouach2022uncertain, chaouach2023comparing} discuss the diameter of the ambiguity set under the independent condition of features.
\section{Preliminary Knowledge}
\label{sec:background}

\noindent\textbf{Data Model.}
Let $\X = (\X_1, \dots, \X_n) \in \cX_1 \times \dots \times \cX_n= \cX$ denote the $n$-dimensional features, the set of $N$ observations $\{x^i\}_{i = 1}^{N}$ used to construct the empirical distribution $\bP^N$, defined as $\bP^N \coloneqq \frac{1}{N} \sum_{i=1}^{N} \delta_{x^i}$, where $\delta_{x}$ is the Dirac delta function. 

Assume the feature space is modeled by a \textit{structural causal model (SCM)} $\cM = \langle \mathcal{G}, \X,\cF, \mathbf{U}, \bP_\U \rangle$~\cite{pearl2009causality}. This includes \textbf{structural equations} $\mathcal{F} = \{f_1, f_2, \dots, f_n\}$  where $\{\X_i := f_i(\X_{\text{Pa}(i)}, \mathbf{U}_i)\}_{i=1}^{n}$, which describe the causal relationships between an endogenous variable $\X_i$, its causal predecessors $\X_{\text{Pa}(i)}$, and an exogenous variable $\mathbf{U}_i$ representing unobservable factors in space $\cU_i$, and $\U = (\U_1,\dots, \U_n)$ lives on the whole exogenous space is $\cU = \cU_1 \times \dots \times \cU_n$. The causal relations are represented by a directed acyclic \textbf{causal graph} $\mathcal{G}$. 

A DAG imposes a \textbf{causal order} (topological order), refers to the sequence in which variables can be arranged such that each variable is only affected by the variables preceding it in the order~\cite{pearl2009causality,peters2017elements}.
Causal order Only specifies the sequence of variables, indicating which variables can potentially affect others, but not the detailed nature of those effects.

By \textit{causal sufficiency} and \textit{no hidden confounders}, we can suppose exogenous variables to be mutually independent, allowing $\bP_{\U}$ to be written as $\prod_{i=1}^{n}\bP_{\U_i}$~\cite{peters2017elements}.

Since perturbations in SCMs are utilized by counterfactuals, it is necessary for SCMs to be counterfactually identifiable to ensure that counterfactuals can be learned from sample data. One prominent family of counterfactually identifiable models is the \emph{Bijective Generation Mechanism (BGM)}~\cite{nasr2023counterfactual}. 
In BGM, the structural equations $\cF$ have a \textbf{reduced-form mapping} $g: \cU \to \cV$, where $\X$ can be expressed as a bijective function of the exogenous space, i.e., $\X = g(\U)$. This bijective ensures no information is lost from exogenous to endogenous variables. 

An important example of BGM is the \textbf{Additive Noise Models (ANM)}, where structural equations are given as:
\begin{align*}
\{\X_i \coloneqq & f_i(\X_{\pa(i)}) + \U_i \}_{i=1}^{n} 
 \implies  \nonumber\\ &  \U = (I - f)(\X) \implies  
 \quad \X = (I - f)^{-1}(\U) \label{eq:ANM}
\end{align*}
As seen in the above equation, the reduced-form mapping is $g = (I - f)^{-1}$, where $I(x) = x$ is the identity function. 
ANM is often preferred over general SCMs due to its simplicity, interpretability, and effective handling of noise, making it ideal for fields such as statistics, causal inference, signal processing, image processing, economics, and social sciences, where additive noise is prevalent. In this work, we focus on the ANM, but our results are extendable to BGMs.

\paragraph{Structured Ambiguity Set.}
In the variants of Wasserstein OT, the ambiguity set is typically defined through coupling.
Let $\bP,\bQ \in \PX$ be probability distributions; the distribution $\pi \in \cP(\XTX)$ is called a \textit{coupling} or \textbf{plan} if for all measurable subsets $A,B \subset \cX$, $\pi(A \times \cX) = \bP(A)$ and $\pi(\cX \times B) = \bQ(B)$. Let $\PPQ$ represent the set of all couplings between $\bP$, and $\bQ$. Each $\pi \in \PPQ$ shows how $\bP$ transforms into $\bQ$. All plans starting from $\bP$ are denoted by $\Pi(\bP, \ast) \coloneqq \bigcup_{\bQ \in \PX} \PPQ$.

The structured DAS for $\bP$ is a subset of plans $\Pi(\bP, \ast)$ that meet specific constraints $\cC = \{c_k\}$, typically defined by parameters $\bdelta = \{\delta_k\}$:
\begin{equation*}
\Pi^\cC(\bP,\bdelta) \coloneqq \{\pi \in \PPS :   c_k(\pi,\delta_k), \forall c_k \in \cC \}    
\end{equation*}
The desirable property of $\PCPQ$ is its closeness under the weak topology in probability space, which guarantees the existence of solutions in OT theorems.
The corresponding ambiguity set for $\Pi^\cC(\bP,\bdelta)$ is derived by finding the marginal distribution of each plan on the second coordinate ($\marg_2$):
\begin{equation}
\label{eq:BCPD}
 \cB^\cC(\bP, \bdelta) = \{\marg_2(\pi) : \pi \in \Pi^\cC(\bP,\bdelta)\}   .
\end{equation}
In DRO, the worst-case loss is obtained by taking the expectation of a given function $\psi: \cX \rightarrow \bR$ over the ambiguity set. The following alternative formulation, by definition, often facilitates computations.
\begin{equation}
\label{eq:worstloss}
    \sup_{\bQ \in \cB^\cC(\bP,\bdelta)} \bigg\{\expt{y\sim \bQ}[\psi(y)]\bigg\} = \sup_{\pi \in \Pi^\cC(\bP,\bdelta)} \left\{\expt{(.,y)\sim \pi}[\psi(y)]\right\}
\end{equation}

Similar to constrained plans, if there are constraints on the space of probability measures, the corresponding subset is denoted by $\PCX$.
Let $c(\cdot,\cdot): \cX \times \cX \rightarrow [0, \infty]$ be a transportation cost function that is non-negative and upper-semi-continuous. For each $p \in [1, \infty]$, the set of $p$-integrable distributions $\cP^p(\cX)$ with respect to $c$ is defined as:
\begin{align*} 
\left\{ \bP \in \PX \ \middle|\ \int_{\cX} c(x, x_0)^p \bP(\d x) < \infty \ \text{for some } x_0 \in \cX \right\}
\end{align*}

The $\cC$-Wasserstein distance between $\bP \in \PCX \cap \cP^p(\cX)$ and $\bQ \in \PX \cap \cP^p(\cX)$ is defined by finding the lowest-cost constrained transport plan:
\begin{equation}
\label{eq:WCPQ}
\WCPQ \coloneqq \left(\inf_{\pi \in \PCPQ} \left\{\expt{(x,y)\sim\pi}[c^p(x,y)]\right\}\right)^{\frac{1}{p}} 
\end{equation}
In cases where $\PCPQ = \emptyset$, the $\WCPQ$ is set to $\infty$.

\paragraph{Wasserstein Ambiguity Set.}
In classical OT~\cite{villani2009optimal,peyre2017computational}, transport plans and distributions are unconstrained, except for the transportation cost. For two probability measures $\bP, \bQ \in \PX$, the Wasserstein distance $\WPQ$ represents the optimal cost of transporting one distribution to the other. The corresponding ambiguity set, defined by a perturbation radius $\delta \in \bRP$, is given by:
\begin{equation*}
\PPD = \{\pi \in \PPS: \expt{(x,y)\sim\pi}[c(x,y)] \leq \delta \}
\end{equation*}
By computing the second marginal distribution on $\PPD$, the ambiguity set can be expressed as:
\begin{equation*}
\BPD = \{\bQ \in \PX: \WPQ \leq \delta\}.
\end{equation*}

\paragraph{Adopted Ambiguity Set.}
Adapted OT~\cite{backhoff2017causal} was introduced to address the limitations of classical OT in preserving temporal constraints for two stochastic processes. Given two discrete-time stochastic processes $\bP$ and $\bQ$ with indices $i \in [n]$, the coupling $\pi$ must respect the temporal structure. Consequently, the plans should satisfy the temporal conditional distribution constraints as follows:
\begin{align*}
    &\PAPQ = \big\{\pi \in \PPQ: \ \pi-\text{almost sure}  \ \forall x,y \in \cX, \ \\&  \ 
    \pi(\d y_i \mid \d x_1, \dots, \d x_n) = \pi(\d y_i \mid \d x_1, \dots, \d x_i), \forall \ i \in [n]\big\}
\end{align*}
The adapted ambiguity set and Wasserstein distance are denoted by $\BAPD$ and $\WAPQ$, respectively, similar to Eq.~\ref{eq:BCPD} and Eq.~\ref{eq:WCPQ}.

\paragraph{$\cG$-Compatible Ambiguity Set.}
Since adapted OT cannot preserve complex structures like causal graph dependencies, \cite{cheridito2023optimal} proposed the $\cG$-causal OT framework. A probability $\cP \in \PX$ is called compatible with a sorted causal graph $\cG$ if there exists a random variable $\X \sim \bP$, along with measurable functions $f_i : \cX_{\pa(i)} \times \bR^{d_i} \rightarrow \cX_i$ for $i = 1, \dots, n$, and independent random variables $\U_i$ for all $i \in [n]$ such that:
\begin{align*}
    \X_i = f_i(\X_{\pa(i)}, \U_i) \quad \text{for all } i = 1, \dots, n.
\end{align*}
The set of $\cG$-compatible measures is denoted by $\PGX$. 
A $\cG$-compatible plans, which capture $\cG$ graph, is defined as:
\begin{align}
    &\PGPQ = \big\{\pi \in \PPQ: \ \forall \ i \in [n], \ \pi\text{-almost all } (x,y), 
    \nonumber\\&
    \pi(\d x_1, \d y_1, \dots, \d x_n, \d y_n) = \bigotimes_{i=1}^n \pi(\d x_i, \d y_i \mid x_{\pa(i)}, y_{\pa(i)}) 
    \nonumber\\&
    \text{ and} \quad \pi(\d x_i \mid x_{\pa(i)}, y_{\pa(i)}) = \bP(\d x_i \mid x_{\pa(i)})  \big\}.
    \label{eq:gcom}
\end{align}
The corresponding DAS and Wasserstein metric are denoted by $\BGPD$ and $\WGPQ$.
Since preserving the causal graph inherently includes preserving the causal order of features, $\cG$-compatible plans are a subset of adapted plans.

By reviewing the main definitions and notations, we are prepared to present our method, which addresses the limitations of previous methods in fully capturing the information of causal models.
\section{Structural Causal Ambiguity Sets}
\label{sec:ambiguity}
The adopted $\cG$-causal ambiguity set retains only the causal graph structure without requiring the full details of the causal model. For example, if the nominal distribution indicates a weak relationship between a parent $\X_i$ and its child $\X_j$ (e.g., $\X_i = \alpha \X_j$ with $\alpha \approx 0$), the $\cG$-causal set neglects this weak dependency. To address this limitation, we propose a new OT variant that incorporates structural equations from the SCM, thereby capturing these dependencies and utilizing more information than the causal graph $\cG$ alone.

Before presenting our method, we highlight a natural assumption. Let $c$ be the cost function on the feature space. Given an invertible SCM, there exists a bijective map $g$ such that $x = g(u)$. We define the push-forward cost $\tilde{c} = c \circ (g \times g)$ on the exogenous space as $\tilde{c}(u,u') = c(g(u), g(u'))$. Since the variables in the exogenous space are mutually independent, each $\cU_i \times \cU_i$ can have its own cost function $\tilde{c}_i$, allowing $\tilde{c}$ to be decomposed into components. As a result, $\tilde{c}$ is expected to have a simpler form. In summary, we make the following assumptions.
\begin{assumption}
\label{asm:space}
\begin{enumerate}[label=(\roman*)]
    \item $\cM$ is a ANM, with structural equations $\cF = \{f_i\}$ and $g$ is bijective reduced-form mapping.
    \item The random variables $\U_i$ are independent and take values in the $\cU_i \subseteq \bR^{d_i}$ that is equipped by the norm $\tc_i$.
    \item The push-forward of the cost function $c$ to the exogenous space has the form: $$\tc(u,u') = \left(\sum_{i=1}^n \tc_i(u_i, u'_i)^p\right)^{\frac{1}{p}}, \forall u_i, u'_i \in \cU_i \text{ and } p\geq 1.$$
\end{enumerate}
\end{assumption}
Now, we are prepared to present our constraints in both probability space and plans.
\begin{definition}[\textbf{$\cF$-Compatible Measures}]
\label{def:Fcomp}
A measure is compatible with the structural equations $\cF$ if its $g^{-1}$ push-forward distribution over the exogenous space is factored,
\begin{equation*}
\label{def:fcomp}
   \PFX = \left\{\bP \in \PX: g^{-1}_{\#}\bP = \BOIN \tP_i, \quad \tP_i \in \cP(\cU_i)\right\},
\end{equation*}
where $\bigotimes$ means the product of measures.
\end{definition}
Another useful definition of $\PFX$ is that it includes all $g$-pushforward distributions of $\BOIN \tP_i$ where $\tP_i \in \cP(\cU_i)$. This duality facilitates conversion between spaces, simplifying our results. The following lemma outlines the properties of $\cF$-compatible measures.
\begin{proposition}
\label{prp:SCP}
Let $\bP \in \PFX$, then:
\begin{itemize}
\item[(i)] There exists a random variable $\X \sim \bP$ along with independent random variables $\U_1, \dots, \U_n$ in the space $\cU_i$ such that,
\begin{equation*}
\X_i = f_i(\X_{\pa(i)}, \U_i) \quad \text{for all $i =1 , \dots, n$}.
\end{equation*}

\item[(ii)] 
The measure $\bP$ can be decomposed as
\begin{equation*}	
\bP(\d x_1, \dots, \d x_n) = \bigotimes_{i=1}^n \bP \brak{\d x_i \mid x_{\pa(i)}},
\end{equation*}
which means variables are conditionally independent of their non-descendants given their parents.
\end{itemize}
\end{proposition}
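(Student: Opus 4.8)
The plan is to prove (i) and (ii) by exploiting the duality built into Definition~\ref{def:Fcomp}: a measure $\bP \in \PFX$ is, by definition, exactly the $g$-pushforward of a product measure $\bigotimes_{i=1}^n \tP_i$ on the exogenous space. So I would start by fixing independent random variables $\U_1,\dots,\U_n$ with $\U_i \sim \tP_i$ (constructed on a common probability space, e.g. as the coordinate projections on $\prod_i \cU_i$ equipped with $\bigotimes_i \tP_i$), set $\U = (\U_1,\dots,\U_n)$, and define $\X \coloneqq g(\U)$. Then $\X \sim g_{\#}(\bigotimes_i \tP_i) = \bP$ by the second (dual) characterization of $\PFX$, and the $\U_i$ are mutually independent by construction. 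This already secures the ``independent random variables'' clause of (i); the only remaining content of (i) is that this particular $\X$ satisfies the structural equations.

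For that, I would unpack what ``reduced-form mapping of $\cF$'' means under Assumption~\ref{asm:space}: since $\cM$ is an ANM, $g = (I-f)^{-1}$, equivalently $g$ is the unique solution of $g(u) = f(g(u)) + u$ read coordinatewise, i.e. $g(u)_i = f_i\big((g(u))_{\pa(i)}, u_i\big)$ for every $i$ (here I use that $f_i$ depends on $\X$ only through $\X_{\pa(i)}$ and that the ANM noise enters additively, so the ANM form $\X_i = f_i(\X_{\pa(i)}) + \U_i$ is subsumed by the BGM form $\X_i = f_i(\X_{\pa(i)},\U_i)$ with $f_i(\cdot,u_i) = f_i(\cdot) + u_i$). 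Evaluating at $u = \U(\omega)$ and using $\X = g(\U)$ gives $\X_i = f_i(\X_{\pa(i)}, \U_i)$ pointwise, which is (i). I should be a little careful to state this at the level of the reduced-form map $g$ rather than re-deriving acyclicity; the acyclicity of $\cG$ is what guarantees $g$ is well defined as a genuine function, and that is already folded into the BGM/ANM hypothesis, so I can invoke it rather than prove it.

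For (ii), I would derive the recursive factorization from (i) by a standard topological-order argument. Relabel so that $1,2,\dots,n$ is a causal order for $\cG$, so that $\pa(i) \subseteq \{1,\dots,i-1\}$. Then $\X_i = f_i(\X_{\pa(i)},\U_i)$ shows $\X_i$ is a deterministic function of $(\X_1,\dots,\X_{i-1})$ and $\U_i$, while $\U_i$ is independent of $(\U_1,\dots,\U_{i-1})$ and hence of $(\X_1,\dots,\X_{i-1})$ (which is $\sigma(\U_1,\dots,\U_{i-1})$-measurable). Therefore $\bP(\d x_i \mid x_1,\dots,x_{i-1})$ depends only on $x_{\pa(i)}$ and equals $\bP(\d x_i \mid x_{\pa(i)})$; chaining these conditionals via the chain rule for disintegrations yields $\bP(\d x_1,\dots,\d x_n) = \bigotimes_{i=1}^n \bP(\d x_i \mid x_{\pa(i)})$, and the conditional-independence-from-nondescendants statement is the usual local Markov property read off from this factorization. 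For full rigor one invokes existence and essential uniqueness of regular conditional probabilities, which is unproblematic since the $\cX_i$ are (implicitly) standard Borel.

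The main obstacle I anticipate is bookkeeping rather than mathematics: being precise about (a) the measurability/standard-Borel hypotheses needed for the disintegrations in (ii) to exist and be unique a.s., and (b) the exact sense in which the ANM $g$ ``solves'' $g_i = f_i\circ(g_{\pa(i)},\Id_i)$ — i.e. making sure the identity $\X = g(\U) \iff \{\X_i = f_i(\X_{\pa(i)},\U_i)\}_i$ is an equivalence, not just one implication, so that the random variable produced in (i) genuinely lies in the class described. Neither is deep, but both should be stated carefully; everything else is a direct translation along $g$ of the product structure assumed in Definition~\ref{def:Fcomp}.
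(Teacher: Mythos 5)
Your proof is correct and takes essentially the same approach as the paper's: in (i) you build $\X$ from independent $\U_i \sim \tP_i$ and identify its law with $\bP$ via bijectivity of $g$ (you define $\X = g(\U)$ and read off the structural equations from the defining relation $g(u)_i = f_i((g(u))_{\pa(i)}) + u_i$, whereas the paper defines $\X_i$ inductively via the $f_i$ and then identifies $\X$ with $g(\U)$ --- two presentations of the same construction). In (ii) both arguments apply the disintegration theorem along a topological order and use $\U_i \ind \X_{1:i-1}$ together with $\X_i = f_i(\X_{\pa(i)},\U_i)$ to collapse the conditioning set to $x_{\pa(i)}$.
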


We are now ready to address \textbf{P1} in our method, which determines the specific information that should be considered in the design of the ambiguity set.

\begin{definition}[\textbf{$\cF$-Compatible Plans}]
\label{def:fplan}
The plan $\pi$ is called $\cF$-compatible if its pushforward map $\tpi = \gmgmpush \pi$ under $\gm \times \gm$ is factored in the exogenous space as follows:
\begin{align*}
    &\PFPQ = \bigg\{\pi \in \PPQ: \text{for } \tpi\text{-almost sure and all } 
    \\&
     (u,w) \in \cU \times \cU, \gmgmpush\pi(\d u, \d w) = \bigotimes_{i=1}^n \tpi_i(\d u_i, \d w_i),
    \\&
    \text{such that } \tpi_i \in \Pi\left(\marg_i(\gms \bP), \marg_i(\gms \bQ)\right)
\bigg\}.
\end{align*}
where $\marg_i$ is the marginal distribution over the coordinate $\cX_i$.  
\end{definition}

The intuition behind this definition is straightforward: we consider the plans $\pi$ whose push-forward in the exogenous space decomposes onto $\cU_i \times \cU_i$, as we have mutually independent noise by the assumption. Now we investigate the distributional properties implied by the definition of $\cF$-compatible plans.

\begin{proposition}
\label{prp:opx_prop}
Let $\bP, \bQ \in \PFX$ and $\pi \in \PFPQ$, then we have:

(i) if $(X,Y)\sim \pi$ then there exists measurable functions
\begin{align*}
h_i \colon \cX_{i} \times \cX_{\pa(i)} \times \cX_{\pa(i)} \times \mathbb{R}^{d_i} \rightarrow \cX_i,
\quad i \in [n],
\end{align*}
and $\bR$-valued random variables $\V_1, \dots, \V_n$ 
such that $\X,\V_1, \dots, \V_n$ are mutually independent and
\begin{equation*}
\Y_i = h_i(\X_{i}, \X_{\pa(i)}, \Y_{\pa(i)}, \V_i) \quad \mbox{for all } i \in [n].
\end{equation*}

(ii) for all $i = 1, \dots, n$ and $\pi$-almost all $(x, y) \in \cX \times \cY$ 
\begin{align*}
&\pi(\d x_1, \d y_1, \dots, \d x_n, \d y_n) = \bigotimes_{i=1}^n \pi(\d x_i, \d y_i \mid 
x_{\pa(i)}, y_{\pa(i)}) 
\\&
\text{and} \quad \pi(\d x_i, \d y_i \mid x_{\pa(i)}, y_{\pa(i)}) \in \Pi_i, \quad \forall i \in [n],
\end{align*}
where $\Pi_i = \Pi\left(\bP(\d x_i \mid x_{\pa(i)}), \bQ(\d y_i \mid y_{\pa(i)})\right)$.
\end{proposition}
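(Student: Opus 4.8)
The plan is to move the whole problem into the exogenous space, where by definition $\tpi \coloneqq \gmgmpush\pi$ factors as $\bigotimes_{i=1}^n \tpi_i$ with $\tpi_i \in \Pi(\marg_i(\gmpush\bP),\marg_i(\gmpush\bQ))$, to exploit that in an ANM the reduced-form map $g = (I-f)^{-1}$ is triangular with respect to a topological order of $\cG$ (so $\X_{\pa(i)}$, resp.\ $\Y_{\pa(i)}$, is a function of the noise coordinates at the strict ancestors of $i$ only), and then to read off the two claimed representations. Throughout, all state spaces are assumed Borel, so regular conditional distributions and the noise-outsourcing (conditional Skorokhod) lemma are available.

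\emph{Part (i).} Set $(\U,\W) \coloneqq (\gm(\X),\gm(\Y))$, so $(\U,\W) \sim \tpi = \bigotimes_{i=1}^n \tpi_i$; in particular the blocks $(\U_i,\W_i)$ are mutually independent across $i$, and the conditional law of $\W$ given $\U$ is the product $\bigotimes_{i=1}^n \tpi_i(\d w_i \mid u_i)$, whose $i$-th factor depends on $\U$ only through $\U_i$. Apply the randomization/noise-outsourcing lemma to each kernel $u_i \mapsto \tpi_i(\d w_i\mid u_i)$ to obtain Borel maps $\kappa_i$ and, on an enlarged probability space carrying independent uniform randomizers $\V_1,\dots,\V_n$ that are jointly independent of $\U$, a version of $(\X,\Y)$ with $\W_i = \kappa_i(\U_i,\V_i)$ almost surely (it is precisely the product form of $\tpi$ that lets the $\V_i$ be taken mutually independent and independent of all of $\U$, with $\W_i$ a function of $(\U_i,\V_i)$ alone). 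Since $\cM$ is an ANM, $\U_i = \X_i - f_i(\X_{\pa(i)})$ and $\Y_i = f_i(\Y_{\pa(i)}) + \W_i$, so
\begin{equation*}
h_i(x_i,x_{\pa(i)},y_{\pa(i)},v_i) \coloneqq f_i(y_{\pa(i)}) + \kappa_i\!\bigl(x_i - f_i(x_{\pa(i)}),\, v_i\bigr)
\end{equation*}
is a measurable map with $\Y_i = h_i(\X_i,\X_{\pa(i)},\Y_{\pa(i)},\V_i)$. Finally $\X = g(\U)$ is a deterministic function of $\U$, which is independent of $(\V_1,\dots,\V_n)$, and the $\V_i$ are mutually independent, so $\X,\V_1,\dots,\V_n$ are mutually independent; composing with a Borel isomorphism turns the uniform $\V_i$ into $\bR^{d_i}$-valued ones if desired.

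\emph{Part (ii).} Because $g$ is triangular, $(\X_{\pa(i)},\Y_{\pa(i)})$ is a measurable function of $\{(\U_j,\W_j): j \ne i\}$ (only strict ancestors of $i$ enter, by acyclicity), which by the product form of $\tpi$ is independent of $(\U_i,\W_i)$. Writing the ANM on the doubled system, $(\X_i,\Y_i) = \bigl(f_i(\X_{\pa(i)}),\, f_i(\Y_{\pa(i)})\bigr) + (\U_i,\W_i)$, exhibits $\pi$ as an $\cF$-compatible measure for the product structural equations on $\cG$ with factored exogenous law $\bigotimes_{i=1}^n \tpi_i$ (Definition \ref{def:Fcomp}); Proposition \ref{prp:SCP}(ii) then yields
\begin{equation*}
\pi(\d x_1,\d y_1,\dots,\d x_n,\d y_n) = \bigotimes_{i=1}^n \pi(\d x_i,\d y_i \mid x_{\pa(i)},y_{\pa(i)}).
\end{equation*}
For the conditional marginals, the independence $(\U_i,\W_i) \perp (\X_{\pa(i)},\Y_{\pa(i)})$ just noted shows that the law of $\X_i = f_i(\X_{\pa(i)}) + \U_i$ given $(\X_{\pa(i)},\Y_{\pa(i)}) = (x_{\pa(i)},y_{\pa(i)})$ depends only on $x_{\pa(i)}$ and, by Proposition \ref{prp:SCP}(i) applied to $\bP$, equals $\bP(\d x_i \mid x_{\pa(i)})$; symmetrically the corresponding conditional law of $\Y_i = f_i(\Y_{\pa(i)}) + \W_i$ equals $\bQ(\d y_i \mid y_{\pa(i)})$. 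Hence $\pi(\d x_i,\d y_i \mid x_{\pa(i)},y_{\pa(i)}) \in \Pi\bigl(\bP(\d x_i\mid x_{\pa(i)}),\bQ(\d y_i\mid y_{\pa(i)})\bigr) = \Pi_i$.

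\emph{Main obstacle.} The delicate point is the bookkeeping in Part (i): producing outsourced noises $\V_i$ that are simultaneously mutually independent and independent of all of $\X$ while keeping $(\X,\Y)\sim\pi$ and making each $\W_i$ a function of $(\U_i,\V_i)$ only — this is exactly where the factorization $\tpi = \bigotimes_i \tpi_i$ (not merely the marginal constraints on the $\tpi_i$) is indispensable, and one must also check that all maps produced compose measurably with $g$, $\gm$ and the $f_i$. In Part (ii) the only subtlety is the acyclicity argument guaranteeing that the strict ancestors of $i$ exclude $i$ itself, so that $(\X_{\pa(i)},\Y_{\pa(i)})$ genuinely avoids the noise block $(\U_i,\W_i)$.
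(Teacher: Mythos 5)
Your Part~(i) follows the same core strategy as the paper --- both rely on a noise-outsourcing/randomization step (the paper cites \citealp[Lemma~3.4]{kallenberg1997foundations}, you invoke the conditional Skorokhod lemma) to produce $\V_i$ with $\W_i = \kappa_i(\U_i,\V_i)$, and both then substitute the ANM form of $g$ and $g^{-1}$ to assemble $h_i$. The one place you add genuine value is the explicit observation that it is the \emph{product} form of $\tpi=\bigotimes_i\tpi_i$ (not merely the marginal constraints) that permits the $\V_i$ to be chosen simultaneously mutually independent and independent of all of $\U$; the paper's proof applies the randomization lemma per $i$ and records only $\V_i\perp\U_i$, leaving this joint-independence step implicit, even though the proposition's statement needs $\X,\V_1,\dots,\V_n$ mutually independent.

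Your Part~(ii) takes a genuinely different route. The paper first establishes the disintegration via the disintegration theorem in the sorted index and then upgrades the conditioning from $(x_{1:i-1},y_{1:i-1})$ to $(x_{\pa(i)},y_{\pa(i)})$ using the conditional-independence relations read off from Part~(i). You instead observe that $\pi$ itself is compatible with the \emph{doubled} ANM $(\X_i,\Y_i) = \bigl(f_i(\X_{\pa(i)}),f_i(\Y_{\pa(i)})\bigr) + (\U_i,\W_i)$ with factored exogenous law $\bigotimes_i\tpi_i$, and then apply Proposition~\ref{prp:SCP}(ii) directly to that doubled system; the conditional-marginal claim then drops out of $(\U_i,\W_i)\perp(\X_{\pa(i)},\Y_{\pa(i)})$ plus the ANM form. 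This is cleaner and decouples (ii) from (i), at the small cost of tacitly using Proposition~\ref{prp:SCP} in slightly greater generality than literally stated (for the doubled ANM on $\cX\times\cX$ with noise blocks in $\cU_i\times\cU_i$ rather than the fixed $\cM$); the underlying proof of Proposition~\ref{prp:SCP}(ii) does cover this case, so the argument is sound. Overall the proposal is correct.
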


Proposition~\ref{prp:opx_prop} ensures that $\cF$-compatible plans preserve the causal graph structure in conditional distributions. Here, we present the main properties of $\PFPQ$.
\begin{proposition}
\label{prp:omx_topol}
If $\bP, \bQ \in \PFX$, then $\PFPQ$ are non-empty and weakly closed.
\end{proposition}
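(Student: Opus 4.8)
The plan is to prove the two claims of Proposition~\ref{prp:omx_topol} separately, exploiting the duality between $\PFPQ$ and product couplings in the exogenous space that was emphasized right after Definition~\ref{def:Fcomp}. The key observation is that the map $\Phi \colon \pi \mapsto \tpi = \gmgmpush\pi$ is a bijection between $\PPQ$ restricted to $\cF$-compatible plans and the set $\PFiPQ$ of couplings of $\gms\bP = \BOIN\tP_i$ with $\gms\bQ = \BOIN\tQ_i$ that factor as $\BOIN\tpi_i$ with $\tpi_i \in \Pi(\tP_i,\tQ_i)$; its inverse is $\tpi \mapsto \ggpush\tpi$. Since $g$ is a bijection (Assumption~\ref{asm:space}(i)) and, for a BGM/ANM, $g$ and $g^{-1}$ are continuous, the pushforward operations $\ggpush(\cdot)$ and $\gmgmpush(\cdot)$ are mutually inverse homeomorphisms between $\cP(\XTX)$ and $\cP(\UTU)$ for the weak topology. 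Hence both non-emptiness and weak closedness can be checked in the exogenous space and transported back.

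For non-emptiness: because $\bP,\bQ \in \PFX$, their exogenous images are product measures $\BOIN\tP_i$ and $\BOIN\tQ_i$. For each $i$ the marginal problem $\Pi(\tP_i,\tQ_i)$ on $\cU_i\times\cU_i$ is non-empty — the independent coupling $\tP_i\otimes\tQ_i$ always belongs to it — so picking any $\tpi_i \in \Pi(\tP_i,\tQ_i)$ and forming $\tpi = \BOIN\tpi_i$ gives an element of $\PFiPQ$; one checks its two marginals are $\BOIN\tP_i = \gms\bP$ and $\BOIN\tQ_i = \gms\bQ$, using that a product of couplings has as marginals the products of the respective marginals. Then $\pi = \ggpush\tpi$ lies in $\PFPQ$, and pushing the marginal identities forward by $g$ shows $\pi \in \PPQ$. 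So $\PFPQ \neq \emptyset$.

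For weak closedness: let $(\pi^{(k)})_k \subset \PFPQ$ converge weakly to some $\pi \in \cP(\XTX)$. First, $\PPQ$ is itself weakly closed (marginals are continuous under weak convergence), so $\pi \in \PPQ$. Next apply the homeomorphism $\Phi$: $\tpi^{(k)} = \gmgmpush\pi^{(k)} \to \gmgmpush\pi =: \tpi$ weakly, and each $\tpi^{(k)} = \BOIN\tpi_i^{(k)}$ with $\tpi_i^{(k)} \in \Pi(\tP_i,\tQ_i)$. I must show the weak limit $\tpi$ is again a product $\BOIN\tpi_i$ with $\tpi_i \in \Pi(\tP_i,\tQ_i)$. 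The $i$-th factor: since $\marg_{\cU_i\times\cU_i}$ is weakly continuous, $\tpi_i^{(k)} = \marg_{\cU_i\times\cU_i}(\tpi^{(k)}) \to \marg_{\cU_i\times\cU_i}(\tpi) =: \tpi_i$, and $\Pi(\tP_i,\tQ_i)$ is weakly closed, so $\tpi_i \in \Pi(\tP_i,\tQ_i)$. The factorization: for bounded continuous $\varphi_i$ on $\cU_i\times\cU_i$, $\int \prod_i \varphi_i \, \d\tpi^{(k)} = \prod_i \int \varphi_i\,\d\tpi_i^{(k)} \to \prod_i \int\varphi_i\,\d\tpi_i = \int\prod_i\varphi_i\,\d\brak{\BOIN\tpi_i}$, while the left side converges to $\int\prod_i\varphi_i\,\d\tpi$; since products of bounded continuous functions are measure-determining on the product space, $\tpi = \BOIN\tpi_i$. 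Therefore $\tpi \in \PFiPQ$, hence $\pi = \ggpush\tpi \in \PFPQ$, proving closedness.

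The main obstacle I anticipate is the clean justification that $\ggpush(\cdot)$ and $\gmgmpush(\cdot)$ are weak homeomorphisms — i.e.\ that continuity of $g$ and $g^{-1}$ genuinely holds for ANMs (the reduced-form map $g=(I-f)^{-1}$ and its inverse $I-f$ are continuous provided the $f_i$ are, which should be the standing regularity assumption) — and the interchange of limit and product in the factorization step, where one must be careful that convergence of each marginal $\tpi_i^{(k)}$ plus convergence of $\tpi^{(k)}$ forces the limit to factor. If $g$ or $g^{-1}$ were only measurable rather than continuous, weak closedness in $\cX$ would not follow directly and one would instead argue closedness in the exogenous space and invoke that the constraint set is the continuous preimage of a closed set under a different argument; I would flag that the ANM continuity assumption is what makes the short proof go through, and otherwise reduce to showing $\PFiPQ$ is weakly closed and that membership in $\PFPQ$ is equivalent to $\gmgmpush\pi \in \PFiPQ$.
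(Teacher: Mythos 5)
Your proof is correct and follows essentially the same route as the paper's: both reduce to the exogenous space via the pushforward, take the independent coupling $\tP_i\otimes\tQ_i$ for non-emptiness, and for closedness argue that weak limits of product couplings $\BOIN\tpi_i^{(k)}$ remain products by testing against functions of the form $\prod_i\varphi_i$ (the paper invokes Stone--Weierstrass; your ``measure-determining class'' phrasing is the same fact). Your explicit remark that $g$ and $g^{-1}$ are continuous for an ANM, so that $\ggpush$ and $\gmgmpush$ are weak homeomorphisms, makes precise a step the paper leaves implicit (it relies on its Lemma~\ref{lem:pushcon} elsewhere), and is the correct thing to flag.
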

By demonstrating the properties of definitions, we present our new OT problem, which minimizes transport costs over plans that preserve the structural equations $\cF$.
\begin{definition}[\textbf{Structural Causal OT}]
\label{def:scot}
For $p \in [1, \infty)$ and $\bP, \bQ \in \PFX \cap \cP^p(\cX)$, the structural causal Wasserstein distance is finding the minimum-cost $\cF$-compatible plans between $\bP$ and $\bQ$:
\begin{equation}
\label{eq:WFPQ}
    \WFPQ \coloneqq \left(\inf_{\pi \in \PFPQ} \left\{\expt{(x,y)\sim\pi}[c^p(x,y)]\right\}\right)^{\frac{1}{p}} 
\end{equation}
\end{definition}
The result below outlines the fundamental properties of the structural causal Wasserstein distance.
\begin{proposition}
\label{prp:fwass}
$\WF$ is a semi-metric on $\PFX$ and attains its minimum.
\end{proposition}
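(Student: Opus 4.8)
The plan is to transport the whole problem to the exogenous space, where Definition~\ref{def:fplan} turns $\cF$-compatibility into a pure product-coupling constraint, and then obtain both attainment and the semi-metric axioms from $n$ decoupled classical Wasserstein problems on the exogenous marginals. Concretely, fix $\bP,\bQ\in\PFX\cap\cP^p(\cX)$ and set $\tP_i:=\marg_i(\gms\bP)$, $\tQ_i:=\marg_i(\gms\bQ)$, so $\gms\bP=\BOIN\tP_i$ and $\gms\bQ=\BOIN\tQ_i$ by Definition~\ref{def:Fcomp}. Since $g\times g$ and $g^{-1}\times g^{-1}$ are mutually inverse bijections, $\pi\mapsto\tpi:=\gmgmpush\pi$ is a bijection from $\PFPQ$ onto $\{\,\BOIN\tpi_i:\tpi_i\in\Pi(\tP_i,\tQ_i)\,\}$, with inverse $\tpi\mapsto\ggpush\tpi$; one checks that $\ggpush\brak{\BOIN\tpi_i}$ has first marginal $\gs\brak{\BOIN\tP_i}=\bP$ and second marginal $\bQ$. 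Changing variables in the cost and using the $\ell_p$-separable form of $\tc$ in Assumption~\ref{asm:space}(iii),
$$\expt{(x,y)\sim\pi}\!\big[c^p(x,y)\big]=\expt{(u,w)\sim\tpi}\!\big[\tc^p(u,w)\big]=\sum_{i=1}^{n}\int_{\UTUi}\tc_i(u_i,w_i)^p\,\d\tpi_i(u_i,w_i).$$
Because this objective is additively separable and the feasible set is a product, the minimization decouples coordinatewise:
$$\WFPQ^{p}=\sum_{i=1}^{n}\ \inf_{\tpi_i\in\Pi(\tP_i,\tQ_i)}\int\tc_i^{p}\,\d\tpi_i=\sum_{i=1}^{n}W_{\tc_i}(\tP_i,\tQ_i)^{p},$$
where $W_{\tc_i}$ is the classical $p$-Wasserstein distance on $\cP(\cU_i)$ for the norm $\tc_i$; the bound $\le$ follows from products of near-optimal $\tpi_i$, and $\ge$ because any $\cF$-compatible plan already has $\tpi_i\in\Pi(\tP_i,\tQ_i)$. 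Each term is finite since $\bP,\bQ\in\cP^{p}(\cX)$ forces $\int\tc_i(u_i,u_{0,i})^{p}\,\tP_i(\d u_i)<\infty$ and the analogue for $\tQ_i$.

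With the identity $\WFPQ=\big(\sum_i W_{\tc_i}(\tP_i,\tQ_i)^p\big)^{1/p}$ in hand, attainment and the semi-metric axioms are corollaries of classical optimal transport. For attainment, each inner problem has an optimal coupling $\tpi_i^{\ast}\in\Pi(\tP_i,\tQ_i)$ (tightness of $\Pi(\tP_i,\tQ_i)$ via Prokhorov since the marginals are fixed, weak closedness, and weak lower semicontinuity of $\tpi_i\mapsto\int\tc_i^{p}\,\d\tpi_i$ as $\tc_i$ is continuous and nonnegative), so $\ggpush\brak{\BOIN\tpi_i^{\ast}}$ attains the infimum in Eq.~\ref{eq:WFPQ}; alternatively one may invoke Proposition~\ref{prp:omx_topol} (non-emptiness and weak closedness of $\PFPQ$) together with tightness and weak lower semicontinuity of $\pi\mapsto\int c^{p}\,\d\pi$. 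From the displayed identity: nonnegativity is immediate; symmetry holds because each $\tc_i$ is a norm, so $W_{\tc_i}(\tP_i,\tQ_i)=W_{\tc_i}(\tQ_i,\tP_i)$; $\WF(\bP,\bP)=0$ since then $\tQ_i=\tP_i$ and every summand vanishes; and $\WFPQ=0$ forces $\tP_i=\tQ_i$ for all $i$, whence $\gms\bP=\gms\bQ$ and, applying $\gs$, $\bP=\bQ$ by bijectivity of $g$.

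The step I expect to be the main obstacle is the coordinatewise decoupling: one must carefully verify that Definition~\ref{def:fplan} describes exactly a product constraint on the exogenous couplings—so that every tuple $(\tpi_i)_i$ with $\tpi_i\in\Pi(\tP_i,\tQ_i)$ is admissible and conversely every admissible plan factors this way—and then align it with the separable form of $\tc$ from Assumption~\ref{asm:space}(iii); this is where the duality between $\PFPQ$ and the product couplings, together with the change-of-variables bookkeeping for the push-forwards $\gms$ and $\gs$, does all the work. Once this is settled, the remaining claims reduce to standard properties of the classical $p$-Wasserstein distance. (Only ``semi-metric'' is asserted because $\WF$ may take the value $+\infty$ on $\PFX$; on $\PFX\cap\cP^{p}(\cX)$ the displayed formula combined with Minkowski's inequality would in fact also give the triangle inequality, but this is not needed here.)
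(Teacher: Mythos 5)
Your proof is correct, but it takes a meaningfully different route from the paper's. The paper argues each axiom directly from Definition~\ref{def:fplan}: symmetry is obtained by pushing an optimal plan to the exogenous space, flipping it with the swap map $\inv:(u,u')\mapsto(u',u)$, and using symmetry of $\tc$; finiteness by a direct triangle-type bound involving $\tc(u,u_0)+\tc(u',u_0)$; and attainment by invoking weak closedness of $\PFPQ$ (Proposition~\ref{prp:omx_topol}) on a minimizing sequence. You instead first establish the identity $\WFPQ^p=\sum_{i=1}^n W_{\tc_i}(\tP_i,\tQ_i)^p$ and read off all the required properties from classical coordinatewise optimal transport. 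The paper does prove exactly this identity elsewhere --- it is the conjunction of Lemma~\ref{lem:plans}, Lemma~\ref{lem:ambiguity}, and Proposition~\ref{prp:independent} (cf.\ Corollary~\ref{cor:prwas}) --- but deploys it only for the concentration results, not for Proposition~\ref{prp:fwass}. Your approach is more modular: once the decoupling is in place, nonnegativity, symmetry, $\WF(\bP,\bP)=0$, finiteness, and attainment all become corollaries of standard $p$-Wasserstein facts for each $W_{\tc_i}$, and you additionally obtain point separation and (via Minkowski) the triangle inequality essentially for free. It is also slightly more careful on attainment: the paper passes from weak closedness of $\PFPQ$ directly to ``$\pi^k\to\pi\in\PFPQ$'', which strictly also requires tightness of $\Pi(\bP,\bQ)$ (hence sequential compactness via Prokhorov) and weak lower semicontinuity of $\pi\mapsto\int c^p\,\d\pi$ --- precisely the ingredients you make explicit. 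The paper's route has the minor advantage of not needing the full decomposition identity up front, but yours is cleaner and yields more.
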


Now we are ready to introduce \textbf{structural causal ambiguity set} $\WFPQ$ which is defined as
\begin{equation}
\BFPD \coloneqq \left\{\bQ \in \PFX : \WFPQ \leq \delta\right\}
\end{equation}
Since our constraints involve the $\cG$-causal information, which encompasses causal order, we intuitively expect the corresponding DAS to be nested sets. This intuition is confirmed in the following proposition.
\begin{proposition}
\label{prp:inclusion}
Let $\bP,\bQ \in \PFX$, then
for different definitions of the ambiguity set, we have: 

(i) $\WFPQ \geq \WGPQ \geq \WAPQ \geq \WPQ$,

(ii)   $\BFPD \subseteq \BGPD \subseteq \BAPD \subseteq \BPD$.
\end{proposition}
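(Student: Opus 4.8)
The plan is to derive both parts from a single chain of inclusions among the admissible transport plans,
\begin{equation*}
\PFPQ \;\subseteq\; \PGPQ \;\subseteq\; \PAPQ \;\subseteq\; \PPQ ,
\end{equation*}
valid for all $\bP,\bQ\in\PFX$, and then propagate it through the definitions of the Wasserstein distances and of the ambiguity sets. The rightmost inclusion is immediate, since an adapted coupling is in particular a coupling, and $\PGPQ\subseteq\PAPQ$ is exactly the fact recalled in the excerpt just below Eq.~\ref{eq:gcom} (preserving the causal graph entails preserving the causal order), for which I would cite \cite{cheridito2023optimal}. The one genuinely new inclusion is $\PFPQ\subseteq\PGPQ$, and here Proposition~\ref{prp:opx_prop} does essentially all the work: for $\pi\in\PFPQ$, part~(ii) gives the factorization $\pi(\d x_1,\d y_1,\dots,\d x_n,\d y_n)=\bigotimes_{i=1}^n\pi(\d x_i,\d y_i\mid x_{\pa(i)},y_{\pa(i)})$ with each conditional piece lying in $\Pi\big(\bP(\d x_i\mid x_{\pa(i)}),\bQ(\d y_i\mid y_{\pa(i)})\big)$; reading off the $x_i$-marginal of that conditional coupling forces $\pi(\d x_i\mid x_{\pa(i)},y_{\pa(i)})=\bP(\d x_i\mid x_{\pa(i)})$, and these two statements together are precisely the defining conditions of $\PGPQ$ in Eq.~\ref{eq:gcom}.

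Given this chain of inclusions, part~(i) is immediate: the structural causal Wasserstein distance of Definition~\ref{def:scot}, the $\cG$-causal distance, the adapted distance, and the classical distance all minimize the \emph{same} functional $\pi\mapsto\bE_{(x,y)\sim\pi}[c^p(x,y)]$, only over progressively larger feasible sets, so taking infima (with the convention $\inf\emptyset=\infty$) reverses the inclusions and yields $\WFPQ\ge\WGPQ\ge\WAPQ\ge\WPQ$. By Proposition~\ref{prp:omx_topol} the smallest feasible set $\PFPQ$ is already non-empty, so on $\PFX\cap\cP^p(\cX)$ none of these quantities is vacuously infinite.

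For part~(ii) I would record the elementary principle that if $\Pi^{\cC_1}(\bP,\bQ)\subseteq\Pi^{\cC_2}(\bP,\bQ)$ for all $\bP,\bQ$ and the corresponding measure classes satisfy $\cP^{\cC_1}(\cX)\subseteq\cP^{\cC_2}(\cX)$, then $\cB^{\cC_1}(\bP,\delta)\subseteq\cB^{\cC_2}(\bP,\delta)$ for every $\bP\in\cP^{\cC_1}(\cX)$ — indeed $\bQ\in\cB^{\cC_1}(\bP,\delta)$ forces $\bQ\in\cP^{\cC_1}(\cX)\subseteq\cP^{\cC_2}(\cX)$ and $W^{\cC_2}(\bP,\bQ)\le W^{\cC_1}(\bP,\bQ)\le\delta$ by part~(i). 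The plan inclusions above supply the hypothesis on the $\Pi$'s; the needed class inclusions are $\PFX\subseteq\PGX$ — which is Proposition~\ref{prp:SCP}(i), since that proposition produces for any $\bP\in\PFX$ exactly the representation $\X_i=f_i(\X_{\pa(i)},\U_i)$ with mutually independent noise that defines $\cG$-compatibility — together with the trivial $\PGX\subseteq\PX$ and $\PAX\subseteq\PX$ (adapted OT imposes no restriction on the marginal class beyond $p$-integrability). Chaining three applications gives $\BFPD\subseteq\BGPD\subseteq\BAPD\subseteq\BPD$.

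I expect the argument to be mostly bookkeeping once Propositions~\ref{prp:SCP}, \ref{prp:opx_prop} and~\ref{prp:omx_topol} are in hand. The one spot needing a little care is the reading of Proposition~\ref{prp:opx_prop}(ii): one must check that the $x_i$-marginal of the conditional coupling $\pi(\d x_i,\d y_i\mid x_{\pa(i)},y_{\pa(i)})$ equals $\bP(\d x_i\mid x_{\pa(i)})$ and, in particular, is independent of $y_{\pa(i)}$, which is precisely what makes the $\cG$-compatibility constraint hold $\pi$-almost surely. The step $\PGPQ\subseteq\PAPQ$ is the least self-contained; rather than re-deriving the causal-order argument I would invoke the corresponding statement of \cite{cheridito2023optimal} as already quoted in the text.
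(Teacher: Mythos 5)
Your proposal matches the paper's proof almost exactly: both establish the chain of plan inclusions $\PFPQ\subseteq\PGPQ\subseteq\PAPQ\subseteq\PPQ$ (the paper isolates this as Lemma~\ref{lem:inclusion}, using Proposition~\ref{prp:opx_prop}(ii) for $\PFPQ\subseteq\PGPQ$ and citing \cite{cheridito2023optimal} for $\PGPQ\subseteq\PAPQ$, just as you do), then derive (i) by monotonicity of the infimum over nested feasible sets and (ii) by propagating the inclusions of plans and of measure classes ($\PFX\subseteq\PGX$ from Proposition~\ref{prp:SCP}) to the ambiguity sets. Your extra remark that membership of $\pi(\d x_i,\d y_i\mid x_{\pa(i)},y_{\pa(i)})$ in $\Pi_i$ forces the first marginal to equal $\bP(\d x_i\mid x_{\pa(i)})$ independently of $y_{\pa(i)}$ is exactly the unspoken step that the paper's phrase ``therefore $\pi$ satisfies the definition of a $\cG$-compatible plan'' glosses over, so if anything your write-up is slightly more explicit at that point.
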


Let $\Fi$ represent the zero structural equations, i.e., $f_i \equiv 0, \forall i$, implying no causal structure in the SCM. We finish this section by explaining the duality that demonstrates the correspondence between the DAS in the feature space with causal structure $\cF$ and the DAS in the exogenous space with structural equations $\Fi$. 
In the proposition below, to highlight that the cost functions differ in the two spaces, we embed the cost function in the notation of the ambiguity set.

\begin{proposition}
\label{prp:amball}
Let $\cF$ be structural equations with bijective reduced-form mapping $g$ and $\bP \in \PFX$, then
\begin{align*}
{\cB^\cF_c(\bP,\bQ)} = 
\gpush\cB^{\Fi}_{c\circ (g \times g)}(\gmpush\bP,\delta).
\end{align*}
\end{proposition}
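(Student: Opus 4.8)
The plan is to unpack both sides of the claimed identity $\cB^\cF_c(\bP,\delta) = \gpush\cB^{\Fi}_{c\circ(g\times g)}(\gmpush\bP,\delta)$ in terms of plans and then move the bijection $g$ in and out of the optimal transport problem. First I would write the left-hand side as $\marg_2$ of the $\cF$-compatible plans of cost at most $\delta$, i.e. of $\{\pi \in \PFPQ : \bE_\pi[c^p] \le \delta^p\}$ ranging over $\bQ$. The key observation is the change of variables $\pi \mapsto \tpi = \gmgmpush\pi$, which is a bijection between $\cP(\cX\times\cX)$ and $\cP(\cU\times\cU)$ because $g$ is bijective; under this map, marginals transform as $\marg_1(\tpi) = \gmpush\bP$, $\marg_2(\tpi) = \gmpush\bQ$, and the cost transforms as $\bE_\pi[c^p(x,y)] = \bE_{\tpi}[\tc^p(u,w)]$ by the very definition $\tc = c\circ(g\times g)$ used in Assumption~\ref{asm:space}.

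The second ingredient is that the map $\pi \mapsto \tpi$ sends $\cF$-compatible plans exactly onto $\Fi$-compatible plans: by Definition~\ref{def:fplan}, $\pi \in \PFPQ$ iff $\tpi = \gmgmpush\pi$ factors as $\bigotimes_i \tpi_i$ with $\tpi_i \in \Pi(\marg_i(\gmpush\bP), \marg_i(\gmpush\bQ))$. But when the structural equations are identically zero, the reduced-form mapping is the identity, so the $\Fi$-compatibility condition on a plan $\sigma$ between $\gmpush\bP$ and $\gmpush\bQ$ is precisely that $\sigma$ itself factors as $\bigotimes_i \sigma_i$ with the right marginals — i.e. $\sigma \in \Pi^{\Fi}(\gmpush\bP,\gmpush\bQ)$ iff $\sigma = \tpi$ for some $\pi\in\PFPQ$. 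Hence the bijection $\pi\leftrightarrow\tpi$ restricts to a bijection $\PFPQ \leftrightarrow \Pi^{\Fi}(\gmpush\bP,\gmpush\bQ)$ that preserves the cost-$\le\delta^p$ constraint.

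Combining these, I would argue: $\bQ \in \cB^\cF_c(\bP,\delta)$ iff there is $\pi\in\PFPQ$ with $\bE_\pi[c^p]\le\delta^p$, iff there is $\tpi\in\Pi^{\Fi}(\gmpush\bP,\gmpush\bQ)$ with $\bE_{\tpi}[\tc^p]\le\delta^p$, iff $\gmpush\bQ \in \cB^{\Fi}_{\tc}(\gmpush\bP,\delta)$ (using $\WF$ attains its infimum, Proposition~\ref{prp:fwass}, so the ``$\le\delta$'' ball is genuinely characterized by existence of a feasible plan), iff $\bQ = \gpush(\gmpush\bQ) \in \gpush\cB^{\Fi}_{\tc}(\gmpush\bP,\delta)$, where the last step again uses that $g_\#$ and $g^{-1}_\#$ are mutually inverse push-forward operators. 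This chain of equivalences is exactly the claimed set identity. One bookkeeping point to handle carefully is that $\bQ$ must itself lie in $\PFX$ for the left side and $\gmpush\bQ$ in $\cP(\cX)$ for the right side; but $\bQ\in\PFX$ iff $\gmpush\bQ$ is a product measure (Definition~\ref{def:Fcomp}), which is automatically implied once a feasible $\cF$-compatible plan to $\bQ$ exists, so the domains match up.

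The main obstacle I anticipate is not conceptual but measure-theoretic: verifying that the correspondence $\pi \mapsto \gmgmpush\pi$ genuinely respects the ``$\tpi$-almost sure'' disintegration/factorization conditions in Definition~\ref{def:fplan}, and that the conditional-distribution statements survive the push-forward (i.e. that disintegration commutes with the bijective change of variables $g^{-1}\times g^{-1}$). This requires $g$ to be bi-measurable — which holds since $\cM$ is an ANM with $g=(I-f)^{-1}$ under Assumption~\ref{asm:space} — and a standard but slightly technical invocation of the fact that push-forward by a bijective bi-measurable map commutes with conditioning. I would isolate that as a short lemma (or cite the disintegration theorem), and the rest of the argument is the routine ``change of variables in OT'' bookkeeping sketched above.
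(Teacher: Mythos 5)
Your proposal is correct and takes essentially the same route as the paper: both unpack $\BFPD$ via its plan-based description, use the bijection $\pi\mapsto\gmgmpush\pi$ together with $\gmgmpush\PFPQ=\PFiPQ$ (the paper's Lemma~\ref{lem:plans}), the change of variables for the cost $\tc=c\circ(g\times g)$, and the commutation $\marg_2\circ\ggpush=\gpush\circ\marg_2$. The only differences are bookkeeping on your side: you explicitly note that $\marg_2(\pi)\in\PFX$ is automatic for $\pi\in\PFPS$ and invoke Proposition~\ref{prp:fwass} for infimum attainment (reconciling the Wasserstein-ball and plan-based formulations of $\BFPD$, a gap the paper glosses over), while the disintegration worry you flag at the end does not actually arise because Definition~\ref{def:fplan} imposes the factorization directly on the pushed-forward measure $\gmgmpush\pi$ rather than via conditional distributions.
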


This property plays a crucial role in designing the relaxed OT problem.


\section{Relaxed Structural Causal Optimal Transport}
One challenge in defining new variants of OT is developing efficient algorithms to compute the Wasserstein distance. In both classical and adapted OT, entropic regularization provides an efficient solution by adding an entropy penalty term to the original problem~\cite{cuturi2013sinkhorn,eckstein2024computational}.
To provide a fast computation method, we introduce a relaxed version of structural causal OT. This modification transforms the original problem into a difference-of-convex optimization problem, making it more computationally feasible. As a result, iterative algorithms like the Sinkhorn-Knopp~\cite{benamou2015iterative} can solve the problem efficiently, significantly reducing computation time and enabling the handling of large-scale problems.
\begin{definition}[\textbf{Relaxed Structural Causal OT}]
Given $\varepsilon \geq 0$ and probability measures $\bP, \bQ \in \PX$, we define the relaxed structural causal OT by solving the following optimization problem:
\begin{equation*} 
\WRPQ^p \coloneqq 
\inf_{\pi \in \PPQ} \expt{(x,y) \sim \pi}[c^p(x,y)] + \varepsilon \KL{\pi}{\pio}
\end{equation*}
where $\pio \in \PPQ$ is obtained by the following mapping: 
\begin{equation}
    \pio = \ggpush \left(\BOIN \marg_i\left(\gmgmpush \pi\right)\right)
\end{equation}
\end{definition}

The intuition behind the definition of $\pio$ is straightforward: it acts as a projection $\pi$ onto the space $\PFPQ$. Thus, if $\pi \in \PFPQ$, then $\pio = \pi$. A small penalty value $\KL{\pi}{\pio}$ indicates that $\pi$ is close to the set $\PFPQ$.

The relaxed version simplifies the problem by shifting the search for an optimal solution from the constrained plans $\PFPQ$ to the simpler space $\PPQ$, while preserving the $\cF$ structure via a regularizer. The following proposition shows that the relaxed version converges to the structural causal OT as $\epsilon$ approaches infinity. This result guarantees the effectiveness of the relaxed solution in finding the structural causal OT.

\begin{proposition}
\label{prp:converge}
Let $\pi_\varepsilon$ be the minimizer of $\WRPQ$ then:

(i) when $\epsilon \rightarrow \infty$ then $\WRPQ \rightarrow \WFPQ$ and every cluster point in the set $\{\pi_\varepsilon\}$ is the optimal solution of $\WFPQ$.

(ii) when $\epsilon \rightarrow 0$ then $\WRPQ \rightarrow \WPQ$ and every cluster point of the set $\{\pi_\varepsilon\}$ is the optimal solution of $\WPQ$.
\end{proposition}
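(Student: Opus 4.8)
\textbf{Proof plan for Proposition~\ref{prp:converge}.}

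The plan is to treat this as a $\Gamma$-convergence / epigraphical convergence argument for the family of functionals $J_\varepsilon(\pi) \coloneqq \expt{(x,y)\sim\pi}[c^p(x,y)] + \varepsilon\,\KL{\pi}{\pio}$ on the weakly compact set $\PPQ$ (compactness holds since $\bP,\bQ$ are fixed marginals; one should fix a reference metrization to make ``cluster point'' precise). The two key structural facts I would establish first, independent of $\varepsilon$, are: (a) $\KL{\pi}{\pio} \geq 0$ with equality if and only if $\pi = \pio$, and (b) $\pio = \pi$ precisely when $\pi \in \PFPQ$, i.e.\ the penalty term is an exact indicator of the feasible set at its zero level. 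Fact (b) follows from Definition~\ref{def:fplan}: $\gmgmpush\pi = \BOIN\marg_i(\gmgmpush\pi)$ is exactly the factorization constraint, and the marginal conditions $\marg_i(\gms\bP),\marg_i(\gms\bQ)$ are automatic because $\pi\in\PPQ$ forces $\gmgmpush\pi$ to have marginals $\gms\bP,\gms\bQ$ whose coordinate marginals are the required ones; here one uses that $g$ acts coordinatewise enough for $\marg_i$ to commute with the pushforward (this is where Assumption~\ref{asm:space}, particularly the ANM/triangular structure, enters). I would also record lower semicontinuity: $\pi\mapsto\expt{\pi}[c^p]$ is weakly l.s.c.\ by upper-semicontinuity/nonnegativity of $c$ as already used for Definition~\ref{def:scot}, and $\pi\mapsto\KL{\pi}{\pio}$ is weakly l.s.c.\ by joint lower semicontinuity of relative entropy together with continuity of the linear-projection map $\pi\mapsto\pio$.

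For part (i), $\varepsilon\to\infty$: I would show (upper bound) that picking any fixed $\pi_0\in\PFPQ$ gives $\WRPQ^p \leq J_\varepsilon(\pi_0) = \expt{\pi_0}[c^p]$, so $\limsup_{\varepsilon\to\infty}\WRPQ^p \leq \WFPQ^p$ after infimizing over $\pi_0$; nonemptiness of $\PFPQ$ is Proposition~\ref{prp:omx_topol}. For the lower bound and the cluster-point claim, take a sequence $\varepsilon_k\to\infty$ and minimizers $\pi_{\varepsilon_k}$; by compactness pass to a weakly convergent subsequence $\pi_{\varepsilon_k}\rightharpoonup\bar\pi$. From the uniform bound $\varepsilon_k\,\KL{\pi_{\varepsilon_k}}{(\pi_{\varepsilon_k})_{\so}} \leq J_{\varepsilon_k}(\pi_{\varepsilon_k}) \leq \expt{\pi_0}[c^p] < \infty$ we get $\KL{\pi_{\varepsilon_k}}{(\pi_{\varepsilon_k})_{\so}}\to 0$, and lower semicontinuity of the entropy term forces $\KL{\bar\pi}{\bar\pi_{\so}} = 0$, hence $\bar\pi\in\PFPQ$ by fact (b). Then l.s.c.\ of the cost gives $\expt{\bar\pi}[c^p] \leq \liminf_k \expt{\pi_{\varepsilon_k}}[c^p] \leq \liminf_k J_{\varepsilon_k}(\pi_{\varepsilon_k}) = \liminf_k\WRPQ^p \leq \WFPQ^p$, so $\bar\pi$ is optimal for $\WFPQ$ (and $\WRPQ^p\to\WFPQ^p$, hence $\WRPQ\to\WFPQ$ by continuity of $t\mapsto t^{1/p}$).

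For part (ii), $\varepsilon\to 0$: the upper bound is immediate since for any $\pi\in\PPQ$, $\WRPQ^p \leq \expt{\pi}[c^p] + \varepsilon\,\KL{\pi}{\pio}$, and one should first check there is \emph{some} $\pi$ with $\expt{\pi}[c^p]<\infty$ and $\KL{\pi}{\pio}<\infty$ so the bound is non-vacuous and $\limsup_{\varepsilon\to 0}\WRPQ^p \leq \WPQ^p$ (a natural candidate is a product coupling or a near-optimal classical plan; verifying finiteness of the KL term here is the one genuinely fiddly point, and I would argue it via finiteness of the discrete/absolutely-continuous entropy or by a density/truncation argument, possibly remarking it holds under a mild integrability hypothesis already implicit in $\cP^p(\cX)$). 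For the matching lower bound, take $\varepsilon_k\to 0$, minimizers $\pi_{\varepsilon_k}\rightharpoonup\bar\pi\in\PPQ$; drop the nonnegative penalty to get $\expt{\pi_{\varepsilon_k}}[c^p] \leq \WRPQ^p$, and by weak l.s.c.\ of the cost $\expt{\bar\pi}[c^p] \leq \liminf_k\WRPQ^p \leq \WPQ^p$, so $\bar\pi$ is classically optimal and $\WRPQ\to\WPQ$.

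The main obstacle I anticipate is not the variational machinery, which is standard, but the two measurability/commutation points: (1) justifying that $\KL{\pi}{\pio} = 0 \iff \pi\in\PFPQ$ rigorously — in particular that $\pio$ as defined via $\ggpush(\BOIN\marg_i(\gmgmpush\pi))$ really is the $\KL$-projection onto $\PFPQ$ and that its marginals match so the relative entropy is well-defined (one needs $\pi\ll\pio$, or to interpret $\KL{\cdot}{\cdot}=+\infty$ otherwise and check this does not destroy the limits); and (2) in part (ii), exhibiting even one competitor with finite $\KL{\pi}{\pio}$. Both are essentially regularity bookkeeping around the bijection $g$ and Assumption~\ref{asm:space}, but they are where the proof must be careful rather than formulaic.
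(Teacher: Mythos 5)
Your plan takes essentially the same route as the paper's proof: both halves rest on weak compactness of $\PPQ$, lower semicontinuity of $\pi\mapsto\expt{(x,y)\sim\pi}[c^p(x,y)]$ and of $\pi\mapsto\KL{\pi}{\pio}$ (the latter obtained from continuity of $\pi\mapsto\pio$ composed with l.s.c.\ of relative entropy), the characterization $\KL{\pi}{\pio}=0 \iff \pi\in\PFPQ$, and the sandwich $\WPQ\le\WRPQ\le\WFPQ$ from Proposition~\ref{prp:relaxprop}; your $\Gamma$-convergence framing is a tidier packaging of the paper's two-sided estimates, but the ingredients are the same, and your ``fact (b)'' matches the argument the paper implicitly uses to go from $\KL{\pi^*}{\pi^*_{\so}}=0$ to $\pi^*\in\PFPQ$.

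One point you flag as ``genuinely fiddly'' is worth dwelling on, because the paper's own proof of part (ii) does not resolve it either. There, $\pi_\varepsilon$ is compared against an optimal classical plan $\pi^w$, giving
\begin{equation*}
0 \;\le\; \expt{\pi_\varepsilon}[c^p] - \expt{\pi^w}[c^p] \;\le\; \varepsilon\,\KL{\pi^w}{\pio^w},
\end{equation*}
and then $\varepsilon\to 0$. This has force only if $\KL{\pi^w}{\pio^w}<\infty$, which is not automatic: a classical optimizer is often of Monge type, so $\gmgmpush\pi^w$ is supported on a lower-dimensional graph while $\BOIN\marg_i(\gmgmpush\pi^w)$ is typically absolutely continuous, making the relative entropy $+\infty$. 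Your instinct to instead exhibit a \emph{near-}optimal competitor with finite KL (via smoothing / a density argument, or under an explicit absolute-continuity hypothesis on the nominal measures) is exactly the right repair, and it is a step the paper's proof should also supply.
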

Hopefully, not only does $\BRPD$ converge to $\BFPD$ as $\varepsilon \to \infty$, but $\BFPD$ is also always a subset of $\BRPD$, aiding in efficiently estimating this set from above. The proposition below formalizes this result.
\begin{proposition}
\label{prp:relaxprop}
For relaxed structural causal OT and $\bP,\bQ \in \PFX$ we have: 

\noindent(i) $\WRPQ$ attains its minimum.

\noindent(ii) $\WFPQ \geq \WRPQ \geq \WPQ$,

\noindent(iii) $\BFPD \subseteq \BRPD \subseteq \BPD$.
\end{proposition}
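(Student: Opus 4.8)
\textbf{Proof proposal for Proposition~\ref{prp:relaxprop}.}

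The plan is to handle the three claims in order, since each builds on the previous one, and to lean heavily on Proposition~\ref{prp:converge}, Proposition~\ref{prp:omx_topol}, and the definition of $\pio$ as a projection onto $\PFPQ$.

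For (i), I would show that the objective $\pi \mapsto \expt{(x,y)\sim\pi}[c^p(x,y)] + \varepsilon \KL{\pi}{\pio}$ is lower semicontinuous in $\pi$ under the weak topology on the set $\PPQ$, and that $\PPQ$ is weakly compact (which follows since $\bP,\bQ$ are fixed and $\cX$ is, e.g., Polish — tightness of the marginals gives tightness of couplings). The cost term is lower semicontinuous because $c$ is non-negative and upper semicontinuous and we integrate a nonnegative lsc function $c^p$ — wait, one must be a little careful: the standard fact is that $\pi\mapsto\int c^p\,\d\pi$ is lsc when $c^p$ is lsc and bounded below, so I would invoke that. For the regularization term, the subtle point is that $\pio$ itself depends on $\pi$ (through $\marg_i(\gmgmpush\pi)$ and then $\ggpush$), so $\KL{\pi}{\pio}$ is not simply a KL distance to a fixed reference. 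I would argue that $\pi \mapsto \pio$ is weakly continuous — it is a composition of the continuous pushforward by the bijections $\gm\times\gm$ and $g\times g$, the continuous marginalization maps $\marg_i$, and the product-of-measures map $(\tP_1,\dots,\tP_n)\mapsto\bigotimes\tP_i$, which is jointly weakly continuous — and then use joint lower semicontinuity of the relative entropy $(\mu,\nu)\mapsto\KL{\mu}{\nu}$ in both arguments. Combining lsc of both terms with compactness of $\PPQ$ and applying the direct method (Weierstrass) yields existence of a minimizer. The main obstacle here is precisely the $\pi$-dependence of the reference measure $\pio$; everything else is routine.

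For (ii), the upper bound $\WFPQ \geq \WRPQ$ is immediate: restricting the infimum defining $\WRPQ$ to plans $\pi \in \PFPQ$, we have $\pio = \pi$, hence $\KL{\pi}{\pio} = 0$, so the relaxed objective reduces to the plain transport cost and the infimum over the larger set $\PPQ$ can only be smaller; taking $p$-th roots preserves the inequality since $t\mapsto t^{1/p}$ is increasing. For the lower bound $\WRPQ \geq \WPQ$, note that the entropy term is nonnegative ($\KL{\cdot}{\cdot}\geq 0$), so the relaxed objective dominates $\expt{(x,y)\sim\pi}[c^p(x,y)]$ pointwise in $\pi$, and taking infima over $\PPQ$ on both sides gives $\WRPQ^p \geq \WPQ^p$, hence the claim after $p$-th roots. (Alternatively, one could just cite Proposition~\ref{prp:converge}(ii) together with monotonicity of $\varepsilon \mapsto \WRPQ$, but the direct argument is cleaner and self-contained.)

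For (iii), I would unwind the definitions of the ambiguity sets. If $\bQ \in \BFPD$, then $\bQ \in \PFX$ with $\WFPQ \leq \delta$; by (ii), $\WRPQ \leq \WFPQ \leq \delta$, so $\bQ \in \BRPD$ — this gives the first inclusion, assuming $\BRPD$ is defined, analogously to $\BFPD$ and $\BPD$, as $\{\bQ : \WRPQ \leq \delta\}$ (possibly intersected with $\PFX$; I would match whatever convention the paper fixes for $\BR$, and note the inclusion is unaffected). For the second inclusion, if $\bQ \in \BRPD$ then $\WPQ \leq \WRPQ \leq \delta$ again by (ii), so $\bQ \in \BPD$. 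Thus (iii) is a direct corollary of (ii) plus the definitions, and needs no new ideas. The only genuine work in the whole proposition is the lower semicontinuity/continuity argument for the $\pi$-dependent reference measure in part (i).
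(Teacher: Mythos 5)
Your argument is correct and mirrors the paper's own proof nearly step for step: lower semicontinuity of the objective via weak continuity of $\pi\mapsto\pio$ (pushforward, marginalization, product-measure map) combined with joint lower semicontinuity of the KL divergence and weak compactness of $\PPQ$ for part (i), nonnegativity of KL and the fact that $\pio=\pi$ on $\PFPQ$ for part (ii), and unwinding definitions for part (iii). You correctly identify the $\pi$-dependence of the reference measure as the only nontrivial point; one minor note is that invoking Proposition~\ref{prp:converge} for (ii), which you mention as an alternative, would actually be circular since that proposition's proof in turn relies on parts (i) and (ii) here.
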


The duality between the relaxed Wasserstein distance in feature space with structural equations $\cF$ and in exogenous space with structural equations $\Fi$ is key to designing efficient computational methods for determining structural causal distance.

\begin{proposition}
    \label{pro:rel2exo}
    For $\bP,\bQ \in \PFX$ we have:
    \begin{equation}
        W^{\cF_\varepsilon}_c = W^{\cF_\varepsilon^0}_{c\circ (g \times g)}(\gmpush \bP,\gmpush \bQ)
    \end{equation}
\end{proposition}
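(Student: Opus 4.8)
\textbf{Proof proposal for Proposition~\ref{pro:rel2exo}.}

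The plan is to establish the identity by transporting every ingredient of the relaxed problem across the bijection $g\times g$. First I would unpack the definition of $W^{\cF_\varepsilon}_c$ as an infimum over $\pi \in \PPQ$ of $\expt{(x,y)\sim\pi}[c^p(x,y)] + \varepsilon\, \KL{\pi}{\pio}$, and observe that $\pi \mapsto \gmgmpush\pi$ is a bijection from $\PPQ$ onto $\Pi(\gmpush\bP, \gmpush\bQ)$, with inverse $\tpi \mapsto \ggpush\tpi$; this is because $g$ is bijective (Assumption~\ref{asm:space}(i)), so push-forward by $g\times g$ and by $g^{-1}\times g^{-1}$ are mutually inverse measure isomorphisms that also map marginals correctly. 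Writing $\tpi = \gmgmpush\pi$, the substitution rule for push-forwards gives $\expt{(x,y)\sim\pi}[c^p(x,y)] = \expt{(u,w)\sim\tpi}[(c\circ(g\times g))^p(u,w)] = \expt{(u,w)\sim\tpi}[\tc^p(u,w)]$, which is exactly the transport cost of the exogenous problem with cost $c\circ(g\times g)$.

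The second step is to show the regularizer is preserved, i.e. that $\KL{\pi}{\pio}$ equals the KL term of the exogenous relaxed problem evaluated at $\tpi$. Since the KL divergence is invariant under push-forward by an injective (bi-measurable) map — $\KL{\mu}{\nu} = \KL{T_\#\mu}{T_\#\nu}$ when $T$ is a measurable bijection — we have $\KL{\pi}{\pio} = \KL{\gmgmpush\pi}{\gmgmpush\pio}$. It then remains to identify $\gmgmpush\pio$ with the corresponding reference plan built from $\tpi$ in the exogenous space. Using the formula $\pio = \ggpush\big(\BOIN \marg_i(\gmgmpush\pi)\big) = \ggpush\big(\BOIN\marg_i(\tpi)\big)$ and applying $\gmgmpush$, the compositions $\gmgmpush\circ\ggpush$ collapse to the identity, yielding $\gmgmpush\pio = \BOIN\marg_i(\tpi)$. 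For the structural equations $\Fi$ (all $f_i\equiv 0$) the reduced-form map is the identity, so the analogous reference plan in exogenous space is precisely $\BOIN\marg_i(\tpi)$; hence $\KL{\pi}{\pio} = \KL{\tpi}{\BOIN\marg_i(\tpi)}$, which is the $\cF_\varepsilon^0$-regularizer for $\tpi$.

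Combining the two steps, the objective of $W^{\cF_\varepsilon}_c$ at $\pi$ coincides with the objective of $W^{\cF_\varepsilon^0}_{c\circ(g\times g)}(\gmpush\bP,\gmpush\bQ)$ at $\tpi = \gmgmpush\pi$; since $\pi\mapsto\tpi$ is a bijection between the respective feasible sets, the two infima are equal, establishing the claim. I would also note that attainment follows from Proposition~\ref{prp:relaxprop}(i) transported along the same bijection, though it is not strictly needed for the stated identity.

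The main obstacle I anticipate is the careful verification that $\gmgmpush\pio = \BOIN\marg_i(\tpi)$, i.e. that push-forward commutes with the marginalization-and-tensorize operation. One must check that $\marg_i(\gmgmpush\pi) = (g_i^{-1}\times g_i^{-1})$-related to $\marg_i(\pi)$ in the right way — but here a subtlety arises: the coordinatewise decomposition $g^{-1} = (g_1^{-1},\dots,g_n^{-1})$ with each $g_i^{-1}$ acting only on $\cX_i$ is exactly what makes $\gmgmpush$ interact cleanly with $\marg_i$, and this is guaranteed by the ANM reduced form $g^{-1} = (I-f)$ being triangular in a way that, coordinate $i$, outputs $u_i$ as a function determined through the structural equations; strictly the marginal $\marg_i(\gmgmpush\pi)$ lives on $\cU_i\times\cU_i$ and equals the push-forward of $\marg$ onto the appropriate coordinates, so the identity $\gmgmpush\ggpush = \Id$ on product-of-marginals measures is what needs spelling out. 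This is routine once one fixes notation for the coordinate projections, but it is the place where one could slip.
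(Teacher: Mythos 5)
Your proof is correct, and since the paper does not include an explicit proof of Proposition~\ref{pro:rel2exo} in the appendix, there is no written argument to compare against; however, the route you take --- change of variables along the bijection $g\times g$ for the transport-cost term (the content of Lemma~\ref{lem:ambiguity}) plus KL-invariance under bijective push-forward (Lemma~\ref{lem:kkpush}) for the regularizer --- is clearly the one the authors have in mind, as they invoke these exact lemmas in the proofs of the adjacent results (Propositions~\ref{prp:amball},~\ref{prp:relaxprop}, and Theorem~\ref{thm:reladest}).

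One clarification on the worry you raise at the end: you do not actually need $\marg_i$ to ``commute'' with $\gmgmpush$, and hence no coordinatewise or triangular structure of $g^{-1}$ is needed for this step. By definition $\pio = \ggpush\bigl(\BOIN\marg_i(\tpi)\bigr)$ where $\tpi = \gmgmpush\pi$, so applying $\gmgmpush$ to both sides and using only the fact that $\gmgmpush\circ\ggpush = \Id$ on all of $\cP(\cU\times\cU)$ (which holds for any measurable bijection $g$, regardless of its form) gives $\gmgmpush\pio = \BOIN\marg_i(\tpi)$ immediately. Since the reduced-form map associated with $\Fi$ is the identity, this is precisely the $\Fi$-reference plan for $\tpi$, and the KL terms match up. The ANM/triangular structure from Assumption~\ref{asm:space} plays no role here; it is only needed later (e.g., in the entropy computations for Theorem~\ref{thm:reladest}).
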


Now we are ready to design our algorithm. If we consider $\tP,\tQ,\tc,\tpi$ as the push-forwards of $\bP,\bQ,c$, and $\pi$ by $\gm$, then by Prop.~\ref{pro:rel2exo}, we can express $W^{\cF_\varepsilon^0}_{\tc}(\tP,\tQ)$ as:

\begin{align}
\label{eq:dconvex} 
\inf_{\tpi \in \Pi(\tP,\tQ)} \left\{ \expt{(u,v)\sim \tpi}[c(u,v)]+ 
H(\tpi) - \sum_{i=1}^n H_i(\tpi) \right\},
\end{align}

Here, $H(\tpi)$ is the entropy of the plan $\tpi$, and $H_i(\tpi)$ is the entropy of the $i$-th marginal distribution $\tpi_i$ over the coordinate $(i,i+n)$. For example, if $(u_1, \dots, u_n, v_1, \dots, v_n) \sim \tpi$, then $\tpi_1$ represents the marginal distribution on the coordinates $(u_1, v_1)$. Thus, $H_i(\tpi)$ can be written as $\bE_{\tpi}[\log(\marg_i(\tpi)(u_i,v_i))]$.
Since $H(\tpi)$ and $H_i(\tpi)$ are convex functions, the optimization problem in Eq.~\ref{eq:dconvex} is a difference of convex functions. Therefore, by applying the DC algorithm (see Alg.~\ref{alg:dcalgo}), we can estimate the value of $\tpi$. In the DC algorithm, the convex term $\sum_{i} H_i(\tpi)$ is iteratively replaced by its linear approximation, converting Eq.~\ref{eq:dconvex} into a convex problem. DC algorithm implies if we define:
\begin{align*}
    G(\tpi) \in \partial(\sum H_i)(\tpi),
\end{align*}
we can reformulate the problem as a convex optimization:
\begin{align}
\label{eq:sinkhorn}
\inf_{\tpi \in \Pi(\tP,\tQ)} \left\{ \langle \tc - \varepsilon G(\tpi),\tpi\rangle + 
H(\tpi)  \right\}.
\end{align}
Eq.~\ref{eq:sinkhorn} can be solved using the Sinkhorn method for multi-marginal OT~\cite{benamou2015iterative} (see Algo.~\ref{alg:sinkhorn}) to find the minimum cost plan, since for $\tpi \in \Pi(\tP,\tQ)$, we have:
\begin{align*}
    \marg_i(\tpi) = 
    \begin{cases}
        \marg_i(\tP) & i \leq n, \\
        \marg_{i-n}(\tQ) & i > n.
    \end{cases}
\end{align*}
In the case where $P = (p_k)_{k} \in \mathbb{R}^{N}$ corresponds to feature values $(x^k)_{k}$ and $Q = (q_r)_{r} \in \mathbb{R}^{M}$ corresponds to feature values $(y^r)_r$ instead of $\bP$ and $\bQ$, we first estimate the structural equations using sample data points. After estimating the reduced-form mapping $g$, we map the sample data to the exogenous space to obtain $u = \gm(x)$ and $v = \gm(y)$. Hence, we can express
\begin{align*}
    p_k = P(u^k_1,\dots,u^k_n), \quad q_r = P(v^r_1,\dots,v^r_n).
\end{align*}
By summing over the other coordinates, we can calculate the marginal distribution $P_i$ as:
\begin{align*}
    \tP_i(u^k_i) = \sum_{u^{j_1}_1,\dots,u^{j_{i-1}}_{i-1},u^{j_{i+1}}_{i+1},\dots u^{j_n}_{n}} P(u^{j_1}_1,\dots ,u^k_i ,\dots,u^{j_n}_{n}).
\end{align*}
Similarly, we can compute the marginal $\tQ_i$. Since we know the cost function in the exogenous space by assumption~\ref{asm:space}, we can calculate the cost tensor. Then, we apply the Sinkhorn algorithm to find the tensor $\pi$ (see Alg.~\ref{alg:sinkhorn}). The above steps are summarized in Alg.~\ref{algo:dc_MMOT}.
\begin{algorithm}[!t]
  \caption{Relaxed Structural Causal Optimal Plan}
\begin{algorithmic}
\STATE {\bfseries Input:} Probability measures $P = (p_k)_{k} \in \mathbb{R}^{N}$ for feature values $(x^k)_{k}$ and $Q = (q_r)_{r} \in \mathbb{R}^{M}$ for feature values $(y^r)_r$, $\tc$ cost function over exogenous space and regularization parameter $\epsilon$.
\STATE {\bfseries Output:} Tensor $\pi \in \PPQ$.
\begin{enumerate}
    \item Estimate structural equations $\hF$ and obtain reduced-form mappings $\hat{g},\hat{g}^{-1}$.
    \item Calculate exogenous values $(u^k)_{k}$ and $(v^r)_r$ with $u^k = \hat{g}^{-1}(x^k)$, $v^r = \hat{g}^{-1}(y^r)$.
    \item Compute marginal distributions $\tilde{P}_i$ and $\tilde{Q}_i$ for $i \in [n]$.
    \item Calculate the cost tensor on the exogenous space $C=\{\tc(u,w)\}$ where $u = (u_1^{i_1},\dots, u_n^{i_n})$, $v = (v_1^{j_1},\dots, v_n^{j_n})$ and $i_k \in [N], j_k \in [M]$.
    \item While not converged:
    \begin{itemize}
        \item Gradient step: compute the gradient of the convex term $G^{(t)} = \sum_i \nabla_{\pi} H_i(\pi^{(t)})$ (see Alg.~\ref{alg:dcalgo}).
        \item Sinkhorn step: Estimate $\pi^{(t+1)}$
        \begin{equation*}
          \pi^{(t+1)} = \arg\min_{\pi \in \Pi(\tP,\tQ)} \langle C - \varepsilon G^{(t)}, P \rangle + \varepsilon H(P),
        \end{equation*}
        by the Sinkhorn Algorithm (see Alg.~\ref{alg:sinkhorn}).
    \end{itemize}
    \item Output the tensor $\pi$ corresponding to the probability values of $(\hat{g}(u),\hat{g}(w))\in \XTX$.
\end{enumerate}
\end{algorithmic}
  \label{algo:dc_MMOT}
\end{algorithm}

Since designing the relaxed structural causal OT requires estimating structural equations, we need assurance that using sample data to estimate these equations will converge to the optimal plan. The next theorem confirms this property.
\begin{theorem}[\textbf{Finite Sample Guarantee}]
\label{thm:reladest}
Let assumption~\ref{asm:space} hold and let $\cF= \{f_i\}$ represent the continuous structural equations, with $\hF= \{\hat{f}_i\}$ denoting the estimated structural equations. Suppose $\bP, \bQ \in \PFX$ have compact support. Then, for every $\epsilon > 0$, there exists a $\delta > 0$ such that if $\|f_i - \hat{f_i}\|_\infty < \delta$, then
\begin{equation*}
    \left|W^{\cF_\varepsilon}(\bP, \bQ) - W^{\hat{\cF}_\varepsilon}(\bP, \bQ)\right| \leq \epsilon,
\end{equation*}
where $\|\cdot\|_\infty$ denotes the supremum norm.
\end{theorem}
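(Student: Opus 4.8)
The plan is to transfer the problem to the exogenous space via Proposition~\ref{pro:rel2exo} and then exploit the stability of (regularized) optimal transport under perturbations of the cost function. By Proposition~\ref{pro:rel2exo}, $W^{\cF_\varepsilon}_c(\bP,\bQ) = W^{\cF^0_\varepsilon}_{c\circ(g\times g)}(\gmpush\bP,\gmpush\bQ)$ and similarly $W^{\hF_\varepsilon}_c(\bP,\bQ) = W^{\cF^0_\varepsilon}_{c\circ(\hat g\times\hat g)}(\hat g^{-1}_{\#}\bP,\hat g^{-1}_{\#}\bQ)$. So the estimation error splits into two effects: (a) the pushforward measures $\gmpush\bP$ versus $\hat g^{-1}_{\#}\bP$ change, and (b) the effective cost $\tc = c\circ(g\times g)$ versus $c\circ(\hat g\times\hat g)$ changes. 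I would handle both by a uniform-continuity argument on compact sets. For ANM, $g = (I-f)^{-1}$, so $\|f_i - \hat f_i\|_\infty$ small (together with the structural triangularity of $I-f$) gives $\|g - \hat g\|_\infty$ and $\|g^{-1} - \hat g^{-1}\|_\infty$ small, uniformly on the compact supports; this is the first lemma to establish, propagating the error through the DAG layer by layer in topological order.

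Next I would bound effect (a). Since $\bP$ has compact support $K$, both $\gmpush\bP$ and $\hat g^{-1}_{\#}\bP$ are supported in a common compact set (for $\delta$ small), and $\|g^{-1}-\hat g^{-1}\|_{\infty,K}$ small implies the two pushforwards are close in, say, the $W_1$ (or $W_p$) distance on the exogenous space. Then I would invoke a stability result for the entropic/relaxed transport functional in its marginals: the map $(\mu,\nu)\mapsto W^{\cF^0_\varepsilon}_{\tc}(\mu,\nu)$ is continuous with respect to weak convergence of $\mu,\nu$ when $\tc$ is continuous and the supports stay in a fixed compact set — this follows from the same compactness argument used to prove Proposition~\ref{prp:converge} (the feasible set of plans is weakly compact, the objective is lower-semicontinuous, and an explicit near-optimal plan can be perturbed). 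For effect (b), I would note that on the fixed compact set, $\|c\circ(g\times g) - c\circ(\hat g\times\hat g)\|_\infty \le \omega_c(\text{const}\cdot\|g-\hat g\|_\infty)$ where $\omega_c$ is the modulus of continuity of $c$; since both $c^p$ terms enter the objective linearly under any fixed plan, a uniform cost perturbation of size $\eta$ changes $\inf_\pi \{\langle \tc^p,\pi\rangle + \varepsilon\,\mathrm{KL}(\pi\|\pio)\}$ by at most $\eta$ (the regularizer is cost-independent, and the feasible set $\Pi(\tP,\tQ)$ is unchanged). Combining (a) and (b) and using that $t\mapsto t^{1/p}$ is uniformly continuous on the relevant bounded interval yields $|W^{\cF_\varepsilon}(\bP,\bQ) - W^{\hF_\varepsilon}(\bP,\bQ)|\le\epsilon$ once $\|f_i-\hat f_i\|_\infty$ is small enough.

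The main obstacle I anticipate is controlling the inversion step cleanly: going from $\|f_i-\hat f_i\|_\infty$ small to $\|g^{-1}-\hat g^{-1}\|_\infty$ (equivalently $\|(I-\hat f)^{-1} - (I-f)^{-1}\|_\infty$) small requires that $I-f$ and $I-\hat f$ are both uniformly invertible with controlled moduli — i.e., that small sup-norm perturbations of the triangular map $f$ do not blow up under inversion. On a compact domain this holds by an inductive argument along the causal order (each coordinate $g_i$ depends only on $u_i$ and the already-constructed $g_{\pa(i)}$, and continuity of $f_i$ gives uniform continuity on the compact set), but making the constants explicit and ensuring the perturbed supports remain inside a common compact set is the delicate bookkeeping. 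A secondary subtlety is that the relaxed objective involves $\mathrm{KL}(\pi\|\pio)$ with $\pio$ itself depending on $\pi$ (and implicitly on $g$); I would use the exogenous-space reformulation~\eqref{eq:dconvex}, where the regularizer $H(\tpi) - \sum_i H_i(\tpi)$ depends only on $\tpi$ and not on $g$ or the cost, so the $g$-dependence is fully confined to the cost term and the marginals $\tP,\tQ$ — which is exactly what makes the two-effect decomposition above go through.
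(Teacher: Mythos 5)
Your plan starts from the right place—transferring to the exogenous space via Proposition~\ref{pro:rel2exo}, exploiting the triangular ANM structure to control $\|g^{-1}-\hat g^{-1}\|_\infty$, and observing that the regularizer $H(\tpi)-\sum_i H_i(\tpi)$ is free of $g$—but the decomposition into ``marginal perturbation (a)'' plus ``cost perturbation (b)'' leaves a genuine gap at step (a). Your bound for (b) works precisely because the feasible set $\Pi(\tP,\tQ)$ and the regularizer are held fixed while only the cost moves, so an envelope-type argument gives a clean $\eta$-bound. For (a), however, you change the marginals and therefore the feasible set, and you appeal to ``continuity of $(\mu,\nu)\mapsto W^{\cF^0_\varepsilon}_{\tc}(\mu,\nu)$ with respect to weak convergence, by the same compactness argument as Proposition~\ref{prp:converge}.'' That argument only gives lower-semicontinuity of the value (weak compactness of couplings plus lower-semicontinuity of the objective, Lemma~\ref{lem:kldiv}), not continuity. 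The multi-information penalty $\KL{\tpi}{\bigotimes_i\marg_i(\tpi)}$ is only l.s.c.\ in the weak topology, and it is not at all clear that you can perturb a near-optimal plan to match the new marginals without paying an uncontrolled entropy cost; this is exactly the delicate part, and your sketch does not supply an argument for it. Your stated ``main obstacle'' (uniform control of the inverse $g^{-1}\mapsto\hat g^{-1}$) is actually the easy part under the ANM assumption; the hard part is the one you treat as a black box.

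The paper's proof avoids the marginal-perturbation question altogether by running an envelope argument in the \emph{feature} space: it compares the two objectives at the \emph{same} plan $\pi^2$ (resp.\ $\pi^1$), so the feasible set never changes, and what remains is (i) a cost difference $|\mathbb{E}_{\pi^2}[c_1]-\mathbb{E}_{\pi^2}[c_2]|\le 2\|g^{-1}-\hat g^{-1}\|_\infty$, which is elementary, and (ii) a difference of KL terms $|\KL{\pi^2}{\pi^2_{\so_1}}-\KL{\pi^2}{\pi^2_{\so_2}}|$. Pushing $\pi^2$ to the exogenous space via $g^{-1}\times g^{-1}$ and via $\hat g^{-1}\times\hat g^{-1}$, Lemmas~\ref{lem:kkpush} and~\ref{lem:kldec} turn this into an entropy comparison between two pushforwards of a fixed plan by nearby triangular bijections. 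The joint entropies agree exactly because the Jacobians of unit lower-triangular maps have determinant one (Lemma~\ref{lem:pushentropy}), and the marginal entropies are close by a dedicated continuity-of-entropy-under-small-additive-noise result (Proposition~\ref{prp:entropy}). In effect, the paper replaces your unproven ``stability in the marginals'' step with a change of variables that keeps the marginals fixed and reduces the problem to entropy continuity along a one-parameter family of near-identity maps—which is the step that actually requires new analysis and is where the compact-support and density assumptions enter. If you want to rescue your decomposition, you would need to prove a marginal-stability lemma for the mutual-information-regularized transport value, which is at least as hard as what the paper proves; the envelope route is the shorter path.
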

%

\section{Concentration Inequality In Presence of SCM}
Determining the ambiguity set radius (\textbf{P2}) in DRO is crucial for balancing robustness and sample sensitivity. A smaller radius increases sensitivity to noise and reduces robustness, while a larger one enhances robustness but may overlook the true distribution's behavior. The optimal choice involves estimating the magnitude of $W(\bP^N,\bP)$.

Numerous studies explore the concentration of $W(\bP^N,\bP)$. For example, \cite{fournier2015rate} provides convergence bounds, \cite{dedecker2019behavior} examines dependence conditions, and \cite{weed2019sharp} offers sharp inequalities. Below, we adapt \cite[Theorem 1]{fournier2015rate} and tailor the results to our non-metric cost function.
\begin{figure*}
    \centering
    \includegraphics[width=1\linewidth]{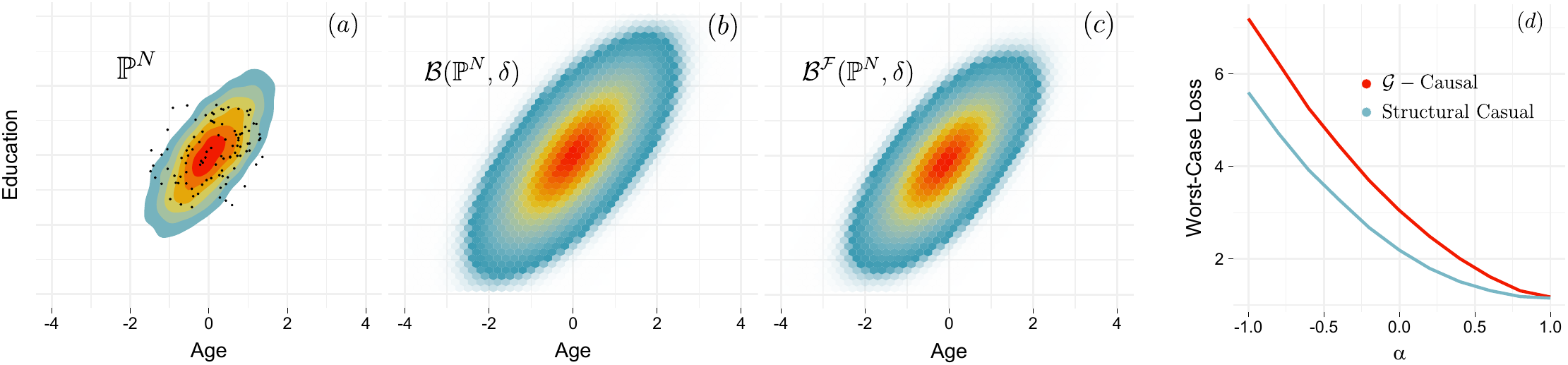}
    \caption{(a) Empirical estimation of true probability distribution for the model $\E = \A + \U_\E$ (Age and Education is normalized). (b) Ambiguity set obtained via classical OT with radius 0.5. (c) Structural causal ambiguity set with radius 0.5. (d) Comparing Worst-case losses for the structural causal and $\cG$-causal DAS with radius $\delta = 0.5$ and function $\psi(x,y) = (x - y)^2$.\\}
    \label{fig:simulation}
\end{figure*}
\begin{proposition}[\textbf{Concentration Inequality}]
\label{prp:concentration}
Let $\bP^\otimes= \bP\otimes \bP\otimes \cdots$ for the product measure on $\cX^N$, the space of all sequences of observations and
Let $\bP \in \PX$ compactly support and satisfy Assumption~\ref{asm:space}. Then for every $N \geq 1$ and any confidence level $1 - \varepsilon$ with $\varepsilon \in (0,1)$, there exists $\delta$ that holds.
\begin{align*}
\bP^ \otimes (\bP \in\cB_p(\hPN,\delta)) \ge 1 - \varepsilon,
\end{align*}
where the radius $\delta(N, \varepsilon)$ satisfies:
\begin{align}
\label{eq:upper_estimate}
\delta(N,\varepsilon) \lesssim \left(N \ln(C \varepsilon^{-1}) \right)^{-1/\max\{d,2p\}},
\end{align}
where $C$ is constant depends only to $\bP$ and $d$ dimension of feature space. Moreover, if $d \geq 2p$ and $\bP$ have a density function such that its support is compact convex, then for every $t<d$,
\begin{equation}
\label{eq:lower_estimate}
\delta(N,\varepsilon) \gtrsim N^{-1/t},
\end{equation}
\end{proposition}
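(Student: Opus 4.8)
The plan is to transfer the classical Fournier--Guillin concentration bound from the feature space to the exogenous space, exploit the product structure there, and then push the estimate back. By Assumption~\ref{asm:space}, the reduced-form mapping $g$ is a bijection, and since $\bP \in \PFX$ we have $\gmpush\bP = \BOIN \tP_i$ with each $\tP_i \in \cP(\cU_i)$ having compact support (as $\bP$ is compactly supported and $g$ is continuous with continuous inverse on the relevant compact set). The key reduction is Prop.~\ref{prp:amball} (its $\cF_\varepsilon$-free analogue for the plain structural causal ball): $\cB^\cF_c(\hPN,\delta) = \gpush\cB^{\Fi}_{\tc}(\gmpush\hPN,\delta)$, and crucially $\gmpush\hPN = \widehat{(\gmpush\bP)}{}^N$ is exactly the empirical measure of the pushed-forward i.i.d.\ sample $u^i = \gm(x^i)$. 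Hence the event $\{\bP \in \cB^\cF_c(\hPN,\delta)\}$ is equivalent to $\{\gmpush\bP \in \cB^{\Fi}_{\tc}(\widehat{(\gmpush\bP)}{}^N,\delta)\}$, i.e.\ to $\WF$-type closeness in the exogenous space, which by Def.~\ref{def:fplan} (with $f_i \equiv 0$) reduces to closeness of each marginal: $\WFPQ[\gmpush\bP][\widehat{(\gmpush\bP)}{}^N]^p = \sum_{i=1}^n W_{\tc_i}(\tP_i, \widehat{\tP_i}{}^N)^p$ after optimizing the factored plan coordinate-wise.

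Next I would apply \cite[Theorem~1]{fournier2015rate} to each coordinate $i$ separately: for compactly supported $\tP_i$ on $\cU_i \subseteq \bR^{d_i}$, one gets exponential concentration $\tP_i^\otimes\big(W_{\tc_i}(\widehat{\tP_i}{}^N, \tP_i) > r\big) \le C_i \exp(-c_i N r^{\max\{d_i, 2p\}})$ for $r$ in a bounded range, where the rate exponent is governed by the \emph{per-coordinate} dimension $d_i$ rather than the ambient $d = \sum_i d_i$ --- this is precisely where the curse of dimensionality is broken. A union bound over $i \in [n]$, combined with the identity $\WF[\cdot]^p = \sum_i W_{\tc_i}(\cdot)^p$, converts these into a single bound of the form $\bP^\otimes(\WF(\hPN,\bP) > \delta) \le \sum_i C_i \exp(-c_i N (\delta/n^{1/p})^{\max\{d_i,2p\}})$; setting the right-hand side equal to $\varepsilon$ and solving for $\delta$ yields $\delta(N,\varepsilon) \lesssim (N\ln(C\varepsilon^{-1}))^{-1/\max_i\{d_i, 2p\}}$. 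To match the stated form with $d$ in the exponent I would note that either the paper intends $d = \max_i d_i$ (the relevant effective dimension), or, in the homogeneous case $d_i$ all equal, $\max_i d_i$ coincides with a natural notion of dimension; in any event $\max\{d,2p\}$ in \eqref{eq:upper_estimate} should be read as this effective per-coordinate exponent, and I would state it that way. The lower bound \eqref{eq:lower_estimate} when $d \ge 2p$ follows from the matching lower bound in \cite{fournier2015rate} (or \cite{weed2019sharp}) applied to a single coordinate whose law has a density on a compact convex support: no coupling can beat the one-dimensional-in-$i$ rate, so $\WF(\hPN,\bP) \gtrsim W_{\tc_i}(\widehat{\tP_i}{}^N,\tP_i) \gtrsim N^{-1/t}$ for every $t < d$ with positive probability.

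The main obstacle is bookkeeping around the \textbf{non-metric cost}: Fournier--Guillin is stated for $W_p$ induced by a genuine metric, whereas here $\tc_i$ is only a norm on $\cU_i$ (fine) but the overall $c$ --- and hence $\tc = c\circ(g\times g)$ after the nonlinear $g$ --- need not be a metric on $\cX$, which is why the statement is phrased directly in the exogenous space. I would handle this by doing \emph{all} the concentration analysis in the exogenous space, where $\tc$ \emph{does} decompose into norms $\tc_i$ by Assumption~\ref{asm:space}(iii), so that \cite[Theorem~1]{fournier2015rate} applies verbatim per coordinate; the only place the feature-space cost enters is through the definition of the ball, and that is already reconciled by Prop.~\ref{prp:amball}. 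A secondary technical point is justifying compact support of the pushforwards $\tP_i$ and the continuity of $g, \gm$ on the sample's (random but almost surely bounded) range --- this is immediate from Assumption~\ref{asm:space}(i) since an ANM reduced-form map $g = (I-f)^{-1}$ with continuous $f$ is a homeomorphism, so it does not pose a real difficulty.
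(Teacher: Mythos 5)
Your argument proves the wrong proposition. Proposition~\ref{prp:concentration} concerns the \emph{classical} Wasserstein ball $\cB_p(\hPN,\delta)$ around the ordinary empirical measure $\hPN=\tfrac1N\sum_i\delta_{x^i}$, and its rate exponent is governed by the full ambient feature dimension $d$: this is precisely the baseline result exhibiting the curse of dimensionality, against which Proposition~\ref{prp:concentration2} and Corollary~\ref{cor:breaks} are then contrasted. You instead factor the coupling coordinate-wise in the exogenous space, apply Fournier--Guillin per coordinate, and land on $\max_i\{d_i,2p\}$ in the exponent --- but that is the content of Proposition~\ref{prp:concentration2}, not of the statement at hand. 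You flag the discrepancy yourself and propose reading $d$ as $\max_i d_i$; the paper does not do this: $d$ in \eqref{eq:upper_estimate} is explicitly the ambient dimension, and $d^\ast=\max_i d_i$ is introduced separately, one proposition later. The paper's actual proof here is much more modest: since $c$ need not be a metric, it uses Proposition~\ref{prp:amball} once to transfer the ball to the exogenous space (where $\tc$ is a norm by Assumption~\ref{asm:space}(iii)), applies an off-the-shelf $d$-dimensional concentration bound in one shot with no coordinate decomposition, and for the lower bound invokes the Weed--Bach result after noting that a density with compact convex support has Wasserstein dimensions $d_\ast=d^\ast=d$.

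Even read as a proof of Proposition~\ref{prp:concentration2}, there is a second gap. You apply the factorization $\WF(\bP,\bQ)^p=\sum_i W_{\tc_i}(\bP_i,\bQ_i)^p$ to the pair $\bigl(\gmpush\bP,\ \gmpush\hPN\bigr)$. But $\gmpush\hPN=\tfrac1N\sum_i\delta_{u^i}$ is the \emph{joint} empirical of the pushed-forward sample; it is generically not a product measure on $\cU$, hence it does not lie in $\cP^{\Fi}(\cU)$, and the set of $\Fi$-compatible couplings between it and the genuine product $\gmpush\bP$ can be empty, making $\WF$ infinite. This is exactly why the paper introduces the tensorized empirical $\hPT=\gpush\bigl(\bP^N_{\U_1}\odo\bP^N_{\U_n}\bigr)$, the pushforward of the \emph{product of the marginal empiricals}, which is a product measure by construction; the per-coordinate Fournier--Guillin bound and the union bound over $i\in[n]$ then do go through, but only for $\hPT$, not for $\gmpush\hPN$.
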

Prop.~\ref{prp:concentration} shows that in high-dimensional spaces, the radius decreases slowly at a rate of $N^{-1/d}$, which is non-improvable. Thus, merely increasing the sample size offers limited improvement in approximating the true distribution or shrinking the ambiguity ball. To overcome this, we exploit the independence of components in the exogenous space, enabling a more refined ambiguity set than traditional Wasserstein sets. This approach mitigates the curse of dimensionality and ensures performance regardless of dimension $d$.

The key idea of the theorem is to leverage the causal structure to construct the empirical distribution instead of directly constructing $\bP^N$. Given samples $(x^i)_{i}$, we first derive the corresponding exogenous samples $(u^i)_{i}$ and then construct the empirical distribution $\bP_{\U_k}^N := \frac{1}{N} \sum_{i=1}^N \delta_{u_k^i}$ for each exogenous component. By independence assumption, $\hPN_\U$ is obtained as $\hPN_\U = \bP_{\U_1}^N \odo \bP_{\U_n}^N$. Finally, by mapping back to the feature space, we construct $\hPT = g_{\#}\hPN_\U$.

\begin{proposition}
\label{prp:concentration2}
Let $\bP \in \PX$ compactly support and satisfy the assumption~\ref{asm:space}. Then for every $N \geq 1$ and any confidence level $1 - \varepsilon$ with $\varepsilon \in (0,1)$, there exists $\delta$ that holds.
\begin{align*}
\bP^ \otimes (\bP \in\cB^{\cF}(\hPT,\delta)) \ge 1 - \varepsilon,
\end{align*}
where the radius $\delta(N, \varepsilon)$ satisfies:
\begin{align}
\label{eq:breack}
\delta(N,\varepsilon) \lesssim \left(N \ln(C n\varepsilon^{-1}) \right)^{-1/\max\{d^*,2p\}},
\end{align}
where $d^* = \max_{i=1}^n d_i$ and $C$ is constant depends only to $\bP$ and $d_i$. 
\end{proposition}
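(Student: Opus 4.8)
The plan is to push everything to the exogenous space, where Assumption~\ref{asm:space} makes the noise coordinates independent and the cost additively separable, and thereby turn one $d$-dimensional concentration problem into $n$ low-dimensional ones, one per exogenous coordinate, which are then recombined by a union bound. First I would record the structural facts about the centre: since $\hPT = g_{\#}\hPN_\U$ and $\hPN_\U = \BOIN \bP^N_{\U_i}$ is a product measure, we have $g^{-1}_{\#}\hPT = \hPN_\U$, so $\hPT \in \PFX$ and $W^\cF_c(\hPT,\bP)$ is well defined; and since $\bP \in \PFX$, its exogenous pushforward factors as $g^{-1}_{\#}\bP = \BOIN \tP_i$ with each $\tP_i$ compactly supported (image of a compact set under the continuous $g^{-1}=I-f$). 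Here $\bP^N_{\U_i} = \marg_i(g^{-1}_{\#}\hPT)$ is exactly the empirical measure of $N$ i.i.d.\ samples from $\tP_i$.

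The crucial step is the decomposition
\[
W^\cF_c(\hPT,\bP)^p \;=\; \sum_{i=1}^n W_{\tc_i,p}\!\big(\bP^N_{\U_i},\,\tP_i\big)^p ,
\]
where $W_{\tc_i,p}$ is the ordinary $p$-Wasserstein distance on $\cU_i \subseteq \bR^{d_i}$ for the norm $\tc_i$. This follows by the same computation that underlies the reformulation in Eq.~\ref{eq:dconvex}: by Definition~\ref{def:fplan} every $\cF$-compatible plan between the two measures is, after the $g^{-1}\times g^{-1}$ pushforward, a product $\BOIN \tpi_i$ with $\tpi_i \in \Pi(\tP_i, \bP^N_{\U_i})$; by Assumption~\ref{asm:space}(iii) the pushed-forward cost satisfies $\tc^p(u,w) = \sum_i \tc_i(u_i,w_i)^p$, so the transport cost of such a plan is $\sum_i \expt{\tpi_i}[\tc_i^p]$ and the infimum splits coordinatewise. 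Next I would apply the concentration bound of Proposition~\ref{prp:concentration} (equivalently the Fournier--Guillin estimate, applicable because each $\tc_i$ is a genuine norm and $\tP_i$ has compact support, hence all moments finite) separately to each $\tP_i$ at confidence $1-\varepsilon/n$, obtaining radii $\delta_i$ with $\bP^\otimes(W_{\tc_i,p}(\bP^N_{\U_i},\tP_i) > \delta_i) \le \varepsilon/n$ and $\delta_i \lesssim (N\ln(C_i\,n\,\varepsilon^{-1}))^{-1/\max\{d_i,2p\}}$. A union bound over $i \in [n]$ then ensures all $n$ estimates hold at once with probability at least $1-\varepsilon$, and on that event the displayed decomposition gives
\[
W^\cF_c(\hPT,\bP) \;\le\; \Big(\sum_{i=1}^n \delta_i^p\Big)^{1/p} \;\le\; n^{1/p}\max_i \delta_i .
\]
Since $t \mapsto (N\ln(\cdot))^{-1/t}$ is increasing in $t$, the slowest-decaying $\delta_i$ is the one whose exponent $\max\{d_i,2p\}$ is largest, i.e.\ $\max\{d^\ast,2p\}$ with $d^\ast = \max_i d_i$; absorbing $n^{1/p}$ and $\max_i C_i$ into a single constant $C$ yields $\delta \lesssim (N\ln(C\,n\,\varepsilon^{-1}))^{-1/\max\{d^\ast,2p\}}$, and hence $\bP \in \cB^\cF(\hPT,\delta)$ with probability at least $1-\varepsilon$.

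The step I expect to be the main obstacle is making the decomposition rigorous: one must verify that the infimum over $\cF$-compatible plans genuinely factors into independent infima over the $\Pi(\tP_i,\bP^N_{\U_i})$ — which rests on the fact that, for vanishing structural equations, compatible plans in the exogenous space are precisely the product plans — and that no cross terms survive, for which the additive separability of $\tc^p$ in Assumption~\ref{asm:space}(iii) is exactly what is needed; one should also confirm that $g^{-1}$-regularity makes each $\tP_i$ compactly supported, so that the exponential Fournier--Guillin rate (rather than a polynomial one) is available per coordinate. A secondary but worthwhile check is the bookkeeping on the union bound: splitting the failure budget into $n$ pieces of size $\varepsilon/n$ only replaces $\ln(C\varepsilon^{-1})$ by $\ln(Cn\varepsilon^{-1})$ inside the logarithm and leaves the polynomial-in-$N$ exponent untouched, which is precisely why the curse of dimensionality (dependence on $d=\sum_i d_i$) is downgraded to dependence on $d^\ast=\max_i d_i$.
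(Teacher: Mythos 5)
Your proposal is correct and follows essentially the same route as the paper: push to the exogenous space, use the product structure of $\cF$-compatible plans together with the additive separability of $\tc^p$ in Assumption~\ref{asm:space}(iii) to split $W^\cF_c(\hPT,\bP)^p$ into a sum of per-coordinate Wasserstein distances (what the paper packages as Lemma~\ref{lem:ambiguity}, Proposition~\ref{prp:independent}, and Corollary~\ref{cor:prwas}), apply the per-coordinate concentration estimate at confidence $1-\varepsilon/n$, and close with a union bound, noting that the slowest exponent is $\max\{d^\ast,2p\}$. The only stylistic difference is that you re-derive the coordinatewise decomposition inline rather than invoking it as a separate lemma.
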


We conclude this section with the following corollary, which demonstrates that the convergence rate in structural causal models does not depend on the dimension of the space.

\begin{corollary}
\label{cor:breaks}
If $d_i = 1$ and $d \geq 2p + 1$, then $W(\hPN, \bP) \lesssim N^{-1/d}$, however $W^\cF(\hPT, \bP) \lesssim N^{-1/2p}$. This implies that the dependence is only on $\bP$, allowing us to break the curse of dimensionality.
\end{corollary}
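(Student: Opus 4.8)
The plan is to read the Corollary as a direct consequence of Proposition~\ref{prp:concentration} (the classical bound) and Proposition~\ref{prp:concentration2} (the structural causal bound), specialized to the hypotheses $d_i=1$ for all $i$ and $d=\sum_i d_i=n\geq 2p+1$. First I would record what Proposition~\ref{prp:concentration} gives in this regime: since $d\geq 2p$, we have $\max\{d,2p\}=d$, so the upper estimate in Eq.~\ref{eq:upper_estimate} reads $\delta(N,\varepsilon)\lesssim (N\ln(C\varepsilon^{-1}))^{-1/d}$, and, absorbing the logarithmic factor and the $\varepsilon$-dependence into the constant hidden by $\lesssim$ (equivalently, fixing a confidence level and letting $C$ depend only on $\bP$), this yields $W(\hPN,\bP)\lesssim N^{-1/d}$. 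Moreover the lower bound Eq.~\ref{eq:lower_estimate} shows this exponent cannot be improved beyond $N^{-1/t}$ for any $t<d$ when $\bP$ has a density with compact convex support, which justifies the phrase ``non-improvable'' and motivates why the curse of dimensionality genuinely bites for the classical empirical measure.

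Next I would invoke Proposition~\ref{prp:concentration2}. With $d_i=1$ for every $i$, we get $d^\ast=\max_i d_i=1$, hence $\max\{d^\ast,2p\}=2p$ (using $p\geq 1$, so $2p\geq 2>1$). Substituting into Eq.~\ref{eq:breack} gives $\delta(N,\varepsilon)\lesssim (N\ln(Cn\varepsilon^{-1}))^{-1/(2p)}$, and again suppressing the logarithmic and $\varepsilon$-factors into the constant (now depending on $\bP$ and $n$) produces $W^\cF(\hPT,\bP)\lesssim N^{-1/(2p)}$. Comparing the two exponents: since $d=n\geq 2p+1>2p$, we have $1/d<1/(2p)$, so the structural causal radius shrinks strictly faster, and crucially the exponent $1/(2p)$ no longer depends on the ambient dimension $d$ — only the hidden constant does. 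This is exactly the ``dimension-free order'' claimed, and it is what it means to break the curse of dimensionality.

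The only real content beyond plugging in is being careful about what is meant by ``$\lesssim$'' — i.e., which quantities are treated as fixed (the confidence level $1-\varepsilon$, the model $\bP$, the number of components $n$) and which is the asymptotic variable ($N\to\infty$). I would state this explicitly at the outset so the logarithmic factors $\ln(C\varepsilon^{-1})$ and $\ln(Cn\varepsilon^{-1})$ are legitimately absorbed; since $\ln N$-type corrections only help (they make $\delta$ smaller), dropping them is harmless for an upper bound of the stated polynomial order. The comparison of the two displayed rates is then immediate from $1/d<1/(2p)$.

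I do not anticipate a genuine obstacle here — the corollary is a corollary. If anything, the one subtlety worth a sentence is confirming that the hypotheses of Proposition~\ref{prp:concentration2} are met: $\bP$ compactly supported and satisfying Assumption~\ref{asm:space} are assumed, and $d_i=1$ is given; and for the lower/sharpness side of the classical rate one additionally needs $\bP$ to have a density with compact convex support, which is why the corollary phrases the conclusion as an upper bound $W^\cF(\hPT,\bP)\lesssim N^{-1/2p}$ together with the observation (inherited from Proposition~\ref{prp:concentration}) that the classical $N^{-1/d}$ cannot be beaten. So the ``hard part,'' to the extent there is one, is purely expository: making the absorption of constants and the role of the hypotheses transparent rather than any new estimate.
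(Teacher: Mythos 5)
Your proposal is correct and takes essentially the same approach as the paper: specialize Proposition~\ref{prp:concentration2} with $d^\ast=\max_i d_i=1\le 2p$ to get $\max\{d^\ast,2p\}=2p$ and hence $W^\cF(\hPT,\bP)\lesssim N^{-1/2p}$, and specialize Proposition~\ref{prp:concentration} with $d\ge 2p+1>2p$ to get $\max\{d,2p\}=d$ and hence $W(\hPN,\bP)\lesssim N^{-1/d}$, absorbing the $\ln(C\varepsilon^{-1})$ and $\ln(Cn\varepsilon^{-1})$ factors into the constant once $\varepsilon$ and $n$ are fixed. The only minor quibble is your passing remark about ``$\ln N$-type corrections'' --- there are none; the logarithmic factor is a constant in $N$ --- but this does not affect the argument.
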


\begin{table}[h!]
\centering
\begin{tabular}{lcccc}
\hline
$\psi(x,y)$ & $\delta = 0.1$ & $\delta = 0.2$ & $\delta = 0.3$ & $\delta = 0.4$ \\
\hline
$\abs{x-y}$    & 2.70 &  6.35 &  9.95 & 13.6 \\
$(x-y)^2$   &  5.54 & 12.9 &  22.0 &  29.6\\
$\abs{x+y}$    &  0.722 & 0.730 & 0.830 & 1.05 \\
$(x+y)^2$   &  1.10 &  1.32 &  1.68 &  1.86\\
$x^2+y^2$  &   2.46 &  5.15 &  7.53 & 11.2\\
\hline
\end{tabular}
\caption{Shows the additional percentage of worst-case loss $\sup_{\bQ \in \BGPD}\bE[\psi]$ relative to $\sup_{\bQ \in \BFPD}\bE[\psi]$ for different ambiguity radius $\delta$ values and functions. A positive value indicates the fact $\BFPD \subseteq \BGPD$.\\}
\label{tab:worst_loss}
\end{table}

\section{Experimental Evaluation}
\label{sec:experimental}
As our method is a novel variant of OT, it is applicable in any scenario where OT or Wasserstein distance has been previously employed, particularly when the data model is derived from causal structures. Fields such as transfer learning, reinforcement learning, algorithmic fairness, generative adversarial networks, and clustering (see additional applications in \cite{montesuma2023recent,khamis2024scalable}) could benefit from our approach. Therefore, a comprehensive numerical demonstration of its applications requires further independent and follow-up work.

In this section, we demonstrate that even with the simplest causal structures in data, different OT variants can produce varying results. Consider a super simple model involving two demographic variables, Age ($\A$) and Education ($\E$), which are common features in real datasets with a known causal relationship. We model this using the simple linear SCM: $\A := \U_\A; \E := \alpha \A + \U_\E$, where $\U_\E$ and $\U_\A$ are standard normal distributions (as we normalize age and education in our data). We simulate this model for varying $\alpha \in [-1,1]$ and compute the classical, structural causal and $\cG$-causal DAS for different radii $\delta \in \{0.1, 0.2, 0.3, 0.4\}$ to illustrate the differences between OT variants.
To quantify the difference of ambiguity sets, we compute the worst-case loss (Eq.~\ref{eq:worstloss}) for the functions \(\psi(x,y) = \abs{x-y}\), \((x-y)^2\), \(\abs{x+y}\), \((x+y)^2\), and \(x^2+y^2\).

To compute structural causal OT, we use Prop.~\ref{prp:amball}, which reformulates the model into a CO-OT~\cite{tran2021factored} problem. The structural causal distance is then calculated using the COOT Python package available on ~\cite{coot2023}. We generated 10,000 distributions to explore the structural causal ambiguity set.

We have a challenge in computing the $\cG$-causal distance due to the lack of direct computational methods. To overcome this, we randomly generated 4-dimensional Gaussian plans that preserve the $\cG$-causal structure, producing 10,000 distributions as samples for the $\cG$-causal DAS. For generating the classical OT DAS, tools like~\cite{namkoonglab_dro} are useful in computing the Wasserstein distance.

In Fig.~\ref{fig:simulation}(a), the empirical density is displayed. We compare our DAS with classical OT ambiguity sets by generating 1000 points from each probability measure within the DAS, aggregating the points, and plotting a heatmap. In part (b), the classical OT ambiguity set is depicted, which is larger than the structural causal DAS. Unlike classical OT, which expands in all directions disregarding causal structure, the structural causal DAS maintains causal relations. 

As shown in Table~\ref{tab:worst_loss}, the worst-case loss is consistently lower for the structural causal DAS compared to the $\cG$-causal DAS (supports Prop.\ref{prp:inclusion}). Notably, in scenarios like \((x-y)^2\), as illustrated in Fig.~\ref{fig:simulation}(d), the loss difference is significant because the $\cG$-causal DAS does not maintain causal links as effectively as the structural causal DAS. This alignment in structural causal DAS reduces loss when designing the ambiguity set around causal structural equations.
\section{Discussion and Limitations}

The main focus of this work is to establish a theoretical framework for a new variant of OT that incorporates not only the causal graph but also the magnitude of relationships between features. We address key aspects (\textbf{P1} and \textbf{P1}) of designing the new DAS and demonstrate its advantages compared to previous definitions. In the numerical section, we illustrate the impact of our method, even with the simplest causal structure.

To demonstrate the advantages of our method, further independent work focusing on real-world applications, including transfer learning, algorithmic fairness, GANs, etc., is essential to complete the theoretical aspects of our research.

To enhance our method for real-world problems, it is essential to establish a strong duality theorem to convert the DRO problem into a more computationally tractable form, which is a focus of our future work. We also aim to extend these results to general SCM models and relax our assumptions.

\clearpage
\bibliography{aaai25}

\clearpage
\appendix
\section{Appendix}
\label{sec:appendix}
\noindent\textbf{Notation.}
In this work, random variables are in bold (e.g., $\X$), their probability spaces in calligraphic letters (e.g., $\cX$), and instances in regular letters (e.g., $x$). Probability measures on $\cX$ are denoted by $\PX$, and individual measures by blackboard bold letters (e.g., $\bP$). The notation $f(n) \lesssim g(n)$ means there exists a constant $C$ such that $f(n) \leq Cg(n)$ for all $n$. We use $[n]$ to denote $\{1, \dots, n\}$.

\subsection{Supplementary Preliminary Knowledge}

\begin{definition}[\textbf{Pushforward Measure}]
\label{def:pushforwad}
Let $\bP\in\probSpace{\cX}$ and $f:\cX\to\mathcal{Y}$. Then, the pushforward of $\bP$ via $f$ is denoted by $\pushforward{f}\bP$, and is defined as $(\pushforward{f}\bP)(A)\coloneqq \bP(f^{-1}(\mathcal A))$, for all Borel sets $ A\subset\mathcal{Y}$.
\end{definition}

\begin{definition}[\textbf{Coupling}]
\label{def:coupling}
A \emph{coupling} between two probability measures $\mathbb{P}$ and $\mathbb{Q}$ on measurable spaces $(\cX, \mathcal{A})$ and $(\mathcal{Y}, \mathcal{B})$, respectively, is a probability measure $\pi$ on the product space $(\cX \times \mathcal{Y}, \mathcal{A} \otimes \mathcal{B})$ such that
\begin{align*}
\pi(A \times \mathcal{Y}) = \mathbb{P}(A) \quad \text{and} \quad \pi(\cX \times B) = \mathbb{Q}(B)
\end{align*}
for all $A \in \mathcal{A}$ and $B \in \mathcal{B}$.
\end{definition}

\begin{definition}[\textbf{Semi-Metric}]
\label{def:semi-metric}
A \emph{semi-metric} on a set $X$ is a function $d: X \times X \to \bR$ satisfying the following conditions for all $x, y, z \in X$:

    (i) $d(x, x) = 0$ (identity of indiscernibles),
    
    (ii) $d(x, y) = d(y, x)$ (symmetry),
    
    (iii) $d(x, y) \geq 0$ (non-negativity).

However, a semi-metric is not required to satisfy the triangle inequality, i.e., it is not necessary that $d(x, z) \leq d(x, y) + d(y, z)$ for all $x, y, z \in X$.
\end{definition}

\subsection{Supplementary Numerical Method}

%

A DC programming problem involves minimizing (or maximizing) a function that can be expressed as the difference between two convex functions. Formally, a DC programming problem is given by:
\begin{equation*}
\min_{x \in \mathbb{R}^n} \{ f(x) = g(x) - h(x) \mid x \in C \},    
\end{equation*} 
where $g(x)$ and $h(x)$ are convex functions on $\mathbb{R}^n$ and $C$ is a convex set, representing the feasible region.

\begin{algorithm}
\caption{DC Algorithm}
\begin{algorithmic}[1]
\REQUIRE Convex functions $g, h : \mathbb{R}^n \to \mathbb{R}$, initial point $x^0 \in \mathbb{R}^n$
\ENSURE Approximate solution $x^*$

\STATE Initialize $x^0$
\REPEAT
    \STATE Compute a subgradient $y^k \in \partial h(x^k)$
    \STATE Solve the convex optimization problem:
    $$
    x^{k+1} = \arg\min_{x \in \mathbb{R}^n} \{ g(x) - \langle y^k, x \rangle \}
   $$
\UNTIL{convergence i.e., until $\|x^{k+1} - x^k\|$ is below a predefined threshold.}
\RETURN $x^k$
\end{algorithmic}
\label{alg:dcalgo}
\end{algorithm}

The Entropic Multi-Marginal OT (MMOT) problem extends the classical OT problem to multiple probability measures~\cite{benamou2015iterative,tran2021factored}. Given a set of probability measures $P_1, P_2, \dots, P_m$ and a cost function $C(x_1, x_2, \dots, x_m)$, the objective is to find a joint probability measure $\Pi$ that minimizes the cost function while matching the given marginals. The Sinkhorn algorithm solves this problem iteratively by normalizing the joint probability measure at each step to match the marginals. The algorithm begins with initial potentials for each marginal and updates them iteratively until convergence, ensuring that the resulting transport plan is optimal with respect to the entropic regularization.

\begin{algorithm}
\caption{Sinkhorn Algorithm for Entropic MMOT}\label{alg:sinkhorn}
\begin{algorithmic}[1]
\STATE \textbf{Input:} Probability measures $P_1, \dots, P_m$, cost tensor $C$, regularization parameter $\epsilon$.
\STATE \textbf{Initialize:} Set initial potentials $\phi_i^{(0)}(x_i) = 1$ for all $i$.

\FOR{$n = 0, 1, 2, \dots$ until convergence}
    \FOR{each marginal $i = 1, \dots, m$}
        \STATE Update potentials:
        \begin{align*}
        &\phi_i^{(n+1)}(x_i) = 
        \\&
        \frac{P_i(x_i)}{\sum_{\substack{x_1, \dots, x_{i-1}, \\ x_{i+1}, \dots, x_m}} \exp\left(-\frac{C(x_1, \dots, x_m)}{\epsilon}\right) \prod_{j \neq i} \phi_j^{(n)}(x_j)}
        \end{align*}
    \ENDFOR
\ENDFOR

\STATE \textbf{Output:} Optimal transport plan:
\begin{align*}
&\Pi^\ast(x_1, x_2, \dots, x_m) = 
\\&
\exp\left(-\frac{C(x_1, x_2, \dots, x_m)}{\epsilon}\right) \prod_{i=1}^{m} \phi_i^\ast(x_i)
\end{align*}
\end{algorithmic}
\end{algorithm}

\begin{figure*}
    \centering
    \includegraphics[width=\linewidth]{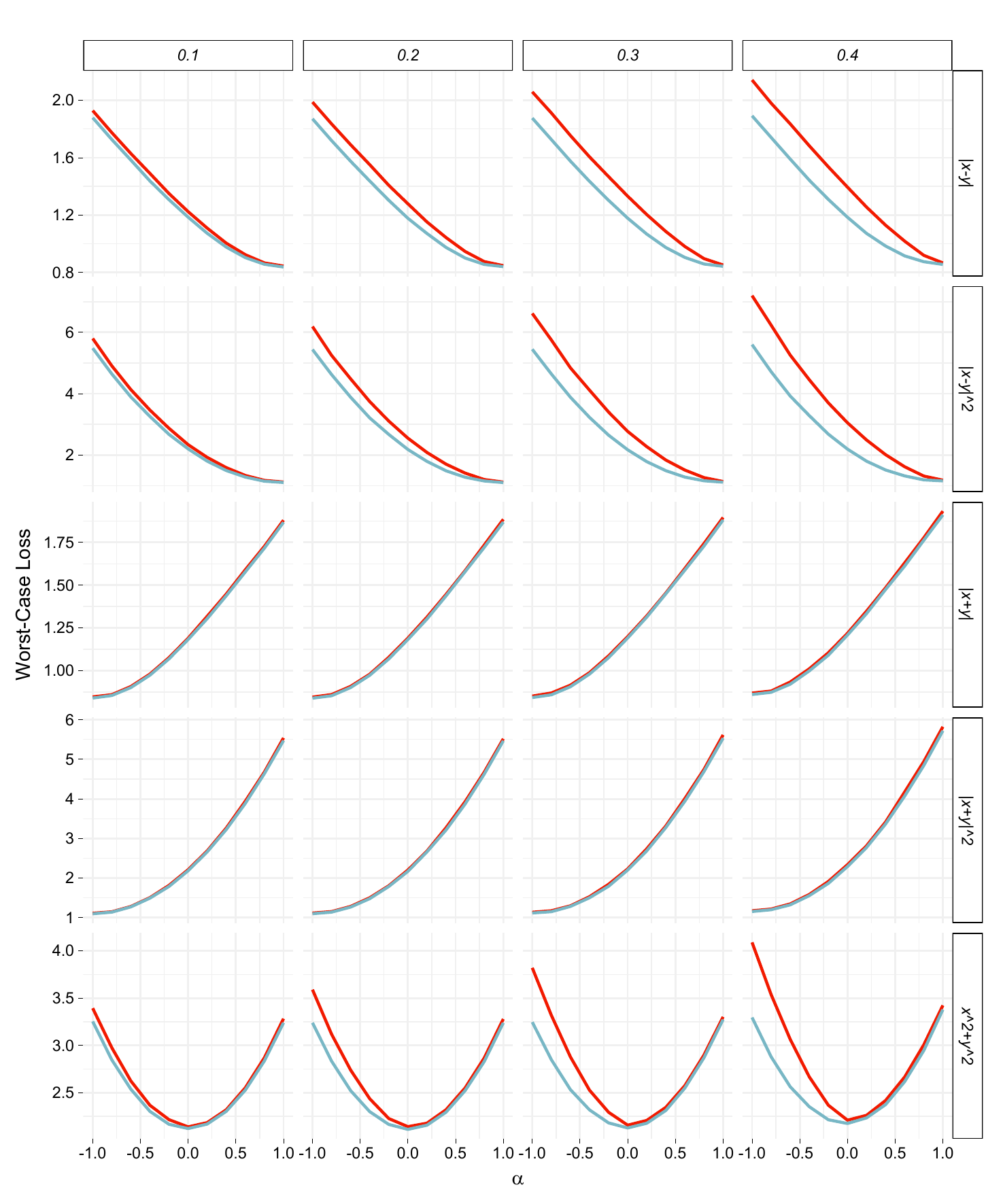}
    \caption{The worst-case loss values are shown for different levels of $\alpha$ across various functions. The red line represents the structural causal ambiguity set loss $\sup_{\bQ \in \BFPD}\bE[\psi]$, while the blue line represents the $\cG$-causal loss  $\sup_{\bQ \in \BGPD}\bE[\psi]$.}
    \label{fig:result}
\end{figure*}

\clearpage

\section{Supplementary: Proof Section}

\subsection{Proof of Proposition~\ref{prp:SCP}}

(i) Let $\bP \in \PFX$. By definition~\ref{def:Fcomp}, we can write:
\begin{align*}
    g^{-1}_{\#}\bP = \tP \text{ and } \tP = \BOIN \tP_i, \quad \tP_i \in \cP(\cU_i).
\end{align*}
Let $\U_i$ be the random variables in the space $\cU_i$. From the above equation, it follows that $\U_i$ are mutually independent. Now, define the SCM with the structural equations $\cF = \FS$. Define $X_1 = f_1(\U_1)$, because $\cF$ is ordered with respect to the causal graph, meaning $\pa(1) = \emptyset$. By induction, we can define the random variable $\X_i = f_i(\X_{\pa(i)}, \U_i)$, because $\X_{\pa(i)}$ was defined earlier. Now we have a new SCM $\{\X_i\}$ that satisfies $\X_i = f_i(\X_{\pa(i)}, \U_i)$. 

To complete the proof, it suffices to show that $\X \sim \bP$. Since the reduced-form mapping $g$ depends only on the structural equations $\cF$, we have $\X \sim g_{\#} \tP$. Since $g$ is invertible, we have $\X \sim g_{\#} \tP = g_{\#} (g^{-1}_{\#} \bP) = \bP$.

(ii) By the disintegration theorem~\citep[Theorem 3.4]{kallenberg1997foundations}, we have:
\begin{align*}
\mu(\d x_1, \dots, \d x_n) = \bigotimes_{i=1}^n \mu \brak{\d x_i \mid x_1, \dots, x_{i-1}}.
\end{align*}
Since $\U_i$ are mutually independent, considering the ordered index and the SCM representation from part (i), it follows that $\U_i \ind \X_{1:i-1}$. Given that $\X_i = f_i(\X_{\pa(i)}, \U_i)$, we obtain:
\begin{align*}
\mu \brak{\d x_i \mid x_1, \dots, x_{i-1}} = \mu \brak{\d x_i \mid x_{\pa(i)}},    
\end{align*}
which completes the proof.\hfill \qed


\subsection{Proof of Proposition~\ref{prp:opx_prop}}
(i) Since $\bP, \bQ \in \PFX$, Proposition~\ref{prp:SCP} states that there exist mutually independent random variables $\{\U_i\}_{i=1}^n$ and $\{\U'_i\}_{i=1}^n$ such that for all $i \in [n]$ we have:
\begin{align}
\label{eq:form}
    \X_i = f_i(\X_{\pa(i)}, \U_i), \quad  \Y_i = f_i(\Y_{\pa(i)}, \U'_i).
\end{align}

By the BGM assumption, we can write $\X = g(\U)$. From equations~\ref{eq:form}, there exists a function $l_i:\cX_{i} \times \cX_{\pa(i)} \to \cU_i$ such that $\U_i = l_i(\X_i, \X_{\pa(i)})$. 

Utilizing \citep[Lemma 3.4]{kallenberg1997foundations}, we can express $\U'_i = k_i(\U_i, \V_i)$, where $k_i$ is a measurable map and $\V_i$ is an $\bR^{d_i}$-valued random variable that is independent of $\U_i$. By substituting this into Eq.~\ref{eq:form}, we get:
\begin{equation}
\label{eq:form2}
    \begin{aligned}
    \Y_i &= f_i(\Y_{\pa(i)}, \U'_i) = f_i(\Y_{\pa(i)}, k_i(\U_i, \V_i))  
    \\&= f_i(\Y_{\pa(i)}, l_i(h_i(\X_i, \X_{\pa(i)}), \V_i)) 
    \\&= h_i(\X_{i}, \X_{\pa(i)}, \Y_{\pa(i)}, \V_i).
\end{aligned}
\end{equation}

Similarly, we can express $\X$ in terms of $\Y$ using equations
\begin{equation}
    \label{eq:form3}
    \X_i = h'_i(\Y_{i}, \Y_{\pa(i)}, \X_{\pa(i)}, \W_i)
\end{equation}
(ii) Let $(X,Y)\sim \pi$. Similarly to the proof of Proposition \ref{prp:SCP}, by using the disintegration theorem, $\pi$ can be factored as:
\begin{align*}
\pi(\d x_1, \d y_1, \dots, \d x_n, \d y_n) = \bigotimes_{i=1}^n \pi \brak{\d x_i, \d y_i \mid x_{1:i-1}, y_{1:i-1}}
\end{align*}

From the results of the first part of the proposition, we have:
\begin{align*}
\pi(\d x_1, \d y_1, \dots, \d x_n, \d y_n) = \bigotimes_{i=1}^n \pi \brak{\d x_i, \d y_i \mid x_{\pa(i)}, y_{\pa(i)}}
\end{align*}

Using equations~\ref{eq:form} and~\ref{eq:form2}, we get:
\begin{align*}
    & X_i \ind_{X_{\pa(i)}} (X_{1:i-1}, Y_{1:i-1}) \text{ and }
    \\&
    Y_i \ind_{X_i, X_{\pa(i)}, Y_{\pa(i)}} (X_{1:i}, Y_{1:i-1}).
\end{align*}

Therefore, $\pi$-almost surely, the following two equations hold:
\begin{align*}
& \pi(d x_i \mid x_{1:i-1}, y_{1:i-1}) = \mu(d x_i \mid x_{\pa(i)}),
 \\&
\pi \brak{\d y_i \mid x_{1:i}, y_{1:i-1}} = \pi \brak{\d y_i \mid x_i, x_{\pa(i)}, y_{\pa(i)}}.
\end{align*}

The last equations show that:
\begin{align*}
\pi(\d x_i, \d y_i \mid x_{\pa(i)}, &y_{\pa(i)}) \in 
\\& \Pi\left(\bP(\d x_i \mid x_{\pa(i)}), \bQ(\d y_i \mid y_{\pa(i)})\right).  
\end{align*} \hfill \qed

\subsection{Proof of Proposition~\ref{prp:omx_topol}}
(i) Consider the structural equations $\cF = \FS$. By the Kolmogorov extension theorem~\citep[Theorem 11.4]{kallenberg1997foundations}, there exist mutually independent random variables $\{\U_i\}_{i=1}^n$ over $\cU_i$. Let $\X_i = f_i(\X_{\pa(i)}, \U_i)$ and define the probability measure $\bP$ such that $\X \sim \bP$. By definition, $\bP \in \PFX$, demonstrating that $\PFX$ is non-empty. 

(ii) Since $\bP, \bQ \in \PFX$, by Proposition~\ref{prp:SCP}, there exist random variables $\X, \Y$ and independent random variables $\{\U_i\}_{i=1}^n$ and $\{\U'_i\}_{i=1}^n$ such that $\X \sim \bP$ and $\Y \sim \bQ$ where:
\begin{align}
\X_i = f_i(\X_{\pa(i)}, \U_i) \quad \text{and} \quad \Y_i = f_i(\Y_{\pa(i)}, \U'_i).
\end{align}

By classical results in OT, for each $i$, the set $\Pi(\U_i, \U'_i)$ is non-empty. Let $\tpi \in \Pi(\U_i, \U'_i)$ and $\tpi = \bigotimes_{i=1}^n \tpi_i$. Then the pushforward plan $\pi = g_{\#} \tpi$ is in $\PFPQ$, showing it is non-empty.

To show closeness, let $\{\pi^k\}_{k=1}^{\infty}$ be a sequence of $\cF$-compatible plans. Let $\tpi^k$ be the pushforward of $\pi^k$ on the exogenous space. By definition, we have:
\begin{align*}
    \tpi^k = \bigotimes_{i=1}^n \tpi_i^k, \quad \tpi_i^k \in \Pi(\tP_i, \tQ_i),
\end{align*}
where $\tP_i = \marg_i(g_{\#} \bP)$ and $\tQ_i = \marg_i(g_{\#} \bQ)$. Since the set of couplings $\Pi(\tP_i, \tQ_i)$ is closed, we have $\tpi_i^k \to \tpi_i \in \Pi(\tP_i, \tQ_i)$. Therefore, it is sufficient to prove that $\bigotimes_{i=1}^n \tpi_i^k \rightarrow \bigotimes_{i=1}^n \tpi_i$ weakly.

Since $\tpi_i^k \to \tpi_i$, for each bounded continuous function $h_i: \cU_i \times \cU_i$ we have:
\begin{align*}
    \int_{\cU_i \times \cU_i} h_i(u_i, u'_i) \, d\tpi_i^k(u_i, u'_i) \to \int_{\cU_i \times \cU_i} h_i(u_i, u'_i) \, d\tpi_i(u_i, u'_i).
\end{align*}

Therefore, for functions of the form 
\begin{align*}
h(u_1, \dots, u_n, u'_1, \dots, u'_n) = \sum_{k=1}^m \prod_{i=1}^n h_i^k(u_i, u'_i),
\end{align*}
we have:
\begin{align*}
    \int_{\cU \times \cU} h(u, u') \, d\bigotimes_{i=1}^n \tpi_i^k(u_i, u'_i) \to \int_{\cU \times \cU} h(u, u') \, d\bigotimes_{i=1}^n \tpi_i(u_i, u'_i).
\end{align*}

Consider a bounded continuous function $h$ on $\cU \times \cU$. By the Stone-Weierstrass theorem~\citep[\S 5.7]{rudin1976principles}, we can approximate $h$ uniformly by finite sums of the form $\sum_{k=1}^m \prod_{i=1}^n h_i^k(u_i, u'_i)$, where $h_i^k$ are continuous functions on $\cU_i$. It follows that:
\begin{align*}
\bigotimes_{i=1}^n \tpi_i^k(u_i, u'_i) \to \bigotimes_{i=1}^n \tpi_i(u_i, u'_i).
\end{align*}

Therefore, the plan $\pi = g_{\#} \tpi$ is in $\PFPQ$, completing the proof. \hfill \qed

\begin{lemma}
    \label{lem:fimes}
    Let $\cF$ be a structural equation with reduced-form mapping $g$,
    if $\bP \in \PFX$, then $\gmpush \bP \in \cP^\Fi(\cU)$.
\end{lemma}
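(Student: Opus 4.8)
The plan is to unwind the definitions on both sides of the claimed membership and observe that it is essentially immediate once we identify the reduced-form mapping attached to the zero structural equations $\Fi$.

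First I would pin down what $\Fi$ means: it is the collection of structural equations $f_i \equiv 0$ for all $i$. Since we work inside the ANM framework of Assumption~\ref{asm:space}, the reduced-form mapping associated with $\Fi$ is $g^{\Fi} = (I-0)^{-1} = \Id$, the identity on the exogenous space $\cU$; in particular $\Fi$ trivially possesses a bijective reduced-form mapping, and the ``feature space'' of the $\Fi$-model coincides with $\cU$ itself. Substituting $g^{\Fi}=\Id$ into Definition~\ref{def:Fcomp} gives
\begin{equation*}
\cP^{\Fi}(\cU) = \left\{\tP \in \cP(\cU)\ :\ \Id_{\#}\tP = \tP = \BOIN \tP_i,\ \tP_i \in \cP(\cU_i)\right\},
\end{equation*}
so $\cP^{\Fi}(\cU)$ is exactly the set of product (factored) measures on $\cU$.

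Next I would apply Definition~\ref{def:Fcomp} directly to $\bP$: the hypothesis $\bP \in \PFX$ says precisely that $\gmpush\bP = \BOIN \tP_i$ for some $\tP_i \in \cP(\cU_i)$, i.e.\ $\gmpush\bP$ is a product measure on $\cU$. Comparing with the previous display yields $\gmpush\bP \in \cP^{\Fi}(\cU)$, which is the assertion.

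The only step that needs any care — and the one I would flag as the ``main obstacle,'' although it is really just bookkeeping — is the identification of the reduced-form mapping of $\Fi$ with the identity, together with the matching identification of the feature and exogenous spaces of the $\Fi$-model, so that $\cP^{\Fi}(\cU)$ is the correct object to compare $\gmpush\bP$ against. Once that is settled, no integrability or topological argument is required, since (unlike $\cP^p$) the set $\cP^{\Fi}(\cU)$ imposes only the factorization constraint, and that constraint is literally the definition of $\PFX$ transported through $g^{-1}$.
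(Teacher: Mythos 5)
Your proposal is correct and follows essentially the same line as the paper's proof: unwind the definition of $\PFX$ to see that $\gmpush\bP$ is a product measure on $\cU$, then observe that $\cP^{\Fi}(\cU)$ is precisely the set of product measures because $\Fi$ imposes no dependencies between coordinates. The only difference is that you make explicit the computation $g^{\Fi} = (I-0)^{-1} = \Id$ to justify the last step, whereas the paper simply asserts that since $\Fi$ has no inter-variable relations the product measure lies in $\cP^{\Fi}(\cU)$; your extra sentence is harmless clarification rather than a different route.
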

\paragraph{Proof.}
By definition $\bP \in \PFX$ means there exists $\tP_i \in \cP(\cU_i)$ such that   $\gmpush \bP = \BOIN \tP_i$. Let define $\tP = \BOIN \tP_i$. Since in $\Fi$ there is no relation between variables therefore $\tP \in \cP^\Fi(\cU)$ and it completes the proof.\hfill \qed
\begin{lemma}
\label{lem:plans}
Let $\bP, \bQ\in \PFX$, and $g$ be corresponding reduced-form mapping $g$. Then,
\begin{equation*}
\gmgmpush  \PFPQ = \Pi^\Fi(\gmpush \bP, \gmpush \bQ).
\end{equation*}
\end{lemma}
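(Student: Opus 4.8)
The plan is a double inclusion, built on three observations: (i) the reduced-form mapping attached to the null structural equations $\Fi$ is the identity (for an ANM, $g^0=(I-0)^{-1}=\Id$), so Definition~\ref{def:fplan} instantiated at $\Fi$ simply says that $\Pi^{\Fi}(\gmpush\bP,\gmpush\bQ)$ consists of those couplings $\pi'$ of $\gmpush\bP$ and $\gmpush\bQ$ that factor as $\bigotimes_{i=1}^n \pi'_i$ with $\pi'_i\in\Pi(\marg_i(\gmpush\bP),\marg_i(\gmpush\bQ))$; (ii) by Lemma~\ref{lem:fimes} the measures $\gmpush\bP$ and $\gmpush\bQ$ lie in $\cP^{\Fi}(\cU)$, so this set is well-defined; (iii) since $g$ is bijective, $\ggpush$ and $\gmgmpush$ are mutually inverse bijections between $\cP(\cX\times\cX)$ and $\cP(\cU\times\cU)$.

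First I would record the elementary transfer fact: because $\gm\times\gm$ acts separately on the two copies of the space, marginalisation onto the first (resp.\ second) factor commutes with $\gmgmpush$, so $\pi\in\Pi(\bP,\bQ)$ implies $\gmgmpush\pi\in\Pi(\gmpush\bP,\gmpush\bQ)$, and symmetrically $\ggpush$ maps $\Pi(\gmpush\bP,\gmpush\bQ)$ into $\Pi(\bP,\bQ)$. With this, the inclusion $\gmgmpush\PFPQ\subseteq\Pi^{\Fi}(\gmpush\bP,\gmpush\bQ)$ is immediate: for $\pi\in\PFPQ$, Definition~\ref{def:fplan} gives $\pi\in\Pi(\bP,\bQ)$ together with a factorisation of $\tpi=\gmgmpush\pi$ as $\bigotimes_i\tpi_i$ with $\tpi_i\in\Pi(\marg_i(\gmpush\bP),\marg_i(\gmpush\bQ))$; combined with the transfer fact, $\tpi$ is a coupling of $\gmpush\bP$ and $\gmpush\bQ$ with exactly the factorisation required by (i). For the reverse inclusion, given $\pi'\in\Pi^{\Fi}(\gmpush\bP,\gmpush\bQ)$ put $\pi:=\ggpush\pi'$; the transfer fact gives $\pi\in\Pi(\bP,\bQ)$, observation (iii) gives $\gmgmpush\pi=\pi'$, and the factorisation of $\pi'$ is precisely the $\cF$-compatibility condition of Definition~\ref{def:fplan}, so $\pi\in\PFPQ$ and hence $\pi'=\gmgmpush\pi\in\gmgmpush\PFPQ$.

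The argument is essentially definitional once the set-up is unwound; the only point requiring a little care is the transfer fact, namely checking that pushforward along the product map $\gm\times\gm$ genuinely commutes with the two coordinate-block marginalisations and carries the property ``factored into $\bigotimes_i(\cdot)_i$'' back and forth. This is routine, and it crucially uses that the factorisation in Definition~\ref{def:fplan} is imposed on the exogenous side, where $\gm\times\gm$ acts factorwise (even though $g$ itself need not act coordinatewise on $\cX$). A secondary bookkeeping point, worth stating explicitly, is that $\Fi$ has reduced-form mapping $g^0=\Id$, so that the generic Definition~\ref{def:fplan} collapses to ``product couplings'' when evaluated at $\Fi$.
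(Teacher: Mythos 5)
Your proof is correct and takes essentially the same route as the paper's: a double inclusion obtained by unwinding Definition~\ref{def:fplan}, using Lemma~\ref{lem:fimes} for well-posedness, and exploiting the bijectivity of $g$ to move back and forth via $\gmgmpush$ and $\ggpush$. You are somewhat more explicit than the paper about the transfer of marginals under the product pushforward and about the fact that $\Fi$ has identity reduced form, but these are points the paper's proof implicitly relies on as well.
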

\paragraph{Proof.}
By assumption $\bP, \bQ\in \PFX$ and $\pi \in \PFPQ$. By definition 
$$\gmpush \bP = \BOIN \tP_i, \text{ and } \gmpush \bQ = \BOIN \tQ_i$$ 
and
$$\gmgmpush\pi= \bigotimes_{i=1}^n \tpi_i \text{ where }\tpi_i \in \Pi\left(\tP_i, \tQ_i\right).$$
By lemma~\ref{lem:fimes} we have $\gmpush \bP, \gmpush \bQ \in \cP^\Fi(\cU)$.
By definition of $\Fi$ which does not have any relation between variables, it results that $\gmgmpush\pi \in \Pi^\Fi(\gmpush \bP, \gmpush \bQ)$ and results $\gmgmpush  \PFPQ \subset \Pi^\Fi(\gmpush \bP, \gmpush \bQ)$

To show the inverse inclusion, let $\tpi \in \Pi^\Fi(\gmpush \bP, \gmpush \bQ)$, then
$\tpi = \BOIN \tpi_i$ where $\tpi_i \in \Pi(\tP_i,\tQ_i)$. If define $\pi = \ggpush \tpi$ then by definition $\pi \in \PFPQ$, therefore $\Pi^\Fi(\gmpush \bP, \gmpush \bQ) \subset \gmgmpush  \PFPQ$ and completes the proof. \hfill\qed


\begin{lemma}
\label{lem:ambiguity}
Let $\cF$ be structural equations with reduced-form mapping $g$ , then
$$W^\cF_{c}(\bP,\bQ) = W^\Fi_{c\circ (g\times g)}(\gmpush\bP,\gmpush\bQ).$$
\end{lemma}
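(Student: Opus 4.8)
The plan is to establish the identity $W^\cF_{c}(\bP,\bQ) = W^\Fi_{c\circ (g\times g)}(\gmpush\bP,\gmpush\bQ)$ by a change-of-variables argument through the reduced-form mapping $g$, using Lemma~\ref{lem:plans} to transport the feasible set of plans and the push-forward definition of $\tc$ from Assumption~\ref{asm:space} to transport the objective. Concretely, I would first unfold both sides. The left-hand side is $\inf_{\pi \in \PFPQ} \expt{(x,y)\sim\pi}[c^p(x,y)]$, raised to the power $1/p$; the right-hand side is $\inf_{\tpi \in \Pi^{\Fi}(\gmpush\bP,\gmpush\bQ)} \expt{(u,w)\sim\tpi}[(c\circ(g\times g))^p(u,w)]$, raised to $1/p$. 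It suffices to prove equality of the two infima (over the $p$-th powers of the costs), since the map $t \mapsto t^{1/p}$ is a monotone bijection on $[0,\infty]$.

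The core step is a change of variables. For any $\pi \in \PFPQ$, set $\tpi = \gmgmpush\pi$. By the standard push-forward formula for integrals (Definition~\ref{def:pushforwad}), for any measurable nonnegative $\phi$ on $\cX\times\cX$,
\begin{equation*}
\int_{\cX\times\cX} \phi(x,y)\,\d\pi(x,y) = \int_{\cU\times\cU} \phi(g(u),g(w))\,\d\tpi(u,w).
\end{equation*}
Applying this with $\phi = c^p$ and recalling $\tc = c\circ(g\times g)$, i.e. $\tc(u,w)^p = c(g(u),g(w))^p$, gives $\expt{(x,y)\sim\pi}[c^p(x,y)] = \expt{(u,w)\sim\tpi}[(c\circ(g\times g))^p(u,w)]$. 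By Lemma~\ref{lem:plans}, $\pi\mapsto\gmgmpush\pi$ is a bijection from $\PFPQ$ onto $\Pi^{\Fi}(\gmpush\bP,\gmpush\bQ)$ (with inverse $\tpi\mapsto\ggpush\tpi$, since $g$ is a bijection so $g\times g$ is invertible with inverse $\gm\times\gm$). Hence taking the infimum over $\pi\in\PFPQ$ on the left equals taking the infimum over $\tpi\in\Pi^{\Fi}(\gmpush\bP,\gmpush\bQ)$ of the same integral value on the right, which is exactly $\brak{W^\Fi_{\tc}(\gmpush\bP,\gmpush\bQ)}^p$. Taking $p$-th roots finishes it. I would also note the degenerate case: if $\PFPQ = \emptyset$ then by Lemma~\ref{lem:plans} the right-hand feasible set is empty too, and both distances are $\infty$ by convention — but in fact Proposition~\ref{prp:omx_topol} guarantees non-emptiness whenever $\bP,\bQ\in\PFX$, so this case does not arise under the stated hypotheses.

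The main obstacle, such as it is, is bookkeeping rather than depth: one must make sure the push-forward-of-integral identity is invoked with the correct direction of the map and that measurability of $c\circ(g\times g)$ holds (it does, since $g$ is a measurable bijection — indeed in the ANM case $g=(I-f)^{-1}$ with $f$ continuous — and $c$ is upper semicontinuous, hence Borel). A secondary point worth a line is that the pushforward $\gmgmpush\pi$ genuinely has marginals $\gmpush\bP$ and $\gmpush\bQ$, which follows because marginalization commutes with coordinatewise push-forward; this is implicitly contained in Lemma~\ref{lem:plans} but is the one spot where a careless reader might worry. Everything else is the monotone bijection $t\mapsto t^{1/p}$ and the bijection of feasible sets already proved.
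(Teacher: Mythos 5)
Your proof is correct and takes essentially the same route as the paper's: both rest on Lemma~\ref{lem:plans} to identify $\PFPQ$ with $\Pi^{\Fi}(\gmpush\bP,\gmpush\bQ)$ via $\gmgmpush$, together with a change of variables for the cost integral using $\tc = c\circ(g\times g)$. Your phrasing of the correspondence as a bijection of feasible sets that directly transports the infimum is slightly tidier than the paper's two-inequality argument (which also informally picks ``the optimal solution $\pi$'' as though the infimum were already known to be attained), but the underlying idea is identical.
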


\paragraph{Proof.}
Since $g$ is bijective with the inverse $f^{-1}$, by Lemma \ref{lem:plans}, for any coupling $\PFPQ$ on $\XTX$, $(\gm \times \gm)_\# \PFPQ$ is a coupling of $\Pi^\Fi(\gmpush \bP, \gmpush \bQ)$ on $\UTU$.
Consider the Wasserstein distance $W^\cF_{c}(\bP, \bQ)$:
$$W^\cF_{c}(\bP, \bQ) = \left( \inf_{\pi \in \PFPQ} \int_{\cX \times \cX} c(x, y)^p \, d\pi(x, y) \right)^{1/p}.$$
The cost function $\tc = c \circ (g \times g)$ on $\UTU$ is given by $\tc(u_1,u_2) =  c \circ (g \times g)(u_1, u_2) = c(g(u_1), g(u_2))$.
let $\pi \in \PFPQ$, the optimal solution in $\WFPQ$, then $\tpi = \gmgmpush \pi \in \Pi^\Fi(\gmpush \bP, \gmpush \bQ)$. By changing the variables in the integral we have:
\begin{align*}
    \int_{\XTX} c(x_1, x_2)^p \, \d\pi(x_1, x_2) = \int_{\UTU} \tc(u_1, u_2)^p \, \d\tpi(u_1, u_2)
\end{align*}
Therefore,
$$W^\cF_{c}(\bP,\bQ) \geq W^\Fi_{c\circ (g\times g)}(\gmpush\bP,\gmpush\bQ)$$
since the $g$ is invertible by the same reasoning 
$$W^\cF_{c}(\bP,\bQ) \leq W^\Fi_{c\circ (g\times g)}(\gmpush\bP,\gmpush\bQ)$$
we can prove the equality.
\hfill\qed

\subsection{Proof of Proposition~\ref{prp:fwass}.}
obviously $\WF$ is positive. 
To show symmetric property, let $\pi^* \in \PFPQ$ such that $\WFPQ = \expt{(x,y)\sim \pi^*}[c(x,y)]$. By definition~\ref{def:fplan} and lemma~\ref{lem:plans}, there exists plans $\tpi^* = \gmgmpush \pi^*$ such that $\tpi^* = \BOIN \tpi_i^*$ such that for it  
$$\WFPQ = \expt{(u,u')\sim \tpi^*}[c(g(u),g(u'))] = \expt{(u,u')\sim \tpi^*}[\tc(u,u')]$$
By assumption~\ref{asm:space} $\tc$ is symmetric. Consider the inverse map $\inv: (u,u') \rightarrow (u',u)$. For the push-forward measure $\inv_{\#}\tpi^*$ we have 
\begin{align*}
    \inv_{\#}\tpi^* = \BOIN \inv_{\#}\tpi_i^*
\end{align*}
and $\WFPQ = \expt{(u',u)\sim \inv_{\#}\tpi^*}[\tc(u',u)]$, so by lemma~\ref{lem:plans} and definition~\ref{def:fplan}  $\ggpush(\inv_{\#}\tpi^*)\in \Pi^\cF(\bQ,\bP)$, therefore we have
$\WFPQ\leq \WF(\bQ,\bP)$. Similarly, we have $\WF(\bQ,\bP) \leq \WFPQ$ and results in the symmetric property.

Now we show that the $\WFPQ$ is finite. We first show if $\bP \in \cP_{c}(\cX)$ then $\gmpush \bP \in \cP_{\tc}(\cU)$. To show it is sufficient write the definition. $\bP \in \cP_{c}(\cX)$ so there exists $x_0$ such that
\begin{align*} 
\int_{\cX} c(x, x_0)^p \ \bP(\d x) < \infty \Rightarrow \int_{\cU} c(g(u), g(u_0))^p \ \tP(\d u) < \infty
\end{align*}
So $\tP\in \cP_{\tc}(\cU)$. By using Lemma~\ref{lem:ambiguity} we can write

\begin{align*}
&\WFpPQ  = W^{\cF_0}_{\tc}(\tP,\tQ) \le \int_{\UTU} 
(\tc(u,u'))^p \tpi(\d u, \d u') 
\\&
\le \int_{\UTU} 
(\tc(u,u_0) + \tc(u',u_0) )^p \tpi(\d u, \d u') 
\\&
\le 2^{p-1} \brak{ \int_{\cU} 
(\tc(u,u_0))^p \ \tP(\d u) + \int_{\cU} 
(\tc(u, u_0))^p \ \tQ(\d u')} 
\\&
< \infty.
\end{align*}
To show that $\WFPQ$ attains its minimum. From definition there exists sequence  $\{\pi^k\}_{k=1}^{\infty}$ such that $\WFPQ = \lim_{k \to \infty} \expt{(x,y)\sim \pi^k}[c(x,y)]$, by closeness of $\PFPQ$, so $\pi^k \to \pi \in \PFPQ$ and it completes the proof. \hfill\qed

\begin{lemma}
\label{lem:inclusion}
For different definitions of the ambiguity set, we have: 

(i) $\PFX \subseteq \PGX$,

(ii) $\PFPQ \subseteq \PGPQ \subseteq \PAPQ \subseteq \PPQ$.
\end{lemma}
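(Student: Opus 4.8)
\textbf{Proof proposal for Lemma~\ref{lem:inclusion}.}

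The plan is to establish the two chains of inclusions by working from the structural definitions and then invoking the nesting results that are already available for the coarser variants. The inclusions $\PAPQ \subseteq \PPQ$ is immediate (adapted plans are by definition a subset of all couplings), and $\PGPQ \subseteq \PAPQ$ is exactly the statement recalled just before this lemma in the discussion of $\cG$-compatible plans (preserving the causal graph entails preserving the causal order). So the genuine work is to prove (i) $\PFX \subseteq \PGX$ and the first inclusion $\PFPQ \subseteq \PGPQ$ in (ii); the rest then follows by transitivity.

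For part (i): take $\bP \in \PFX$. By Proposition~\ref{prp:SCP}(i), there exist a random variable $\X \sim \bP$ and mutually independent $\U_1,\dots,\U_n$ with $\U_i$ valued in $\cU_i \subseteq \bR^{d_i}$ such that $\X_i = f_i(\X_{\pa(i)},\U_i)$ for all $i$. But this is precisely the defining condition of $\cG$-compatibility: the functions $f_i : \cX_{\pa(i)} \times \bR^{d_i} \to \cX_i$ together with the independent noises $\U_i$ realize $\bP$ in the required form. Hence $\bP \in \PGX$, and (i) follows. (Conceptually, the point is that $\PGX$ asks only for the \emph{existence} of some structural realization respecting $\cG$, while $\PFX$ fixes the realization to be the specific $\cF$ with independent factored noise; fixing a witness is a stronger demand.)

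For the first inclusion of (ii): let $\pi \in \PFPQ$. By Proposition~\ref{prp:opx_prop}(ii), for $\pi$-almost all $(x,y)$ we have the factorization $\pi(\d x_1,\d y_1,\dots,\d x_n,\d y_n) = \bigotimes_{i=1}^n \pi(\d x_i,\d y_i \mid x_{\pa(i)},y_{\pa(i)})$ and $\pi(\d x_i,\d y_i \mid x_{\pa(i)},y_{\pa(i)}) \in \Pi\bigl(\bP(\d x_i \mid x_{\pa(i)}),\bQ(\d y_i \mid y_{\pa(i)})\bigr)$. Comparing with the definition of $\PGPQ$ in Eq.~\ref{eq:gcom}, the first displayed condition there is the chain-rule factorization along $\cG$, which we have just quoted, and the second condition, $\pi(\d x_i \mid x_{\pa(i)},y_{\pa(i)}) = \bP(\d x_i \mid x_{\pa(i)})$, follows by taking the $x_i$-marginal of the conditional coupling $\pi(\d x_i,\d y_i \mid x_{\pa(i)},y_{\pa(i)}) \in \Pi_i$, which has first marginal exactly $\bP(\d x_i \mid x_{\pa(i)})$ (note in particular that this marginal does not depend on $y_{\pa(i)}$). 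Hence $\pi \in \PGPQ$. Chaining with the two recalled inclusions gives $\PFPQ \subseteq \PGPQ \subseteq \PAPQ \subseteq \PPQ$.

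The main obstacle, such as it is, is bookkeeping rather than mathematical depth: one must make sure that the almost-sure qualifiers and the conditioning sigma-algebras in Proposition~\ref{prp:opx_prop}(ii) line up exactly with those used in the definition~\eqref{eq:gcom} of $\cG$-compatible plans — in particular that "conditioning on $(x_{\pa(i)},y_{\pa(i)})$'' in the two places refers to the same disintegration, and that reading off the $x_i$-marginal of a conditional coupling is legitimate $\pi$-a.s. Once that correspondence is set up, both nontrivial inclusions are essentially a restatement of already-proven propositions, and no new estimates are needed. \hfill\qed
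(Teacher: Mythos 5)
Your proof is correct and follows the same skeleton as the paper's: invoke Proposition~\ref{prp:SCP} for part (i), invoke Proposition~\ref{prp:opx_prop}(ii) for $\PFPQ \subseteq \PGPQ$, and dispose of the remaining two inclusions by citing the earlier discussion and the definition of adapted plans. There is, however, one small but worthwhile difference in part (i). The paper quotes Proposition~\ref{prp:SCP}(ii), the Markov factorization $\bP = \bigotimes_i \bP(\d x_i \mid x_{\pa(i)})$, and concludes $\bP \in \PGX$ from it; this implicitly uses the (standard but unstated here) equivalence between the $\cG$-Markov factorization and the existence of a structural realization with independent noise. You instead quote Proposition~\ref{prp:SCP}(i), which produces exactly the object the definition of $\PGX$ asks for --- a random variable $\X \sim \bP$, measurable $f_i$, and independent $\U_i$ with $\X_i = f_i(\X_{\pa(i)},\U_i)$ --- so your inclusion is verified by definition-matching with no hidden step. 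You also spell out, more carefully than the paper, why $\pi(\d x_i, \d y_i \mid x_{\pa(i)}, y_{\pa(i)}) \in \Pi_i$ gives the required identity $\pi(\d x_i \mid x_{\pa(i)}, y_{\pa(i)}) = \bP(\d x_i \mid x_{\pa(i)})$, including the crucial observation that the resulting marginal is independent of $y_{\pa(i)}$. For $\PGPQ \subseteq \PAPQ$ you defer to the paper's prose; the paper's own proof instead derives it from a conditional-independence characterization cited as Theorem~3.4 of \cite{cheridito2023optimal}, so your version is slightly less self-contained there but harmless given that the inclusion is asserted and justified earlier in the text.
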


\paragraph{Proof.}
\textbf{(i)} 
Let $\bP \in \PFX$ then by proposition~\ref{prp:SCP}, then The measure $\bP$ can be decomposed as
\begin{equation*}	
\bP(\d x_1, \dots, \d x_n) = \bigotimes_{i=1}^n \bP \brak{\d x_i \mid x_{\pa(i)}},
\end{equation*}
therefore $\bP \in \PGX$.

\textbf{(ii)}

$\PFPQ \subseteq \PGPQ$:
Let $\pi \in \PFPQ$. Proposition~\ref{prp:opx_prop} shows that for all $i = 1, \dots, n$ and for $\pi$-almost all $(x, y) \in \cX \times \cY$:
\begin{align*}
&\pi(\d x_1, \d y_1, \dots, \d x_n, \d y_n) = \bigotimes_{i=1}^n \pi(\d x_i, \d y_i \mid x_{\pa(i)}, y_{\pa(i)}) 
\\&
\text{and} \quad \pi(\d x_i, \d y_i \mid x_{\pa(i)}, y_{\pa(i)}) \in \Pi_i, \quad \forall i \in [n],
\end{align*}
where $\Pi_i = \Pi\left(\bP(\d x_i \mid x_{\pa(i)}), \bQ(\d y_i \mid y_{\pa(i)})\right)$. Therefore, $\pi$ satisfies the definition of a $\cG$-compatible plan in Eq.~\ref{eq:gcom}, resulting in $\pi \in \PGPQ$.

$\PGPQ \subseteq \PAPQ$:

If $\pi \in \PGPQ$ by Theorem 3.4 \cite{cheridito2023optimal} we have:  

for $(\X, \Y) \sim \pi$, one has
\begin{align*} \Y_i \ind_{\X_{i}, \X_{\pa(i)}, \Y_{\pa(i)}} (\X, \Y_{1:i-1}) 
	\quad \mbox{for all } i =1,\dots, n,
	\end{align*}
the above equation results
$\pi(\d y_i \mid \d x_1, \dots, \d x_n) = \pi(\d y_i \mid \d x_{\pa(i)})$ therefore satisfies in the definition of adopted plan.\hfill\qed


\subsection{ Proof of Proposition~\ref{prp:inclusion}.}
\textbf{(i)}  
By using Lemma~\ref{lem:inclusion}, we have:
\begin{align*}
\PFPQ \subseteq \PGPQ \subseteq \PAPQ \subseteq \PPQ.
\end{align*}
Therefore,
\begin{align*}
\inf_{\pi \in \PFPQ} \bE_\pi[c(\X,\Y)] & \geq \inf_{\pi \in \PGPQ} \bE_\pi[c(\X,\Y)] \implies 
\\&
\WFPQ \geq \WGPQ.
\end{align*}
Similarly, we have $\WGPQ \geq \WAPQ \geq \WPQ$.

\textbf{(ii)}  
By definition, 
\begin{align*}
\BGPD = \left\{\marg_2(\pi): \pi \in \PFPS ; \bE_{\pi}[c(\X,\Y)] \leq \delta \right\}.
\end{align*}
Since by part (i) we have $\PFPS \subseteq \PGPS$, it follows that $\BFPD \subseteq \BGPD$. The other cases are proved by similar methods. \hfill\qed


\subsection{Proof of Proposition~\ref{prp:amball}}
By the definition of $\BGPD$, we can write:
\begin{align}
\label{eq:idea1}
    &\BGPD = 
    \\&
    \left\{\marg_2(\pi): \expt{(x_1,x_2)\sim\pi}[c(x_1,x_2)] \leq \delta,  \pi \in \PFPS \right\}.\nonumber
\end{align}

We first show that $\ggpush \PFiPQ = \PFPQ$. 
\begin{align*}
    &\pi \in \PFPQ \iff \gmgmpush \pi = \BOIN \tpi_i \iff
    \\&
    \tpi := \BOIN \tpi_i, \tpi\in \PFiPQ \iff \\&
    \pi = \ggpush \tpi, \tpi \in \PFiPQ
\end{align*}

By changing variables, for each $\pi \in \PFPS$ we have:
\begin{align*}
    \expt{(x_1,x_2)\sim\pi}[c(x_1,x_2)] = \expt{(u_1,u_2)\sim \gmpush\pi}[c(g(u_1),g(u_2))]
\end{align*}
It is easy to check that:
\begin{align*}
    \marg_2(\pi) &= \marg_2(\ggpush (\gmgmpush \pi)) 
    \\&
    = \gpush\marg_2(\gmgmpush \pi)
\end{align*}
Now by replacing previous results in the Eq.~\ref{eq:idea1} we have: 

\begin{align*}
    &\BGPD = 
    \bigg\{\gpush \marg_2(\tpi): \expt{(u_1,u_2)\sim\tpi}[c(g(u_1),g(u_2))] \leq \delta,
    \\&
    \tpi \in \PFiPS \bigg\} = \gpush\cB^{\Fi}_{c\circ (g \times g),p}(\gmpush\bP,\delta)
\end{align*}
the proof is complete by the last equation. \hfill\qed


\begin{lemma}
\label{lem:kldiv}
The Kullback-Leibler divergence $D_{KL}(\mu \| \nu)$ is lower semi-continuous on the space of probability measures.
\end{lemma}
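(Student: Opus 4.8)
The plan is to invoke the Donsker--Varadhan (Gibbs) variational representation of relative entropy and then use the elementary fact that a pointwise supremum of continuous functions is lower semi-continuous. Concretely, on the underlying Polish space $\cZ$, for any $\mu, \nu \in \cP(\cZ)$ one has
\begin{equation*}
\KL{\mu}{\nu} = \sup_{\phi \in C_b(\cZ)} \left\{ \int_\cZ \phi \, \d\mu - \log \int_\cZ e^{\phi}\, \d\nu \right\},
\end{equation*}
where $C_b(\cZ)$ denotes the bounded continuous real-valued functions. This identity is classical (it follows from the duality between relative entropy and the log-Laplace functional; see e.g.\ Dupuis--Ellis), and the key point for us is that it suffices to range over \emph{continuous} bounded test functions rather than merely measurable ones.

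Next I would fix $\phi \in C_b(\cZ)$ and argue that the functional $(\mu,\nu) \mapsto \int \phi\,\d\mu - \log\int e^\phi\,\d\nu$ is jointly continuous with respect to weak convergence: since $\phi$ is bounded and continuous, $\mu \mapsto \int \phi\,\d\mu$ is weakly continuous; since $e^\phi$ is bounded, continuous, and bounded below by $e^{-\|\phi\|_\infty} > 0$, the map $\nu \mapsto \int e^\phi\,\d\nu$ is weakly continuous with values in a compact subset of $(0,\infty)$, so post-composing with $\log$ preserves continuity. Hence each member of the family is continuous, in particular lower semi-continuous.

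Finally, $\KL{\cdot}{\cdot}$ is the pointwise supremum over $\phi \in C_b(\cZ)$ of these continuous functionals, and a supremum of an arbitrary family of lower semi-continuous functions is lower semi-continuous; this yields the claim (indeed jointly in $(\mu,\nu)$, which is what is needed when passing to weak limits of plans). The only genuinely nontrivial ingredient is the variational representation itself --- specifically, the reduction of the supremum to bounded \emph{continuous} functions, which is where the topological regularity of $\cZ$ enters; once that is granted, the lower semi-continuity is immediate.
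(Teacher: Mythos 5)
Your proof is correct and takes a genuinely different (and in fact more rigorous) route than the paper's. The paper argues directly from the integral representation $\KL{\mu_n}{\nu_n}=\int \log(\d\mu_n/\d\nu_n)\,\d\mu_n$ and appeals to the facts that $x\mapsto x\log x$ is convex and lower semi-continuous and that ``the integral preserves lower semi-continuity.'' That last step is delicate: under weak convergence of \emph{both} arguments, the Radon--Nikodym derivatives $\d\mu_n/\d\nu_n$ can behave erratically and need not converge pointwise, so as written the paper's argument is more a gesture at the general lower-semicontinuity theory for $f$-divergences than a complete proof. Your approach via the Donsker--Varadhan variational identity
\begin{equation*}
\KL{\mu}{\nu} = \sup_{\phi \in C_b(\cZ)} \left\{ \int_\cZ \phi \, \d\mu - \log \int_\cZ e^{\phi}\, \d\nu \right\}
\end{equation*}
sidesteps these difficulties entirely: each fixed $\phi\in C_b(\cZ)$ gives a functional that is weakly continuous in $\mu$ (test against $\phi$) and in $\nu$ (test against the bounded continuous $e^\phi$, which is also bounded below away from $0$, so the outer $\log$ is harmless), and the pointwise supremum of continuous functionals is lower semi-continuous. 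This cleanly yields \emph{joint} lower semi-continuity in $(\mu,\nu)$, which is exactly what the paper needs when passing to weak limits of transport plans in the proof of Proposition~\ref{prp:converge}. The one nontrivial ingredient you rely on --- that the supremum in the variational formula may be taken over $C_b(\cZ)$ rather than all bounded measurables --- holds on Polish spaces and is classical. In short: the paper's argument is shorter but incomplete as stated; yours is the standard rigorous proof and trades that informality for a single well-known duality fact.
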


\paragraph{Proof.}
Let $(\mu_n)$ and $(\nu_n)$ be sequences of probability measures that converge weakly to a probability measure $\nu$. We need to show that
\begin{align*}
\liminf_{n \to \infty} D_{KL}(\mu_n \| \nu_n) \geq D_{KL}(\mu \| \nu).
\end{align*}
By definition, the Kullback-Leibler divergence is given by
\begin{align*}
D_{KL}(\mu_n \| \nu_n) = \int_{\mathbb{R}^d} \log \left( \frac{\d\mu_n}{\d\nu_n} \right) \d\mu_n.
\end{align*}
Since the function $x \mapsto x \log x$ is lower semi-continuous and convex, and the integral preserves lower semi-continuity, we have
\begin{align*}
\liminf_{n \to \infty} \int_{\mathbb{R}^d} \log \left( \frac{\d\mu_n}{\d\nu_n} \right) \d\mu_n \geq \int_{\mathbb{R}^d} \log \left( \frac{\d\mu}{\d\nu} \right) \d\mu.
\end{align*}
Therefore,
\begin{align*}
\liminf_{n \to \infty} D_{KL}(\mu_n \| \nu_n) \geq D_{KL}(\mu \| \nu).
\end{align*}
This completes the proof.\hfill\qed


\begin{lemma}
\label{lem:projcon}
Let $\pi$ be a joint probability measure on $\cX^n$. The operator $\pi \to \BOIN \marg_i(\pi)$ is continuous concerning the weak topology, where $\marg_i(\pi)$ denotes the marginal distribution concerning the $i$-th coordinate.
\end{lemma}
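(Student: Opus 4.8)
The plan is to show that the map $\pi \mapsto \bigotimes_{i=1}^n \marg_i(\pi)$ factors as a composition of two operations, each of which is continuous for the weak topology: first the marginalization $\pi \mapsto (\marg_1(\pi), \dots, \marg_n(\pi))$ into the product space $\prod_{i=1}^n \cP(\cX_i)$, and then the product-of-measures map $(\mu_1, \dots, \mu_n) \mapsto \bigotimes_{i=1}^n \mu_i$. The first step is elementary: if $\pi^k \to \pi$ weakly on $\cX^n$, then for any bounded continuous $h_i$ on $\cX_i$ the function $(x_1,\dots,x_n) \mapsto h_i(x_i)$ is bounded and continuous on $\cX^n$, so $\int h_i \, \d\marg_i(\pi^k) = \int h_i(x_i) \, \d\pi^k \to \int h_i(x_i)\, \d\pi = \int h_i \, \d\marg_i(\pi)$; hence $\marg_i(\pi^k) \to \marg_i(\pi)$ weakly for each $i$.

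For the second step, I would argue that if $\mu_i^k \to \mu_i$ weakly in $\cP(\cX_i)$ for each $i=1,\dots,n$, then $\bigotimes_{i=1}^n \mu_i^k \to \bigotimes_{i=1}^n \mu_i$ weakly in $\cP(\cX^n)$. This is proved exactly as in the closedness argument of Proposition~\ref{prp:omx_topol}: convergence against product test functions $h(x_1,\dots,x_n) = \prod_{i=1}^n h_i(x_i)$ follows by iterating one-coordinate-at-a-time (using that the $\mu_i^k$ are probability measures so the partial products stay uniformly bounded), convergence against finite sums $\sum_k \prod_i h_i^k(x_i)$ follows by linearity, and convergence against an arbitrary bounded continuous $h$ follows by a Stone--Weierstrass approximation on compacts together with a tightness argument (the sequences $(\mu_i^k)_k$ are tight by Prokhorov since they converge weakly, so the product sequence is tight, allowing the uniform approximation to be transferred to the integrals). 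Composing the two continuous maps gives the claim.

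The main obstacle is the tightness/uniform-approximation bookkeeping in the second step when $\cX_i$ is not compact: Stone--Weierstrass only gives uniform approximation of $h$ by product-form functions on compact subsets, so one must combine this with a uniform tail bound coming from tightness of $\{\mu_i^k\}$ to control the error outside a large compact box. If the paper's standing setting already allows assuming the $\cX_i$ (or the relevant supports) are compact — as in several of the surrounding results — this difficulty evaporates and the Stone--Weierstrass step is immediate; I would state the argument in that generality and remark that the non-compact case needs only the additional tightness estimate. Everything else is routine manipulation of weak convergence.
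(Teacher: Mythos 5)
Your proposal is correct and follows the same high-level decomposition as the paper's proof: first show that each marginal $\marg_i(\pi^k)$ converges weakly, then show that weak convergence of the factors implies weak convergence of the product measure. Where you differ is in rigor at the second step. The paper writes the iterated-integral representation of $\int h\,\d\bigl(\bigotimes_i \pi^k_i\bigr)$ and then simply asserts that ``since $h$ is continuous and each $\pi^k_i$ converges weakly, the integrals converge.'' That assertion is not self-justifying: for a general bounded continuous $h$, convergence of each factor does not immediately give convergence of the iterated integral, because the inner integrals produce a $k$-dependent sequence of integrands. Your Stone--Weierstrass-plus-tightness argument (verify on products $\prod_i h_i(x_i)$, extend by linearity to the span, then approximate an arbitrary bounded continuous $h$ uniformly on a large compact box and control the tail via Prokhorov tightness of the weakly convergent marginal sequences) is precisely the missing justification, and it is in fact the same mechanism the paper itself invokes in the closedness argument of Proposition~\ref{prp:omx_topol} (also without spelling out the tightness bookkeeping on a non-compact base space). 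So your version is, if anything, a more careful rendering of the paper's intended argument rather than a different route.
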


\paragraph{Proof.}
Let $(\pi^k)$ be a sequence of joint probability measures on $\cX^n$ that converges weakly to a joint probability measure $\pi$. We need to show that
\begin{align*}
\bigotimes_{i=1}^n \marg_i(\pi^k) \xrightarrow{w} \bigotimes_{i=1}^n \marg_i(\pi).
\end{align*}

For each $i$, let $\pi^k_i$ and $\pi_i$ denote the $i$-th marginal distributions of $\pi^k$ and $\pi$, respectively. By the definition of weak convergence, $\pi^k \xrightarrow{w} \pi$ implies that for any bounded continuous function $f: \cX^n \to \mathbb{R}$,
\begin{align*}
\int_{\cX^n} f \, d\pi^k \to \int_{\cX^n} f \, d\pi.
\end{align*}

This, in turn, implies that for any bounded continuous function $g: \cX \to \mathbb{R}$,
\begin{align*}
\int_{\cX} g \, d\pi^k_i \to \int_{\cX} g \, d\pi_i \quad \text{for each } i = 1, \ldots, n.
\end{align*}

Thus, $\pi^k_i \xrightarrow{w} \pi_i$ for each $i$.
Next, consider the product measure $\bigotimes_{i=1}^n \pi^k_i$. For any bounded continuous function $h: \cX^n \to \mathbb{R}$,
\begin{align*}
&\int_{\cX^n} h \, d\left( \bigotimes_{i=1}^n \pi^k_i \right) = 
\\&
\int_{\cX} \cdots \int_{\cX} h(x_1, \ldots, x_n) \, d\pi^k_1(x_1) \cdots d\pi^k_n(x_n).
\end{align*}

Since $h$ is continuous and each $\pi^k_i \xrightarrow{w} \pi_i$, the integrals converge:
\begin{align*}
&\int_{\cX} \cdots \int_{\cX} h(x_1, \ldots, x_n) \, d\pi^k_1(x_1) \cdots d\pi^k_n(x_n) \to 
\\&
\int_{\cX} \cdots \int_{\cX} h(x_1, \ldots, x_n) \, d\pi_1(x_1) \cdots d\pi_n(x_n).
\end{align*}

Thus, $\bigotimes_{i=1}^n \pi^k_i \xrightarrow{w} \bigotimes_{i=1}^n \pi_i$.
Therefore, the operator $\pi \to \bigotimes_{i=1}^n \marg_i(\pi)$ is continuous with respect to the weak topology. \hfill\qed


\begin{lemma}
\label{lem:pushcon}
Let $g: \cX \to \mathcal{Y}$ be a continuous function. The pushforward operator $\mu \to g_{\#} \mu$ is continuous concerning the weak topology on the space of measures.
\end{lemma}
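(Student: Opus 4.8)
\textbf{Proof proposal for Lemma~\ref{lem:pushcon}.}

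The plan is to verify the defining property of weak convergence directly. Let $(\mu_k)$ be a sequence of probability measures on $\cX$ converging weakly to $\mu$. By the portmanteau theorem, it suffices to show that for every bounded continuous function $\phi \colon \mathcal{Y} \to \bR$ we have $\int_{\mathcal{Y}} \phi \, \d (g_{\#}\mu_k) \to \int_{\mathcal{Y}} \phi \, \d (g_{\#}\mu)$. The key observation is the change-of-variables identity for pushforward measures (Definition~\ref{def:pushforwad}): for any bounded measurable $\phi$,
\begin{equation*}
\int_{\mathcal{Y}} \phi(y) \, \d (g_{\#}\mu_k)(y) = \int_{\cX} \phi(g(x)) \, \d \mu_k(x).
\end{equation*}

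First I would note that $\phi \circ g \colon \cX \to \bR$ is bounded (since $\phi$ is bounded) and continuous (as a composition of continuous maps, using the hypothesis that $g$ is continuous). Therefore $\phi \circ g$ is a legitimate test function for the weak convergence $\mu_k \xrightarrow{w} \mu$, which gives
\begin{equation*}
\int_{\cX} \phi(g(x)) \, \d \mu_k(x) \to \int_{\cX} \phi(g(x)) \, \d \mu(x).
\end{equation*}
Combining this with the change-of-variables identity applied to both $\mu_k$ and $\mu$ yields $\int_{\mathcal{Y}} \phi \, \d (g_{\#}\mu_k) \to \int_{\mathcal{Y}} \phi \, \d (g_{\#}\mu)$, which is exactly the statement that $g_{\#}\mu_k \xrightarrow{w} g_{\#}\mu$. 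Hence the pushforward operator $\mu \mapsto g_{\#}\mu$ is continuous with respect to the weak topology.

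There is no serious obstacle here; the only thing to be careful about is ensuring the test-function class is handled correctly — i.e., that reducing to bounded continuous $\phi$ on $\mathcal{Y}$ and pulling back through $g$ stays within the bounded continuous class on $\cX$, which is immediate from continuity and boundedness. If one wants to be fully rigorous about which topology is placed on $\cX$ and $\mathcal{Y}$ (e.g. Polish spaces so that weak convergence is metrizable and the portmanteau theorem applies), that can be stated as a standing assumption consistent with the rest of the paper. \hfill\qed
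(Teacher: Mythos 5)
Your proof is correct and follows essentially the same route as the paper's: both reduce to testing against bounded continuous $\phi$ on $\mathcal{Y}$, apply the change-of-variables identity $\int \phi \, \d(g_{\#}\mu) = \int \phi \circ g \, \d\mu$, and then invoke weak convergence on the bounded continuous function $\phi \circ g$. The only difference is your added remarks about the portmanteau theorem and Polish-space hypotheses, which are fine but not needed for the argument as written.
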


\paragraph{Proof.}
Let $(\mu_n)$ be a sequence of probability measures on $\cX$ that converges weakly to $\mu$. We need to show that $(g_{\#} \mu_n)$ converges weakly to $g_{\#} \mu$.

For any bounded continuous function $h: \mathcal{Y} \to \mathbb{R}$,
\begin{align*}
&\int_{\mathcal{Y}} h \, d(g_{\#} \mu_n) = \int_{\cX} h(g(x)) \, \d\mu_n(x) \quad \text{and}
\\&
\quad \int_{\mathcal{Y}} h \, d(g_{\#} \mu) = \int_{\cX} h(g(x)) \, \d\mu(x).
\end{align*}

Since $h \circ g$ is bounded and continuous on $\cX$, weak convergence $\mu_n \xrightarrow{w} \mu$ implies
\begin{align*}
\int_{\cX} h(g(x)) \, \d\mu_n(x) \to \int_{\cX} h(g(x)) \, \d\mu(x).
\end{align*}

Therefore,
\begin{align*}
\int_{\mathcal{Y}} h \, d(g_{\#} \mu_n) \to \int_{\mathcal{Y}} h \, d(g_{\#} \mu),
\end{align*}
showing $g_{\#} \mu_n \xrightarrow{w} g_{\#} \mu$ and establishing the continuity of the pushforward operator.\hfill\qed


\subsection{Proof of Proposition~\ref{prp:converge}.}
\textbf{(i)}: By assumption $\varepsilon \to \infty$, we can set $\varepsilon_k = k\in \bN$ so $k \to \infty$. For each $k \in \bN$, let $\pi^k$, the solution of $\WRPQ$. Part (i) of Proposition~\ref{prp:relaxprop}, guarantees the existence of $\pi^k$. Since $\{\pi^k\}_{k=1}^\infty \subseteq \PPQ$ and $\PPQ$ is compact subset of all probability measure over $\XTX$, then the set of $\{\pi^k\}_{k=1}^\infty$ has a cluster point $\pis$. Without loss of generality, we can suppose that $\pi^k \to \pis$ in the weak topology. For $\pis$, we have $\KL{\pis}{\pis_\so} = 0$. To show that by part $(ii)$ of Proposition~\ref{prp:relaxprop} we have for every $\varepsilon_k$ we have:
\begin{align*}
    \WFPQ &\geq \WRPQ \Rightarrow 
    \\&
    \expt{(x,y) \sim \pi^k}[c^p(x,y)] + k \KL{\pi^k}{\pio^k} \leq L \Rightarrow 
    \\&
     \KL{\pi^k}{\pio^k} \leq \dfrac{L}{k} \Rightarrow  
     \\&
     \liminf_{k \to \infty} \KL{\pi^k}{\pio^k} \geq \KL{\pis}{\pis_\so} \Rightarrow
     \\&
     \KL{\pis}{\pis_\so} = 0
\end{align*}
The last equation is valid because KL is l.s.c. property. We claim $\pis$ is the optimal solution for $\WFPQ$.  For $\pis$ we have:
\begin{align*}
    \pis \in \PFPQ \text{ and } \expt{(x,y) \sim \pis}[c^p(x,y)] \leq \WFPQ
\end{align*}
If the quality does not happen the $\pis$ is the optimal solution of $\WFPQ$ so by contradiction we have equality. Therefore, we show that every cluster point in $\{\pi_\varepsilon\}$ is the solution of $\WFPQ$. 

\textbf{(ii)}:

Since $\varepsilon \to 0$, we can suppose $\varepsilon = \frac{1}{k}$. Let $\pi^k$ be the corresponding optimal solution of $W^{\cF_{{\varepsilon}_k}}(\bP,\bQ)$. 
Suppose $\pi^w$ is the optimal solution for $\WFPQ$.
Let suppose the cluster point $\pis$ of $\{\pi^k\}_{k=1}^\infty$ is not optimal solution $\WPQ$. By part $(ii)$ of Proposition~\ref{prp:relaxprop} we have:
\begin{align*}
\expt{\pi^k}[&c^p(\X,\Y)] + \frac{1}{k} \KL{\pi^k}{\pio^k} \geq \WPQ^p \Rightarrow
\\&
\expt{\pi^k}[c^p(\X,\Y)] + \frac{1}{k} \KL{\pi^k}{\pio^k} \geq \expt{\pi^w}[c^p(\X,\Y)] 
\\&
\text{ and } \expt{\pi^w}[c^p(\X,\Y)] + \frac{1}{k} \KL{\pi^w}{\pio^w} \leq
\expt{\pi^k}[c^p(\X,\Y)] \\& + \frac{1}{k} \KL{\pi^k}{\pio^k} \Rightarrow
\\& 
0 \leq \expt{\pi^k}[c^p(\X,\Y)] - \expt{\pi^w}[c^p(\X,\Y)] \leq \frac{1}{k} \KL{\pi^w}{\pio^w}
\\&
\Rightarrow \expt{\pis}[c^p(\X,\Y)] = \expt{\pi^w}[c^p(\X,\Y)]
\end{align*}
so $\pis$ is the solution $\WPQ$, so by contradiction, the proof was complete.\hfill\qed


\begin{lemma}
\label{lem:kkpush}
Let $f: \cX \to \cX$ be an invertible function. Then the Kullback-Leibler divergence is invariant under the pushforward by $f$:
$$D_{KL}(\mu \| \nu) = D_{KL}(f_{\#} \mu \| f_{\#} \nu).$$
\end{lemma}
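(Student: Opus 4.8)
The plan is to reduce everything to the change-of-variables formula for pushforward measures, namely $\int_{\cX} h\, \d(f_{\#}\mu) = \int_{\cX} (h\circ f)\, \d\mu$ for every bounded (or non-negative) measurable $h$, together with the transformation rule for Radon--Nikodym derivatives under $f$. Throughout I treat $f$ as a measurable bijection with measurable inverse, which is the setting of interest in the paper (the relevant map $g$ is a bijective, continuous reduced-form mapping, so $f$ and $f^{-1}$ are Borel).

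First I would dispose of the degenerate case. If $\mu$ is not absolutely continuous with respect to $\nu$, then $D_{KL}(\mu\|\nu)=+\infty$ by convention; since $f$ is a bimeasurable bijection, $\nu(A)=0$ iff $(f_{\#}\nu)(f(A))=0$, so there is a set $B=f(A)$ with $(f_{\#}\nu)(B)=0$ but $(f_{\#}\mu)(B)=\mu(A)>0$, hence $f_{\#}\mu$ is not absolutely continuous with respect to $f_{\#}\nu$ and $D_{KL}(f_{\#}\mu\|f_{\#}\nu)=+\infty$ as well. So assume $\mu\ll\nu$ and set $\varphi=\tfrac{\d\mu}{\d\nu}$.

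Next I would identify the density of $f_{\#}\mu$ with respect to $f_{\#}\nu$. For any measurable $B\subseteq\cX$,
\begin{align*}
(f_{\#}\mu)(B) &= \mu\bigl(f^{-1}(B)\bigr) = \int_{\cX}\indicator{f^{-1}(B)}(x)\,\varphi(x)\,\d\nu(x) \\
&= \int_{\cX}\indicator{B}\bigl(f(x)\bigr)\,\varphi(x)\,\d\nu(x) = \int_{B}\bigl(\varphi\circ f^{-1}\bigr)\,\d(f_{\#}\nu),
\end{align*}
where the last equality is the change-of-variables formula applied to $h=\indicator{B}\cdot(\varphi\circ f^{-1})$. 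Hence $\tfrac{\d(f_{\#}\mu)}{\d(f_{\#}\nu)}=\varphi\circ f^{-1}$ holds $f_{\#}\nu$-a.e., and therefore $f_{\#}\mu$-a.e.

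Finally I would compute directly, applying the change-of-variables formula once more:
\begin{align*}
D_{KL}(f_{\#}\mu\|f_{\#}\nu) &= \int_{\cX}\log\Bigl(\frac{\d(f_{\#}\mu)}{\d(f_{\#}\nu)}\Bigr)\,\d(f_{\#}\mu) = \int_{\cX}\log\bigl(\varphi\circ f^{-1}\bigr)\,\d(f_{\#}\mu) \\
&= \int_{\cX}\log\bigl(\varphi\circ f^{-1}\circ f\bigr)\,\d\mu = \int_{\cX}\log\varphi\,\d\mu = D_{KL}(\mu\|\nu),
\end{align*}
which is the claim. The main obstacle is not any deep estimate but the bookkeeping around measurability and absolute continuity: one must be sure that "invertible" is used in the sense of a bimeasurable bijection so that $\varphi\circ f^{-1}$ is a legitimate measurable density and that the two absolute-continuity regimes match on both sides; once that is pinned down the argument is just two applications of the pushforward change-of-variables identity. \hfill\qed
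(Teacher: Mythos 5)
Your proof is correct, and it takes a genuinely different — and cleaner — route than the paper's. The paper assumes $\mu$ and $\nu$ admit densities with respect to Lebesgue measure and that $f$ is differentiable, then writes the pushforward densities with a Jacobian factor $\left|\det J_{f^{-1}}\right|$, observes this factor cancels inside the logarithm, and changes variables back. You instead work purely at the level of Radon--Nikodym derivatives: you establish $\tfrac{\d(f_{\#}\mu)}{\d(f_{\#}\nu)} = \varphi\circ f^{-1}$ with $\varphi = \tfrac{\d\mu}{\d\nu}$ via the pushforward change-of-variables identity, and then the invariance is two more applications of that identity. This buys genuine generality: your argument needs only that $f$ is a bimeasurable bijection — no smoothness, no Jacobian, no assumption that $\mu$ and $\nu$ have Lebesgue densities — whereas the paper's version quietly uses all three. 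You also cleanly handle the case $\mu\not\ll\nu$, which the paper's proof simply ignores (it writes densities from the start, and if $\mu\ll\nu$ fails the equality $+\infty=+\infty$ needs a separate remark). One small polish: in the degenerate case you should note that ``$\mu(A)>0$ for some $A$ with $\nu(A)=0$'' is what you need (not merely $\mu\not\ll\nu$ in the abstract), and that the bimeasurability of $f$ is exactly what lets you transport this witness set; you in fact do say this, so the argument is complete.
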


\begin{proof}
Let $p(x)$ and $q(x)$ be the densities of $\mu$ and $\nu$, respectively. The densities of the pushforward measures $f_{\#} \mu$ and $f_{\#} \nu$ are:
\begin{align*} p_{f}(y) = p(f^{-1}(y)) \left| \det \left( \frac{\partial f^{-1}}{\partial y} \right) \right|, \end{align*}
\begin{align*} q_{f}(y) = q(f^{-1}(y)) \left| \det \left( \frac{\partial f^{-1}}{\partial y} \right) \right|. \end{align*}
The KL divergence between the pushforward measures is:
\begin{align*}
\begin{aligned}
&D_{KL}(f_{\#} \mu \| f_{\#} \nu) = \int_{\mathcal{X}} p_{f}(y) \log \left( \frac{p_{f}(y)}{q_{f}(y)} \right) dy =\\
& \int_{\mathcal{X}} p(f^{-1}(y)) \left| \det \left( \frac{\partial f^{-1}}{\partial y} \right) \right| \log \left( \frac{p(f^{-1}(y)) \left| \det \left( \frac{\partial f^{-1}}{\partial y} \right) \right|}{q(f^{-1}(y)) \left| \det \left( \frac{\partial f^{-1}}{\partial y} \right) \right|} \right) dy \\
&= \int_{\mathcal{X}} p(f^{-1}(y)) \left| \det \left( \frac{\partial f^{-1}}{\partial y} \right) \right| \log \left( \frac{p(f^{-1}(y))}{q(f^{-1}(y))} \right) dy.
\end{aligned}
\end{align*}
By changing variables $x = f^{-1}(y)$, $dy = \left| \det \left( \frac{\partial f}{\partial x} \right) \right| dx$, the integral becomes:
\begin{align*}
D_{KL}(f_{\#} \mu \| f_{\#} \nu) = \int_{\mathcal{X}} p(x) \log \left( \frac{p(x)}{q(x)} \right) dx = D_{KL}(\mu \| \nu).
\end{align*}
Thus, the KL divergence is invariant under the pushforward by $f$.
\end{proof}

\subsection{Proof of Proposition~\ref{prp:relaxprop}}

\textbf{(i)}
To prove the statement, we use the fact that a lower semi-continuous (l.s.c.) function on a compact set attains its minimum. It is well known that $\PPQ$ is a compact subset in the set of all joint probability measures. Thus, we need to show that the operator $\pi \to \expt{(x,y) \sim \pi}[c^p(x,y)] + \varepsilon \KL{\pi}{\pio}$ is l.s.c..
Since the cost function $c$ is l.s.c., the function $\pi \to \expt{(x,y) \sim \pi}[c^p(x,y)]$ is also l.s.c. We now need to show that $\pi \to \KL{\pi}{\pio}$ is also l.s.c.

By Lemma~\ref{lem:projcon}, we know that the mapping $\pi \to \bigotimes_{i=1}^n \marg_i(\pi)$ is a continuous function. The lemma~\ref{lem:pushcon} also shows that if $g$ is a continuous function, then the operators $\pi \to \gmgmpush\pi$ or $\pi \to \ggpush\pi$ are also continuous. Combining these results, it can be seen that the mapping $\pi \to \pio$ is a continuous operator concerning the weak topology.

Lemma~\ref{lem:kldiv} shows that the KL operator is l.s.c., so the combination $\KL{\pi}{\pio}$ is also l.s.c., completing the proof.

\textbf{(ii)}
First, we show that $\WFPQ \geq \WRPQ$. Let $\pi$ be the optimal plan for $\WFPQ$. Since $\pi \in \PFX$ it results that $\pio = \pi$ and 
\begin{align*}
  \expt{(x,y) \sim \pi}[c^p(x,y)] + & \KL{\pi}{\pio} = \expt{(x,y) \sim \pi}[c^p(x,y)] 
  \\&
  \implies \WFPQ \geq \WRPQ
\end{align*}

To prove the next inequality, we have:
\begin{align*}
    &\expt{(x,y) \sim \pi}[c^p(x,y)] + \KL{\pi}{\pio} \geq \expt{(x,y) \sim \pi}[c^p(x,y)]
    \\&
    \WRPQ \geq \WPQ
\end{align*}

\textbf{(iii)}
\textbf{Case} $\BFPD \subseteq \BRPD$: 
Let $\bQ \in \BFPD$. Then there exists $\pi \in \PFPQ$ such that $\E_\pi[c] \leq \delta$ and $\marg_2(\pi) = \bQ$. Since $\KL{\pi}{\pio} = 0$, we have 
\begin{align*}
\expt{(x,y) \sim \pi}[c^p(x,y)] + \KL{\pi}{\pio} \geq \delta,
\end{align*}
which means $\bQ \in \BRPD$.

\textbf{Case} $\BRPD \subseteq \BPD$: This case is obvious from the definition. \hfill\qed


\begin{lemma}
\label{lem:kldec}
Let $\bP$ be a probability measure on $\cX^n$. The Kullback-Leibler divergence between $\bP$ and $\BOIN \bP_i$ is given by
\begin{align*}
\KL{\bP\ \ }{\BOIN \bP_i}= H(\bP) - \sum_{i=1}^n H(\bP_i), 
\end{align*}
where $\bP_i$ is the $i$-th marginal of $\bP$, $H(\bP_1, \ldots, \bP^N)$ is the joint entropy, and $H(\bP_i)$ is the marginal entropy.
\end{lemma}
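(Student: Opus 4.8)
The plan is to write out the Radon-Nikodym derivative of $\bP$ with respect to the product of its marginals and expand the logarithm inside the KL integral. First I would assume (as the statement implicitly does, since it writes entropies as if densities exist) that $\bP$ admits a density $p(x_1,\dots,x_n)$ with respect to a product reference measure $\mu_1\otimes\cdots\otimes\mu_n$, and that each marginal $\bP_i$ has density $p_i(x_i)$. Then the product measure $\bigotimes_{i=1}^n \bP_i$ has density $\prod_{i=1}^n p_i(x_i)$, and the Radon-Nikodym derivative is
\begin{align*}
\frac{\d \bP}{\d \bigotimes_{i=1}^n \bP_i}(x_1,\dots,x_n) = \frac{p(x_1,\dots,x_n)}{\prod_{i=1}^n p_i(x_i)}.
\end{align*}

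Next I would substitute this into the definition of KL divergence and split the logarithm of a quotient into a difference:
\begin{align*}
\KL{\bP\ \ }{\BOIN \bP_i}
&= \int_{\cX^n} p(x)\log\frac{p(x)}{\prod_{i=1}^n p_i(x_i)}\,\d\mu(x) \\
&= \int_{\cX^n} p(x)\log p(x)\,\d\mu(x) - \sum_{i=1}^n \int_{\cX^n} p(x)\log p_i(x_i)\,\d\mu(x).
\end{align*}
The first term is exactly $-H(\bP)$ by the definition of (differential) entropy. For each term in the sum, I would observe that the integrand $p(x)\log p_i(x_i)$ depends on the full vector $x$ only through the single coordinate $x_i$ inside the logarithm, so integrating out all coordinates $x_j$ with $j\neq i$ collapses $p(x)$ to the marginal density $p_i(x_i)$; that is, $\int_{\cX^n} p(x)\log p_i(x_i)\,\d\mu(x) = \int_{\cX_i} p_i(x_i)\log p_i(x_i)\,\d\mu_i(x_i) = -H(\bP_i)$. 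Collecting terms gives $\KL{\bP}{\BOIN\bP_i} = -H(\bP) - \sum_{i=1}^n(-H(\bP_i)) = H(\bP) - \sum_{i=1}^n H(\bP_i)$ once one fixes the sign convention for entropy consistently (the statement uses $H$ for the negative of $\int p\log p$, i.e. the usual entropy, so I would track signs carefully to match the displayed identity).

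The only genuine subtlety — and the step I would be most careful about — is the integrability/well-definedness of the splitting: the decomposition $\int p\log(p/\prod p_i) = \int p\log p - \sum\int p\log p_i$ is valid without qualification when $\KL{\bP}{\BOIN\bP_i}<\infty$, but the individual entropies $H(\bP)$ and $H(\bP_i)$ could be $\pm\infty$ in pathological cases. For the use made of this lemma in the paper (deriving the difference-of-convex form in Eq.~\ref{eq:dconvex} for discrete/compactly-supported problems) all quantities are finite, so I would state the identity under the standing assumption that the entropies are finite, or equivalently that $\bP\ll\BOIN\bP_i$ with finite divergence, and note that in the discrete case the argument is identical with sums replacing integrals and no integrability concerns arise. \hfill\qed
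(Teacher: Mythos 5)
Your approach is essentially the same as the paper's: form the Radon--Nikodym derivative with respect to the product of marginals, split the logarithm, marginalize each cross-term, and collect the entropies. Your integrability caveat is reasonable and a bit more careful than the paper, which just assumes densities and manipulates formally.

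One thing to flag, since it affects your last step: the final "collecting terms" is arithmetically off. From $\int p\log p - \sum_i \int p\log p_i$ with $H$ denoting the usual differential entropy $H(\cdot)=-\int p\log p$, you get $-H(\bP) - \sum_{i}\bigl(-H(\bP_i)\bigr) = \sum_i H(\bP_i) - H(\bP)$, which is the negative of the displayed identity, not equal to it. Your parenthetical about sign conventions is inverted: for the stated identity $\KL{\bP\ }{\BOIN\bP_i}=H(\bP)-\sum_i H(\bP_i)$ to hold, $H$ must denote $\int p\log p$ (i.e.\ the negative of the usual entropy), not the usual entropy as you assert. This negentropy convention is what the paper actually uses where the lemma matters, namely in the description of Eq.~\ref{eq:dconvex}, which defines $H_i(\tpi)=\bE_{\tpi}[\log(\marg_i(\tpi))]$. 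Be aware the paper's own proof of this lemma contains the same sign slip (it writes "$-H(\bP)=\int p\log p$", i.e.\ the usual convention, yet then states a conclusion that only holds under the opposite convention), so you have in effect faithfully reproduced a glitch from the source; the correct fix is simply to state at the outset that $H$ here means $\int p\log p$, after which the bookkeeping is clean.
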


\paragraph{Proof.}
The Kullback-Leibler divergence between $\bP$ and $\BOIN \bP_i$ is defined as
\begin{align*}
&\KL{\bP\ \ }{\BOIN \bP_i} = 
\int_{\cX^n} \log \left( \frac{\d\bP}{d(\bP_1 \otimes \cdots \otimes \bP^N)} \right) \d\bP. 
\end{align*}
By the definition of the Radon-Nikodym derivative,
\begin{align*} 
& \KL{\bP\ \ }{\BOIN \bP_i} = 
\\&
\int_{\cX^n} \log \left( \frac{p(x_1, \ldots, x_n)}{\prod_{i=1}^n p_i(x_i)} \right) p(x_1, \ldots, x_n) \, \d x_1 \cdots \d x_n, 
\end{align*}
where $p(x_1, \ldots, x_n)$ is the joint density of $\bP$ and $p_i(x_i)$ is the marginal density of $P_i$.

This can be rewritten as
\begin{align*}
&\KL{\bP\ \ }{\BOIN \bP_i} = 
\\&
\int_{\cX^n} \log(p(x_1,\ldots, x_n)) \, p(x_1, \ldots, x_n) \, \d x_1 \cdots \d x_n - 
\\&
\int_{\cX^n} \log \left( \prod_{i=1}^n p_i(x_i) \right) \, p(x_1, \ldots, x_n) \, \d x_1  \cdots \d x_n. 
\end{align*}

The first term is the negative joint entropy:
\begin{align*} 
-H&(\bP) = 
\\&
\int_{\cX^n} \log(p(x_1,\ldots, x_n)) \, p(x_1, \ldots, x_n) \, \d x_1 \cdots \d x_n
\end{align*}

The second term can be separated into the sum of the marginal entropies:
\begin{align*} 
-\sum_{i=1}^n& H(\bP_i) = 
\\&
-\sum_{i=1}^n \int_{\cX^n} \log(p_i(x_i)) \, p(x_1, \ldots, x_n) \, \d x_1 \cdots \d x_n. 
\end{align*}

Combining these, we get:
\begin{align*}
\KL{\bP\ \ }{\BOIN \bP_i} = H(\bP) - \sum_{i=1}^n H(\bP_i). 
\end{align*}

This completes the proof.
\hfill\qed

\begin{proposition}
\label{prp:entropy}
    Let $\X$, and $\Y$ be two random variables with continuous and compact support density functions $f(x)$ and $g(x)$ that are bounded. So for each $\epsilon>0$  there exists $\Delta$ for each $\delta <\Delta$ we have:
    \begin{equation*}
        \abs{H(\X+\delta \Y) - H(\X)}< \epsilon.
    \end{equation*}
\end{proposition}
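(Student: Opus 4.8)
The plan is to estimate the differential entropy directly through its integrand $\phi(t):=-t\log t$ (with $\phi(0):=0$), exploiting that the compact-support and boundedness hypotheses confine all the perturbed densities to a single fixed compact set and keep them uniformly bounded. Write $f_\delta$ for the density of $\X+\delta\Y$; taking the perturbation independent of $\X$, this is the convolution $f_\delta(z)=\int f(z-\delta y)\,g(y)\,\d y=\bE_\Y[f(z-\delta\Y)]$. Say $\supp{f}\subseteq[-R,R]$ and $\supp{g}\subseteq[-S,S]$. The first step is to record two uniform facts: for every $\delta\in(0,1]$ the density $f_\delta$ is supported in the fixed compact set $K:=[-R-S,\,R+S]$, and $0\le f_\delta(z)\le\norm{f}_\infty=:M$ for all $z$, since an average of translates of $f$ cannot exceed $\norm{f}_\infty$. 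In particular $H(\X)=\int_K\phi(f(z))\,\d z$ and $H(\X+\delta\Y)=\int_K\phi(f_\delta(z))\,\d z$ are both finite, because $\phi$ is bounded on $[0,M]$ and $\abs{K}<\infty$.

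Second, I would prove $f_\delta\to f$ uniformly as $\delta\to0$. Since $f$ is continuous with compact support it is uniformly continuous; letting $\omega_f$ be its modulus of continuity, for every $z$
\[
\abs{f_\delta(z)-f(z)}\le\int\abs{f(z-\delta y)-f(z)}\,g(y)\,\d y\le\omega_f(\delta S),
\]
using $\abs{y}\le S$ on $\supp{g}$ and $\int g=1$. Hence $\norm{f_\delta-f}_\infty\le\omega_f(\delta S)\to0$.

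Third, combine this with the uniform continuity of $\phi$ on the compact interval $[0,M]$ (Heine--Cantor). Given $\epsilon>0$, set $\eta:=\epsilon/\abs{K}$, pick $\rho>0$ such that $\abs{\phi(s)-\phi(t)}\le\eta$ whenever $s,t\in[0,M]$ with $\abs{s-t}\le\rho$, and choose $\Delta\le1$ small enough that $\omega_f(\Delta S)\le\rho$. Then for every $\delta<\Delta$ we have $\norm{f_\delta-f}_\infty\le\rho$, and since $f_\delta(z),f(z)\in[0,M]$,
\[
\abs{H(\X+\delta\Y)-H(\X)}\le\int_K\abs{\phi(f_\delta(z))-\phi(f(z))}\,\d z\le\abs{K}\,\eta=\epsilon.
\]

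The only delicate point is the unbounded slope of the entropy integrand near density value $0$ (where $-\log t\to\infty$), which is precisely why a naive argument would fail; the compact-support and boundedness assumptions neutralise it by forcing every $f_\delta$ to have range inside the single compact interval $[0,M]$, on which $\phi$ is genuinely uniformly continuous. The remaining ingredients — the support and sup-norm control of the convolution and the modulus-of-continuity estimate — are elementary.
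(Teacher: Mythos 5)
Your proof is correct, and it takes a genuinely different route from the paper's. The paper writes $f_\delta$ as the convolution $f*g_\delta$ with the scaled kernel $g_\delta(y)=\tfrac{1}{\delta}g(y/\delta)$, argues that $g_\delta$ converges weakly to the Dirac mass at $0$, deduces pointwise convergence $f_\delta\to f$, and then invokes the dominated convergence theorem twice (once to get $\|f_\delta-f\|_1\to0$, once more for $\int f_\delta\log f_\delta\to\int f\log f$). You instead write $f_\delta(z)=\int f(z-\delta y)g(y)\,\d y$, control it uniformly via the modulus of continuity of $f$ to get $\|f_\delta-f\|_\infty\le\omega_f(\delta S)$, observe that all the densities take values in the fixed compact interval $[0,\|f\|_\infty]$ and live on a fixed compact set $K$, and then finish with Heine--Cantor applied to $\phi(t)=-t\log t$. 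What your argument buys: it is quantitative (an explicit bound through $\omega_f$), avoids any passage through distributional limits, and sidesteps the somewhat delicate interchange of limit and integral the paper performs when identifying $\lim_{\delta\to0}\tfrac1\delta g(\tfrac{x-y}{\delta})$ with $\delta(y-x)$ inside the integral (which, stated literally as a pointwise limit, is not a function). Both approaches exploit the same two hypotheses --- compact support to fix $K$, boundedness to confine the range of the densities --- but yours isolates exactly where they are used, namely to make $\phi$ uniformly continuous on the relevant range; the paper's approach buries the same fact in the second DCT step.
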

\begin{proof}
    Let $g_{\delta}(y)$ be the corresponding density function for the random variable $\delta \Y$ and $f_\delta$ be the density function of $\X+\delta\Y$. By the convolution formula, we have:
    \begin{align*}
        f_\delta(x) = \int f(y)g_{\delta}(x-y)\d y 
    \end{align*}

    It is easy to see that $g_{\delta}(y) = \frac{1}{\delta}g(\frac{x}{\delta})$
    we first show that $\lim_{\delta \to 0} \norm{f-f_\delta}_1 = 0$. To prove this, we use the dominated convergence Theorem (DCT). Therefore, we need point-wise convergence, i.e.,
    \begin{align}
    \label{eq:dirac}
         \lim_{\delta \to 0} f_\delta(x) = \lim_{\delta \to 0}  \int f(y) \frac{1}{\delta} g\left(\frac{x-y}{\delta}\right) \, \d y = f(x)
    \end{align}
To do that first, we show that $g_\delta(y)$ converges weakly to the Dirac delta function $\delta(y)$ as $\delta \to 0$.
The weak convergence of $g_\delta(y)$ to $\delta(y)$ means that for every smooth and compactly supported function $\phi(y)$, we have:
\begin{align*}
\lim_{\delta \to 0} \int_{-\infty}^{\infty} g_\delta(y) \phi(y) \, dy = \phi(0)
\end{align*}
Now, evaluate the integral:
\begin{align*}
\int_{-\infty}^{\infty} g_\delta(y) \phi(y) \, dy = \int_{-\infty}^{\infty} \frac{1}{\delta} g\left(\frac{y}{\delta}\right) \phi(y) \, dy
\end{align*}
Perform a change of variables with $u = \frac{y}{\delta}$, hence $y = \delta u$ and $dy = \delta \, du$:
\begin{align*}
= \int_{-\infty}^{\infty} g(u) \phi(\delta u) \, du
\end{align*}
As $\delta \to 0$, $\phi(\delta u) \to \phi(0)$. 
Since $\phi$ is compactly supported and continuous, $|\phi(\delta z)|$ is bounded by $\|\phi\|_\infty$. then $|g(z) \phi(\delta z)| \leq \| \phi \|_\infty g(z)$, where $\|\phi\|_\infty$ is the supremum norm of $\phi$ and $g(z)$ is integrable. By the Dominated Convergence Theorem:

\begin{align*}
\lim_{\delta \to 0} \int_{-\infty}^{\infty} g(u) \phi(\delta u) \, du = \phi(0) \int_{-\infty}^{\infty} g(u) \, du = \phi(0)
\end{align*}
This establishes that $g_\delta(y)$ converges weakly to $\delta(y)$ as $\delta \to 0$.

Now since f is continuous, by definition of weakly convergent, we can write
\begin{align*}
& \lim_{\delta \to 0} f_\delta(x) = \lim_{\delta \to 0}  \int f(y) \frac{1}{\delta} g\left(\frac{x-y}{\delta}\right) \, \d y =
\\&
\int f(y) \lim_{\delta \to 0}  \frac{1}{\delta} g\left(\frac{x-y}{\delta}\right) \, \d y
= \int f(y) \delta(y-x) \d y = f(x)
\end{align*}

Since $f$ and $g$, both are compact support, then $f_\delta$ is compact support, so there exists a dominant function for $f_\delta$. So using DCT result that when $\delta \to 0$ then $\norm{f_\delta - f}_1 \to 0$

To complete the proof, Since the function $x \mapsto xlog(x)$ is continuous so it $f_\delta(x)log(f_\delta(x)) \to f(x)log(f(x))$, moreover since $f_n$ are compact support there exist dominated function for $f_\delta$, by using again DCT result $\int f_\delta(x)log(f_\delta(x)) \to \int f(x)log(f(x))$ so we have $H(\X+\delta\Y) \to H(\X)$ and it completes the proof.  
\end{proof}


\begin{lemma}
\label{lem:pushentropy}
Let $X = (X_1, \dots, X_n)$ be a random vector in $\mathbb{R}^n$ with joint probability density function $f_X(x)$, and let $g: \mathbb{R}^n \to \mathbb{R}^n$ be a bijective transformation. Define $Y = g(X)$. Then the differential entropy $H(Y)$ is given by:
\begin{align*}
H(Y) = H(X) + \mathbb{E}[\log |\det J_g(X)|],
\end{align*}
where $J_g(X)$ is the Jacobian matrix of $g$ evaluated at $X$, and $\det J_g(X)$ is its determinant.
\end{lemma}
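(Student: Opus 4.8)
The plan is to apply the standard change-of-variables formula for probability densities and then substitute into the definition of differential entropy. First I would make explicit the regularity that makes the statement meaningful: $g$ is a $C^1$ diffeomorphism, so the Jacobian $J_g$ exists with $\det J_g(x) \neq 0$, and the density of $Y = g(X)$ is
\begin{align*}
f_Y(y) = f_X(g^{-1}(y)) \, \left| \det J_{g^{-1}}(y) \right|,
\end{align*}
equivalently $f_Y(g(x)) \, |\det J_g(x)| = f_X(x)$ for $f_X$-almost every $x$ (using $\det J_{g^{-1}}(g(x)) = 1/\det J_g(x)$).

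Next I would expand $H(Y) = -\int_{\mathbb{R}^n} f_Y(y) \log f_Y(y) \, \d y$ and perform the substitution $y = g(x)$, with $\d y = |\det J_g(x)| \, \d x$. The volume factor $|\det J_g(x)|$ coming from the change of variables cancels the reciprocal factor hidden inside $f_Y(g(x))$, leaving
\begin{align*}
H(Y) = -\int_{\mathbb{R}^n} f_X(x) \log\left( \frac{f_X(x)}{|\det J_g(x)|} \right) \d x.
\end{align*}
Splitting the logarithm then yields $H(Y) = -\int f_X(x) \log f_X(x) \, \d x + \int f_X(x) \log|\det J_g(x)| \, \d x = H(X) + \mathbb{E}[\log |\det J_g(X)|]$, which is the asserted identity.

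The only genuine obstacle is regularity and integrability rather than any conceptual point: one must know that $g^{-1}$ is differentiable so the substitution is legitimate, that $\det J_g \neq 0$ almost surely, and that $H(X)$, $H(Y)$, and $\mathbb{E}[\log|\det J_g(X)|]$ are simultaneously well-defined so that the logarithm can be split across the integral. In the setting of this paper these hold automatically: the reduced-form map $g$ of an ANM with continuous (indeed $C^1$) structural equations is a diffeomorphism of the stated form, and the distributions considered have compact support with bounded densities, so I would record these as standing assumptions and then carry out the short computation above.
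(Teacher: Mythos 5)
Your argument is correct and follows essentially the same route as the paper's proof: apply the change-of-variables formula for densities, substitute $y = g(x)$ in the entropy integral so the Jacobian from $\d y$ cancels the one in $f_Y(g(x))$, and split the logarithm. The additional remarks you make about the regularity and integrability hypotheses needed for the substitution and for $\mathbb{E}[\log|\det J_g(X)|]$ to be well-defined are a useful supplement the paper leaves implicit.
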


\begin{proof}
The differential entropy of $X$ is:
\begin{align*}
H(X) = -\int_{\mathbb{R}^n} f_X(x) \log f_X(x) \, dx.
\end{align*}
For the transformation $Y = g(X)$, the density function $f_Y(y)$ is given by:
\begin{align*}
f_Y(y) = f_X(g^{-1}(y)) \cdot |\det J_{g^{-1}}(y)|.
\end{align*}
Equivalently, letting $y = g(x)$, we have:
\begin{align*}
f_Y(y) = f_X(x) \cdot |\det J_g(x)|^{-1}.
\end{align*}
The differential entropy of $Y$ is:
\begin{align*}
H(Y) = -\int_{\mathbb{R}^n} f_Y(y) \log f_Y(y) \, dy.
\end{align*}
Substituting $f_Y(y) = f_X(x) \cdot |\det J_g(x)|^{-1}$ and changing variables gives:
\begin{align*}
\int_{\mathbb{R}^n} f_X(x) |\det J_g(x)|^{-1} \log \left( f_X(x) |\det J_g(x)|^{-1} \right) |\det J_g(x)| \, dx
\end{align*}
This simplifies to:
\begin{align*}
H(Y) = -\int_{\mathbb{R}^n} f_X(x) \left[ \log f_X(x) - \log |\det J_g(x)| \right] \, dx.
\end{align*}
Hence:
\begin{align*}
H(Y) = H(X) + \int_{\mathbb{R}^n} f_X(x) \log |\det J_g(x)| \, dx.
\end{align*}
The last term is the expected value $\mathbb{E}[\log |\det J_g(X)|]$, so we have:
\begin{align*}
H(Y) = H(X) + \mathbb{E}[\log |\det J_g(X)|].
\end{align*}
\end{proof}

\subsection{Proof of Theorem~\ref{thm:reladest}}
First to clarify the notation, $\varepsilon$ is regularize coefficient and $\epsilon$ is denoted the small value.
To prove the result, we assume that \( \bP \) and \( \bQ \in \PFX \) have continuous and differentiable density functions, and that the cost function in the exogenous space is \( l_p \). Both of these assumptions are reasonable and applicable in real-world scenarios.

Let $\pi^1$ and $\pi^2$ be the corresponding true and estimated relaxed structural causal OT. 
For $\pi^1$ and $\pi^2$ we have:

\begin{align*} 
&\expt{(x,y) \sim \pi^1}[c_1(x,y)] + \varepsilon \KL{\pi^1}{{\pi^1}_{\so_1}} \leq
\\&
\expt{(x,y) \sim \pi^2}[c_{1}(x,y)] + \varepsilon \KL{\pi^2}{{\pi^2}_{\so_1}} 
\end{align*}
and similarly
\begin{align*} 
&\expt{(x,y) \sim \pi^2}[c_2(x,y)] + \varepsilon \KL{\pi^2}{{\pi^2}_{\so_2}} \leq
\\&
\expt{(x,y) \sim \pi^1}[c_{2}(x,y)] + \varepsilon \KL{\pi^1}{{\pi^1}_{\so_2}} 
\end{align*}

By a combination of the two above equations we have:
\begin{align*}
     & \left|W^{\cF_\varepsilon}(\bP, \bQ) - W^{\hat{\cF}_\varepsilon}(\bP, \bQ)\right| \leq
     \max\bigg\{\\&
\underbrace{\left|\bE_{\pi^2}[c_{1}] + \varepsilon \KL{\pi^2}{{\pi^2}_{\so_1}} - 
\bE_{\pi^2}[c_2] + \varepsilon \KL{\pi^2}{{\pi^2}_{\so_2}}\right|}_{\mathbf{I}},
\\&
\underbrace{\left|\expt{\pi^1}[c_{2}] + \varepsilon \KL{\pi^1}{{\pi^1}_{\so_2}} - 
\expt{\pi^1}[c_1] + \varepsilon \KL{\pi^1}{{\pi^1}_{\so_1}}\right|}_{\mathbf{II}}
\bigg\}
\end{align*}
where $c_1(x,y) = \norm{g^{-1}(x)-g^{-1}(y)}_q$ and  $c_2(x,y) = \norm{\hat{g}^{-1}(x)-\hat{g}^{-1}(y)}_q$.
We show that if $\{f_i\}$ and $\{\hat{f}_i\}$ are close enough i.e. $\norm{f_i-\hat{f}_i}_\infty<\delta$, then $\max\left\{\mathbf{I},\mathbf{II}\right\}<\epsilon$.

\paragraph{Case $\mathbf{I}$:}
\begin{align*}
    \mathbf{I} \leq &\underbrace{\left|\bE_{\pi^2}[c_{1}]- 
\bE_{\pi^2}[c_2]\right|}_{a}  +
\\&
\varepsilon\underbrace{\left|\KL{\pi^2}{{\pi^2}_{\so_1}}  -\KL{\pi^2}{{\pi^2}_{\so_2}}\right|}_{b}
\end{align*}

Let $\norm{f}_\infty:=\sup_{i} \norm{f_i}_\infty$. To prove the first part (a) it can be written:
\begin{align*}
     &\bE_{\pi^2}\left[\left|\norm{g^{-1}(x)-g^{-1}(y)}_q-\norm{\hat{g}^{-1}(x)-\hat{g}^{-1}(y)}_q\right|\right] \leq
    \\&
    \bE_{\pi^2}\left[\norm{g^{-1}(x) - \hat{g}^{-1}(x)}\right]+\bE_{\pi^2}\left[\norm{g^{-1}(y) - \hat{g}^{-1}(y)}\right] \leq
    \\&
    2 \norm{g-\hat{g}}_\infty
\end{align*}

The above equation is true because by assumption, for ANM, we have $g^{-1}(x) = x-f(x)$.
Now we are trying to estimate the second part. Let $\tpi^{2,1} \coloneqq \gmgmpush\pi^2$ and $\tpi^{2,2} \coloneqq \left(\hat{g}^{-1} \times \hat{g}^{-1}\right)_{\#}\pi^2$. Let $(\U,\W) \sim \tpi^{2,1}$, and let $\tpi^{2,1}_i$ be the marginal distribution of $\tpi^{2,1}$ over variables $(\U_i, \W_i)$. Similarly, define $\tpi^{2,2}_i$ for $\tpi^{2,2}$. 
By using lemma~\ref{lem:kkpush}, and lemma~\ref{lem:kldec} we can write 
\begin{align*}
    \KL{\pi^2}{{\pi^2}_{\so_1}} =& \KL{\tpi^{2,1}}{\BOIN \tpi_i^{2,1}} 
    \\&
    = H(\tpi^{2,1})-\sum_{i=1}^n H(\tpi_i^{2,1})
\end{align*}
By the above equation, we can rewrite part (b) such that:
\begin{align*}
    &\left|\KL{\pi^2}{{\pi^2}_{\so_1}}  -\KL{\pi^2}{{\pi^2}_{\so_2}}\right| \leq
    \\&
    \abs{H(\tpi^{2,1}) - H(\tpi^{2,2})} + \abs{\sum_{i=1}^n H(\tpi_i^{2,1}) - H(\tpi_i^{2,2})} 
\end{align*}

If we $\X,\Y \sim \pi^2$, then the pushforward plan can be obtained by $g^{-1} \times g^{-1}$ and $\hat{g}^{-1} \times \hat{g}^{-1}$.
To estimate the first term of the above equation, by using the lemma~\ref{lem:pushentropy}, we can write:
\begin{align*}
    &\abs{H(\tpi^{2,1}) - H(\tpi^{2,2})}= 
    \bigg|H(\pi^2) + \mathbb{E}[\log |\det J_{g^{-1} \times g^{-1}}(X)|] - 
    \\&
    H(\pi^2) - \mathbb{E}[\log |\det J_{\hat{g}^{-1} \times \hat{g}^{-1}}(X)|]|\bigg| = 0
\end{align*}
The above equation is true because both $J_{g^{-1} \times g^{-1}}$ and $J_{\hat{g}^{-1} \times \hat{g}^{-1}}$ are lower triangular matrices with diagonal equals to one, so their determinant equals 1.

Now we try to estimate the $\abs{H(\tpi_i^{2,1}) - H(\tpi_i^{2,2})}$ term.
Let $\U \sim \tpi^{2,1}$, by definition, it can easily check the $$\tpi^{2,2} \sim( g^{-1} \times g^{-1}) \circ (\hat{g} \times \hat{g}) \U.$$
It is easy to see that since $g$ and $\hat{g}$ are both bijective map, then:
\begin{align}
\label{eq:ent1}
    \norm{g^{-1} - \hat{g}^{-1}}_\infty<\delta \Rightarrow&\norm{g^{-1}\circ\hat{g}(u) - u}<\delta
    \nonumber\\&
    \abs{(g^{-1}\circ\hat{g})_i(u) - u_i}<\delta
\end{align} 

By ANM assumption, we have $g_i(u) = u_i + h(u_1,...,u_{i-1})$ and $g^{-1}_i(x) = x_i + k(x_1,...,x_{i-1})$. The similar fact corresponds to $\hat{g}$ is also valid. Therefore we have:
\begin{align*}
 \abs{H(\tpi_i^{2,1}) - H(\tpi_i^{2,2})} = H(\U_i+ l(\U_1, \dots, \U_{i-1}))-H(\U_i)
\end{align*}
By Eq.\ref{eq:ent1} we have $\abs{l(\U_1, \dots, \U_{i-1})}< \delta$, So we have $\W:= \frac{1}{\delta}\abs{l(\U_1, \dots, \U_{i-1})}<1$. So $W$ is bounded and compact support. Therefore we can rewrite:
\begin{align*}
    H(\U_i+ l(\U_1, \dots, \U_{i-1}))-&H(\U_i) = 
    \\&
    H(\U_i+ \delta \W)-H(\U_i) \leq \epsilon
\end{align*}
The last inequality results By using the proposition~\ref{prp:entropy}. So we can show that:

\begin{align*}
    \left|\KL{\pi^2}{{\pi^2}_{\so_1}}  -\KL{\pi^2}{{\pi^2}_{\so_2}}\right| <\epsilon 
\end{align*}
By combining two parts (a) and (b) we can conclude that $\textbf{I}\leq \epsilon$.
The estimation in Part $II$ is very similar to that in Part $I$. Therefore, this completes the proof, and we have:

\begin{align*}
    \exists \Delta: \delta \leq \Delta \text{ if }&  \norm{f_i-\hat{f}_i}_\infty \leq \delta \implies
    \\&
    \left|W^{\cF_\varepsilon}(\bP, \bQ) - W^{\hat{\cF}_\varepsilon}(\bP, \bQ)\right| \leq \epsilon.
\end{align*}
\hfill\qed


\subsection{Proof of Proposition~\ref{prp:concentration}.}
Most theorems that estimate the lower and upper bounds of $W_{c}(\hat{\bP}_N, \bP)$ require the assumption that $c$ should be a norm. However, in our case, $c$ is not a norm, but its push-forward measure $\tilde{c}$ satisfies the norm condition. By using Lemma~\ref{prp:amball}, we have:
$$
\pushforward{f}{\mathcal{B}_{c}(\bP, \delta)} = \mathcal{B}_{c \circ (f^{-1} \times f^{-1})}(\pushforward{f}{\bP}, \delta).
$$
To determine $\delta$ under the assumption of a known function $g$, we map all the data to the exogenous space and estimate $\delta$ there.

\paragraph{Poof of Upper Bound.}
Since $\bP$ has compact support, there exists $\rho \in \bR^+$ such that $\supp{\bP} \subseteq [-\rho, \rho]^d$. To prove Eq.~\ref{eq:upper_estimate}, we adopt the following \textit{Concentration Inequality} from \citealp{boskos2020data}, which states:
\paragraph{Proposition.}
Consider a sequence $\{\X^i\}_{i =1}^N$ of i.i.d. $\bR^d$-valued random variables with a compactly supported law $\bP \in \PX$. For any $p \geq 1$, $N \geq 1$, and for any confidence level $1 - \varepsilon$ with $\varepsilon \in (0,1)$, there exists $\delta_N$ that it holds 
\begin{align*}
\bP^ \otimes (\bP \in\cB(\hPN,\delta_N)) \ge 1 - \varepsilon,
\end{align*}
where the function $\delta_N(\varepsilon, \rho)$is defined as:
\begin{equation}
\delta_N(\varepsilon, \rho) := 
\begin{cases}
\left( \frac{\ln(C \varepsilon^{-1})}{c} \right)^{\frac{1}{2p}} \frac{\rho}{N^{\frac{1}{2p}}}, & \text{if } p > d/2, \\
h^{-1}\left( \frac{\ln(C \varepsilon^{-1})}{cN} \right)^{\frac{1}{p}} \rho, & \text{if } p = d/2, \\
\left( \frac{\ln(C \varepsilon^{-1})}{c} \right)^{\frac{1}{d}} \frac{\rho}{N^{\frac{1}{d}}}, & \text{if } p < d/2,
\end{cases}
\end{equation}
where $h^{-1}$ being the inverse of $h(x) = \frac{x^2}{(\ln(2 + 1/x))^2}$ for $x > 0$, constants $C$ and $c$ depend only on $\bP$ and $d$. This result provides bounds on the Wasserstein distance between the empirical measure $\hPN$and the true measure $\bP$, ensuring that with high probability, this distance is within $\delta_N(\varepsilon, \rho)$.

To estimate the upper bound of the function $h^{-1}$ in Case (II), we need to examine the argument of the function carefully. By doing so, we observe that:  
\begin{align*}
    & y = \frac{x^2}{(\ln(2 + 1/x))^2} \implies y (\ln(2 + 1/x))^2 = x^2 \implies 
    \\&
    \sqrt{y} \cdot |\ln(2 + 1/x)| = |x| \implies \sqrt{y} \cdot \ln(2 + 1/x) = h(y) \implies 
    \\& 
    h(y)\geq \sqrt{y}
\end{align*}
By applying the estimation of $h^{-1}$ for case $p=d/2$ we have
$$
h^{-1}\left( \frac{\ln(C \varepsilon^{-1})}{cN} \right)^{\frac{1}{p}} \rho \leq 
\left( \frac{\ln(C \varepsilon^{-1})}{c} \right)^{\frac{1}{2p}} \frac{\rho}{N^{\frac{1}{2p}}}
$$
by summarizing three cases we can write
\begin{align*}
\delta_N(\varepsilon, \rho) = & \rho 
\left( \frac{\ln(C \varepsilon^{-1})}{c}N \right)^{-1/\max\{d,2p\}} \implies 
\\&
\delta(N,\varepsilon) \lesssim N^{-1/\max\{d,2p\}},
\end{align*}
\paragraph{Poof of Lower Bound.}
To prove Equation \ref{eq:lower_estimate}, we utilize Theorem 1 from the paper by \cite{weed2019sharp}. This theorem states:

\paragraph{Theorem.}
Let $p \in [1, \infty)$.
If $s > d^*(\mu)$, then
\begin{equation*}
\E[W(\mu, \hat \mu_n)] \lesssim n^{-1/s}\,.
\end{equation*}
If $t < d_*(\mu)$, then
\begin{equation*}
W(\mu, \hat \mu_n) \gtrsim n^{-1/t}\,.
\end{equation*}
where the $d^*$ and $d_*$ are upper and lower Wasserstein dimensions respectively.\qed

Assuming the support of $\bP$ is convex and compact, Example 12.7 in \cite{graf2000foundations} shows that $\supp{\bP}$ is a regular set. 

For $d$-dimensional sets in $\bR^d$, the $d$-dimensional Hausdorff measure $\mathcal{H}^d$ coincides with the Lebesgue measure $\lambda^d$ up to a constant factor. Specifically, there exists a constant $c_d > 0$ such that for any Lebesgue measurable set $A \subset \bR^d$,
$$ \mathcal{H}^d(A) = c_d \cdot \lambda^d(A). $$

Since $\bP$ is continuous concerning the Lebesgue measure, it is also absolutely continuous concerning the Hausdorff measure. Therefore, by using Proposition 8 from \cite{weed2019sharp}, we have:
\begin{equation*}
d_*(\mu) = d^*(\mu) = d\,.
\end{equation*}
Now, by applying the theorem, we complete the proof of Equation \ref{eq:lower_estimate}.
\qed

\begin{proposition}
\label{prp:independent}
Let $\cU = \cU_1 \otimes \cU_n$ be the space where $(\cU_i, d_i)$ is a metric space and $\cU$ is equipped with the metric
$c(u, u') = \left( \sum_{i=1}^n c_i(u_i,u'_i)^p \right)^{1/p}$.
Let $\bP = \bP_1 \odo \bP^N$ and $\bQ = \bQ_1 \otimes\dots \otimes \bQ_n$ be two probabilities in $\cP(\cU)$ that are constructed by tensor product of probabilities $\bP_i$ and $\bQ_i$ that belong to $\cP(\cU_i)$. Then we have:
$$ W_c(\bP, \bQ)^p = \sum_{i=1}^{n} W_{c_i}(\bP_i, \bQ_i)^p. $$
\end{proposition}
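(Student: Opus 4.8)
The plan is to establish the identity $W_c(\bP,\bQ)^p = \sum_{i=1}^n W_{c_i}(\bP_i,\bQ_i)^p$ by proving the two inequalities separately, with everything resting on the pointwise algebraic identity $c(u,u')^p = \sum_{i=1}^n c_i(u_i,u_i')^p$, which turns the transport cost into a sum of coordinatewise contributions.

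\emph{Upper bound.} First I would take, for each $i$, an optimal (or, to sidestep any existence question, an $\eta$-optimal) coupling $\pi_i\in\Pi(\bP_i,\bQ_i)$ for $W_{c_i}(\bP_i,\bQ_i)^p$ and form the product coupling $\pi = \bigotimes_{i=1}^n\pi_i$ on $\cU\times\cU$, after reindexing the $2n$ coordinates in the obvious way. Since the marginals of a product measure are the products of the marginals, $\pi$ has first marginal $\bigotimes_i\bP_i=\bP$ and second marginal $\bigotimes_i\bQ_i=\bQ$, so $\pi\in\Pi(\bP,\bQ)$. Then, by the cost identity together with Tonelli's theorem (the integrand is nonnegative and measurable) and the product structure of $\pi$, one gets $\int c^p\,\d\pi = \sum_i \int c_i^p\,\d\pi_i \le \sum_i W_{c_i}(\bP_i,\bQ_i)^p + n\eta$. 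Taking the infimum over couplings on the left and then letting $\eta\downarrow 0$ yields $W_c(\bP,\bQ)^p\le\sum_i W_{c_i}(\bP_i,\bQ_i)^p$.

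\emph{Lower bound.} Conversely, for an arbitrary $\pi\in\Pi(\bP,\bQ)$ I would push it forward under the projection $(u,u')\mapsto(u_i,u_i')$ to obtain a measure $\pi_i$ on $\cU_i\times\cU_i$. Because the $i$-th marginal of $\bP=\bigotimes_j\bP_j$ is $\bP_i$, and likewise for $\bQ$, each $\pi_i$ lies in $\Pi(\bP_i,\bQ_i)$; the change-of-variables formula and the cost identity then give $\int c^p\,\d\pi = \sum_i \int c_i^p\,\d\pi_i \ge \sum_i W_{c_i}(\bP_i,\bQ_i)^p$. Since $\pi$ was arbitrary, taking the infimum gives the reverse inequality, and combining the two bounds proves the claim.

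The computation is essentially bookkeeping with product measures; the only places requiring care are (a) checking that the product coupling and the coordinate projections genuinely carry the prescribed marginals — immediate from Fubini/Tonelli — and (b) keeping the reindexing of the $2n$ coordinates consistent between the two directions. I expect (b) to be the only genuine, though minor, obstacle: measurability of each $c_i$ is free since it is a metric, hence continuous, on $\cU_i\times\cU_i$, and finiteness is not an issue because the identity holds in $[0,\infty]$.
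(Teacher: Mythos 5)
Your upper bound is identical to the paper's: build the product coupling $\pi=\bigotimes_i\pi_i$ from (near-)optimal coordinatewise couplings, decompose the cost via $c^p=\sum_i c_i^p$, and take infima. The lower bound, however, is where you diverge. The paper proves it via Kantorovich duality: it writes each $W_{c_i}^p(\bP_i,\bQ_i)$ as a supremum over dual potentials $(f_i,g_i)$, lifts the sum of coordinatewise potentials $\phi=\sum_i g_i\circ\mathrm{pr}_i$, $\psi=\sum_i f_i\circ\mathrm{pr}_i$ to potentials on $\cU$, observes that the lifted pair is feasible for the dual of $W_c^p$, and concludes $\sum_i W_{c_i}^p\le W_c^p$. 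You instead take an arbitrary $\pi\in\Pi(\bP,\bQ)$, push it forward under the coordinate projections $(u,u')\mapsto(u_i,u_i')$ to obtain $\pi_i\in\Pi(\bP_i,\bQ_i)$ (this uses precisely the product structure of $\bP$ and $\bQ$ to identify the marginals of the pushforwards), and then expand $\int c^p\,\d\pi=\sum_i\int c_i^p\,\d\pi_i\ge\sum_i W_{c_i}^p(\bP_i,\bQ_i)$. Your primal argument is more elementary — it needs only Tonelli and the pushforward/change-of-variables formula, no duality machinery — and it is correct. The paper's dual argument is equivalent in strength here; it would matter only if one wanted a statement about dual potentials themselves, which the proposition does not require. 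Both routes are sound, so this is a genuine but equally valid alternative, and arguably the cleaner one.
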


\subsection{Proof of Proposition~\ref{prp:independent}.}
The $\bP$-Wasserstein distance $W_c$ between the probability measures $\bP$ and $\bQ$ on $\cU$ is defined as:
$$ W_c(\bP, \bQ) = \left( \inf_{\pi \in \Pi(\bP, \bQ)} \int_{\cU \times \cU} c(u, u')^p \, d\pi(u, u') \right)^{1/p}, $$
where $\Pi(\bP, \bQ)$ is the set of all couplings of $\bP$ and $\bQ$.
Similarly, for each $i$, the $\bP$-Wasserstein distance $W_{c_i}$ between the probability measures $\bP_i$ and $\bQ_i$ on $\cU_i$ is defined as:
$$ W_{c_i}(\bP_i, \bQ_i) = \left( \inf_{\pi_i \in \Pi(\bP_i, \bQ_i)} \int_{\cU_i \times \cU_i} c_i(u_i, u'_i)^p \, d\pi_i(u_i, u'_i) \right)^{1/p}, $$
where $\Pi(\bP_i, \bQ_i)$ is the set of all couplings of $\bP_i$ and $\bQ_i$.

Consider $\pi = \pi_1 \otimes \pi_2 \odo \pi_n$ to be a coupling in $\Pi(\bP, \bQ)$, where each $\pi_i \in \Pi(\bP_i, \bQ_i)$. The distance $c(u, u')$ can be written as:
$$ c(u, u') = \left( \sum_{i=1}^n c_i(u_i, u'_i)^p \right)^{1/p}. $$
The integral of the $p-$th power of the distancconcerningto the coupling $\pi$ is:
\begin{align*}
& \int_{\cU \times \cU} c(u, u')^p \, d\pi(u, u') = 
\\&
\int_{\cU \times \cU} \left( \sum_{i=1}^n c_i(u_i, u'_i)^p \right) \, d(\pi_1 \otimes \pi_2 \odo \pi_n)(u, u')=
\\&
\sum_{i=1}^n \int_{\cU_i \times \cU_i} c_i(u_i, u'_i)^p \, d\pi_i(u_i, u'_i).     
\end{align*}
In the last equation, the properties of tensor products and integrals were used.
Taking the infimum over all possible couplings $\pi_i \in \Pi(\bP_i, \bQ_i)$ gives:
\begin{align*}
& \inf_{\pi \in \Pi(\bP, \bQ)} \int_{\cU \times \cU} c(u, u')^p \, d\pi(u, u') \leq 
\\&
\sum_{i=1}^n \inf_{\pi_i \in \Pi(\bP_i, \bQ_i)} \int_{\cU_i \times \cU_i} c_i(u_i, u'_i)^p \, d\pi_i(u_i, u'_i).     
\end{align*}

The last inequality is valid because $\Pi(\bP_1,\bQ_1)\odo \Pi(\bP^N,\bQ_n) \subset\Pi(\bP,\bQ)$.
Hence, we have:
$$ W_c(\bP, \bQ)^p \leq \sum_{i=1}^n W_{c_i}(\bP_i, \bQ_i)^p.$$

Conversely, by Kantorovich duality for transport costs $W^p(\bQ_i, \bP_i)$, we have:
\begin{align*}
&\sum_{i=1}^n W_{c_i}^p(\bQ_i, \bP_i) = 
\\&
\sum_{i=1}^n \sup_{\substack{(f_i, g_i) \in L^1(\bQ_i) \times L^1(\bP_i) \\ g_i(x_i) - f_i(y_i) \leq d_i(x_i, y_i,)^p}} \left\{ \int_{\cU_i} g_i(x_i)  d\bP_i - \int_{\cU_i} f_i(y_i)  d\bQ_i \right\} 
\\&
= \sup_{\substack{(f_i, g_i) \in L^1(\bQ_i) \times L^1(\bP_i) \\ g_i(x_i) - f_i(y_i) \leq d_i(x_i, y_i,)^p}} \sum_{i=1}^n \left\{ \int_{\cU_i} g_i(x_i)  d\bP_i - \int_{\cU_i} f_i(y_i)  d\bQ_i \right\} \\
&= \sup_{\substack{(\psi, \phi) \in L^1(\bQ) \times L^1(\bP) \\ \psi = \sum_{i=1}^n f_i \circ \text{pr}_i,  \phi = \sum_{i=1}^n g_i \circ \text{pr}_i \\ g_i(x_i) - f_i(y_i) \leq d_i(x_i, y_i,)^p}} \left\{ \int_\cU \phi(x)  d\bP - \int_\cU \psi(y)  d\bQ \right\} \\
&\leq \sup_{\substack{(\psi, \phi) \in L^1(\bQ) \times L^1(\bP) \\ \phi(x) - \psi(y) \leq \sum_{i=1}^n d_i(x_i, y_i,)^p}} \left\{ \int_\cU \phi(x)  d\bP - \int_\cU \psi(y)  d\bQ \right\}.
\end{align*}

The second equality follows from the decoupled constraints on $f_i$ and $g_i$. The final equality holds because if $\phi = \sum_{i=1}^n g_i \circ \text{pr}_i$, then $\phi \in L^1(\bP)$ (similarly for $\psi$). Therefore:
\begin{align*}
    \int_\cU \phi(x)  d\bP(x) &= \sum_{i=1}^n \int_{\cU_i} g_i(x_i)  d\bP_i(x_i)
\end{align*}
(and similarly for $\int_\cU \psi(y)  d\bQ(y)$). Since $\sum_{i=1}^n d_i(x_i, y_i,)^p = d(x, y)^p$, we get:
\begin{align*}
    & \sum_{i=1}^n W_{c_i}^p(\bQ_i, \bP_i) \leq 
    \\&
    \sup_{\substack{(\psi, \phi) \in L^1(\bQ) \times L^1(\bP) \\ \phi(x) - \psi(y) \leq \rho(x, y)^p}} \left\{ \int_\cU \phi(x)  d\bP(x) - \int_\cU \psi(y)  d\bQ(y) \right\} =
    \\& W_c^p(\bQ, \bP).
\end{align*}
The last equation leads to quality and completes the proof.\qed

\begin{corollary}
\label{cor:prwas}
By assumption~\ref{asm:space} the below equation holds:
\begin{align}
   W_c(\hPT, \bP)^p = \sum_{i=1}^{n} W_{c_i}(\hPN_i, \bP_i)^p. 
\end{align}
\end{corollary}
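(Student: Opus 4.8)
The plan is to push the whole problem from the feature space $\cX$ into the exogenous space $\cU$ through the bijective reduced-form map $g$ (Assumption~\ref{asm:space}(i)), where both $\hPT$ and $\bP$ become product measures, and then to invoke Proposition~\ref{prp:independent}. Concretely, I would first establish the change-of-variables identity
\[
W_c(\hPT,\bP) \;=\; W_{\tc}\!\left(\gmpush\hPT,\ \gmpush\bP\right),
\qquad \tc = c\circ(g\times g),
\]
which is precisely the argument carried out in the proof of Lemma~\ref{lem:ambiguity}: since $g$ is a measurable bijection with measurable inverse (it equals $(I-f)^{-1}$ for the ANM with continuous $f$), the map $\pi\mapsto\gmgmpush\pi$ is a bijection from $\Pi(\hPT,\bP)$ onto $\Pi(\gmpush\hPT,\gmpush\bP)$, and the substitution $u=g^{-1}(x)$, $u'=g^{-1}(y)$ gives $\int c^p\,d\pi=\int\tc^p\,d(\gmgmpush\pi)$; taking infima on both sides yields the identity. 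Equivalently, one may apply Lemma~\ref{lem:ambiguity} directly to $\WFp(\hPT,\bP)$, after observing that between product measures the optimal classical coupling can be taken to be a product and is therefore $\cF$-compatible, so $W_c(\hPT,\bP)=\WFp(\hPT,\bP)=W^{\Fi}_{\tc}(\gmpush\hPT,\gmpush\bP)$.

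Next I would identify the two pushforward measures. Since $g\circ g^{-1}=\Id$, we get $\gmpush\hPT=\gmpush(\gpush\hPN_\U)=\hPN_\U=\BOIN\bP^N_{\U_i}$, the product of the empirical measures of the exogenous coordinates constructed from $u^j=g^{-1}(x^j)$. Because $\bP$ arises from the SCM $\cM$ with mutually independent exogenous variables (the standing data model, Assumption~\ref{asm:space}(ii)), $\bP\in\PFX$, so by Definition~\ref{def:Fcomp} $\gmpush\bP=\BOIN\bP_i$ with $\bP_i=\marg_i(\gmpush\bP)\in\cP(\cU_i)$. Finally, Assumption~\ref{asm:space}(ii)--(iii) says each $\cU_i$ carries the norm $\tc_i$ and that $\tc(u,u')=\big(\sum_{i=1}^n\tc_i(u_i,u'_i)^p\big)^{1/p}$, i.e. $\tc$ is exactly the $p$-product metric assembled from the $\tc_i$.

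With these identifications, Proposition~\ref{prp:independent} applies verbatim to $\bigotimes_i\bP^N_{\U_i}$ and $\bigotimes_i\bP_i$ with the per-coordinate costs $\tc_i$, giving
\[
W_{\tc}\!\left(\BOIN\bP^N_{\U_i},\ \BOIN\bP_i\right)^p \;=\; \sum_{i=1}^n W_{\tc_i}\!\left(\bP^N_{\U_i},\bP_i\right)^p .
\]
Combining this with the change-of-variables identity, and writing $\hPN_i:=\bP^N_{\U_i}$ and $c_i:=\tc_i$, yields the claim. The step requiring the most care is the change of variables: one must check measurability of $g$ and $g^{-1}$ so that the pushforwards and the coupling correspondence are well defined, and that the cost equality $\int c^p\,d\pi=\int\tc^p\,d(\gmgmpush\pi)$ survives the substitution — but this is routine and already done in Lemma~\ref{lem:ambiguity}. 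Verifying the hypotheses of Proposition~\ref{prp:independent} (product structure of both measures, $\tc_i$ a metric, product form of $\tc$) is then immediate from Assumption~\ref{asm:space} and the construction of $\hPT$.
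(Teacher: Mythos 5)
Your proof is correct and matches the paper's argument: both push the problem to the exogenous space via the bijective reduced-form map $g$ and then apply Proposition~\ref{prp:independent} to the resulting product measures $\hPN_\U=\BOIN\bP^N_{\U_i}$ and $\gmpush\bP=\BOIN\bP_i$. You are in fact somewhat more careful than the paper's terse proof (which writes a spurious ``$\leq$'' in its last line and drops the $p$-th powers): you correctly distinguish the classical change-of-variables identity $W_c(\hPT,\bP)=W_{\tc}(\gmpush\hPT,\gmpush\bP)$, which holds for plain $W_c$ because $g$ is a measurable bijection, from the statement of Lemma~\ref{lem:ambiguity}, which is phrased for $W^{\cF}_c$, and you make explicit the identification $\gmpush\hPT=\gmpush(\gs\hPN_\U)=\hPN_\U$ that the paper leaves implicit.
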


\subsection{Proof of Corollary~\ref{cor:prwas}.}

By assumption and using the lemma \ref{lem:ambiguity}  and proposition~\ref{prp:independent}, we can write
\begin{align*}
    W_{c}(\hPT,\bP) = W_{\tc}(\hPN_{\U},\bP_{\U}) = & \sum_{i=1}^n W_{\tc_i}(\hPN_{\U_i},\bP_{\U_i}) \leq 
    \\&
    \sum_{i=1}^n W_{c_i}(\hPN_i,\bP_i).
\end{align*}\qed

\subsection{Proof of Proposition~\ref{prp:concentration2}.}
By lemma~\ref{lem:ambiguity} it can be written:
\begin{align*}
W^\cF_{c}(\bP,\bP^N) = W^\Fi_{c\circ (g\times g)}(\tP,\tP^N).
\end{align*}
where $\tP = \gmpush\bP$ and $\tQ = \gmpush\bQ$. By using proposition~\ref{prp:independent} we can write:
\begin{align*}
    W^\Fi_{c\circ (g\times g)}(\tP,\tP^N)^p = \sum_{i=1}^n W_{\tc_i}(\tP_i, \tP^N_i)^p.
\end{align*}

Let $\epsilon$ be the given confidence level. By Proposition~\ref{prp:concentration}, with probability $1 - \frac{\epsilon}{n}$, the distance 
\begin{equation}
\label{eq:event}
W_{\tc_i}(\tP_i, \tP^N_i) \leq a_i \left(N \ln(A_i n \epsilon^{-1}) \right)^{-1/\max\{d_i, 2p\}},
\end{equation}
holds for all $i \in [n]$, where $a_i$ and $A_i$ depend only on $c_i$ and $\bP$.

Define $a^* = \max_{i=1}^n a_i$ and $A^* = \max_{i=1}^n A_i$. We then define the event 
\begin{align*}
E_i = \left\{W_{c_i}(\tP_i, \tP^N_i) > c^* \left(N \ln(C^* n \epsilon^{-1}) \right)^{-1/\max\{d^*, 2p\}}\right\}.
\end{align*}

By equation \eqref{eq:event}, the probability of the event $E_i$ is less than $\frac{\epsilon}{n}$. Therefore, the probability of the complementary event $\bigcup_{i=1}^n E_i$ is at most $\epsilon$. Consequently, by the union bound, we have
\begin{align*}
\bP^\otimes\left(\bigcup_{i=1}^n E_i\right) \leq \sum_{i=1}^n \bP^\otimes(E_i) \leq \epsilon.
\end{align*}

Thus, with probability at least $1 - \epsilon$, we have 
\begin{align*}
W_{\tc}(\tP,\tP^N)^p &= \sum_{i=1}^n W_{\tc_i}(\tP_i,\tP_i^N)^p \\
&\leq n a^* \left(N \ln(A^* n \epsilon^{-1}) \right)^{-p/\max\{d^*, 2p\}}.
\end{align*}

This implies that with probability at least $1 - \epsilon$,
\begin{align*}
\bP^\otimes (\bP \in \cB(\hPT, \delta)) \geq 1 - \epsilon,
\end{align*}
where $\delta = n a^* \left(N \ln(A^* n \epsilon^{-1}) \right)^{-1/\max\{d^*, 2p\}}$. This completes the proof.\qed

\subsection{Proof of Corollary~\ref{cor:breaks}.}

Since $d_i = 1$, by equation \eqref{eq:breack}, we have
\begin{align*}
\delta(N, \varepsilon) \leq d c^* \left(N \ln(C^* d \varepsilon^{-1}) \right)^{-1/2p}.
\end{align*}
Therefore, 
\begin{align*}
W_c(\hPT, \bP) \lesssim N^{-1/2p},
\end{align*}
which is independent of the dimension $d$.\qed

\end{document}